\theoremstyle{plain}
\newtheorem{theorem}{Theorem}
\newtheorem{proposition}[theorem]{Proposition}
\newtheorem{lemma}[theorem]{Lemma}
\theoremstyle{definition}
\newtheorem{definition}[theorem]{Definition}
\newtheorem{assumption}[theorem]{Assumption}
\theoremstyle{remark}
\newcommand{\mypar}[1]{\noindent \textbf{#1} \hspace{1mm}}
\def\algo{ATAC\xspace}
\def\algofull{Adversarially Trained Actor Critic\xspace}
\newcommand{\tocite}[1]{\textcolor{red}{CITE\xspace#1}\xspace}
\icmltitlerunning{Adversarially Trained Actor Critic for Offline Reinforcement Learning}
\begin{document}

\twocolumn[

\icmltitle{Adversarially Trained Actor Critic for Offline Reinforcement Learning} 



\icmlsetsymbol{equal}{*}

\begin{icmlauthorlist}
\icmlauthor{Ching-An Cheng}{equal,MSR}
\icmlauthor{Tengyang Xie}{equal,UIUC}
\icmlauthor{Nan Jiang}{UIUC}
\icmlauthor{Alekh Agarwal}{Google}
\end{icmlauthorlist}

\icmlaffiliation{MSR}{Microsoft Research}
\icmlaffiliation{UIUC}{University of Illinois at Urbana-Champaign}
\icmlaffiliation{Google}{Google Research}

\icmlcorrespondingauthor{Ching-An Cheng}{chinganc@microsoft.com}

\icmlkeywords{Machine Learning, ICML}

\vskip 0.3in
]



\printAffiliationsAndNotice{\icmlEqualContribution} 

\begin{abstract}

\addtolength{\abovedisplayskip}{-1cm}
\addtolength{\belowdisplayskip}{-1cm}


%
We propose \algofull (\algo), a new model-free algorithm for offline reinforcement learning (RL) under insufficient data coverage, based on the concept of {\em relative pessimism}.
\algo is designed as a two-player Stackelberg game: A policy actor competes against an adversarially trained value critic, who finds data-consistent scenarios where the actor is inferior to the data-collection behavior policy.
We prove that, when the actor attains no regret in the two-player game, running \algo produces a policy that provably \emph{1)} outperforms the behavior policy over a wide range of hyperparameters that control the degree of pessimism, and \emph{2)} competes with the best policy covered by data with appropriately chosen hyperparameters.
Compared with existing works, notably our framework offers both theoretical guarantees for general 
function approximation and a  deep RL implementation scalable to complex environments and large datasets. 
In the D4RL benchmark,   \algo consistently outperforms state-of-the-art offline RL algorithms on a range of continuous control tasks.

\end{abstract}

\section{Introduction}
\label{sec:introduction}

Online reinforcement learning (RL) has been successfully applied in many simulation domains~\citep{mnih2015human, silver2016mastering}, demonstrating the promise of solving sequential decision making problems by direct exploratory interactions. However, collecting diverse interaction data is prohibitively expensive or infeasible in many real-world applications such as robotics, healthcare, and conversational agents. Due to these problems' risk-sensitive nature, data can only be collected by behavior policies that satisfy certain baseline performance or safety requirements.

The restriction on real-world data collection calls for \emph{offline} RL algorithms that can reliably learn with historical experiences that potentially have limited coverage over the state-action space.
Ideally, an offline RL algorithm should 
\begin{enumerate*}[label=\emph{\arabic*)}]
    \item always improve upon the behavior policies that collected the data, and
    \item learn from large datasets to outperform any other policy whose state-action distribution is well covered by the data.
\end{enumerate*}
The first condition is known as {safe policy improvement}~\citep{fujimoto2019off,laroche2019safe}, 
and the second is a form of learning consistency, that the algorithm makes the best use of the available data.

In particular, it is desirable that the algorithm can maintain safe policy improvement across \emph{large} and \emph{anchored} hyperparameter choices, a property we call \emph{robust policy improvement}.
Since offline hyperparameter selection is a difficult open question~\citep{paine2020hyperparameter, zhang2021towards}, robust policy improvement ensures the learned policy is always no worse than the baseline behavior policies and therefore can be reliably deployed in risk-sensitive decision making applications. For example, in healthcare, it is only ethical to deploy new treatment policies when we confidently know they are no worse than existing ones.
In addition, robust policy improvement makes tuning hyperparameters using additional online interactions possible. While online interactions are expensive, they are not completely prohibited in many application scenarios, especially when the tested policies are no worse than the behavior policy that collected the data in the first place. Therefore, if the algorithm has robust policy improvement, then its performance can potentially be more directly tuned. 

\begin{figure*}[t]
	\centering
	\begin{subfigure}{0.24\textwidth}
		\includegraphics[width=\textwidth]{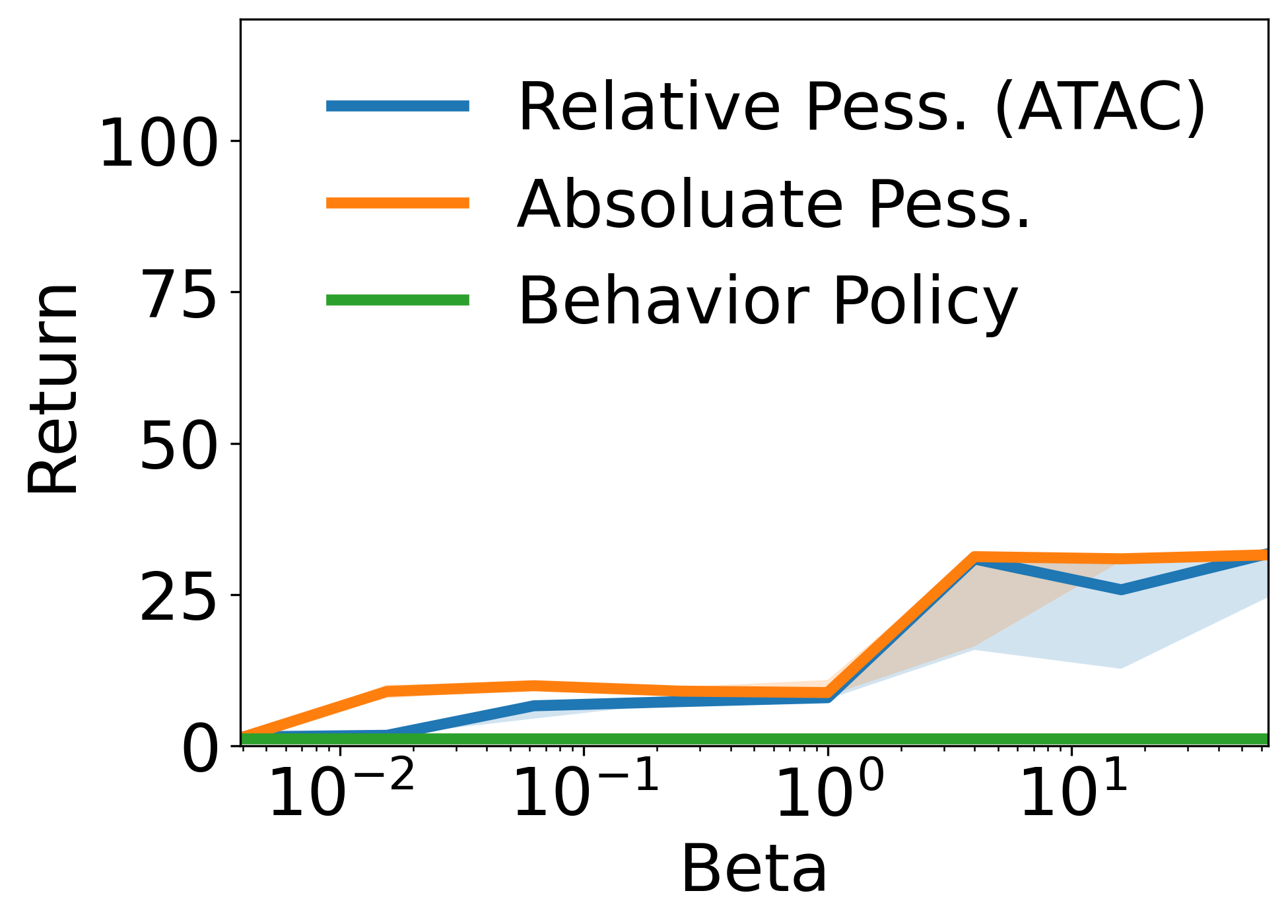}
		\caption{hopper-random}
		\label{fig:hopper-random robust PI}
	\end{subfigure}
	\begin{subfigure}{0.24\textwidth}
		\includegraphics[width=\textwidth]{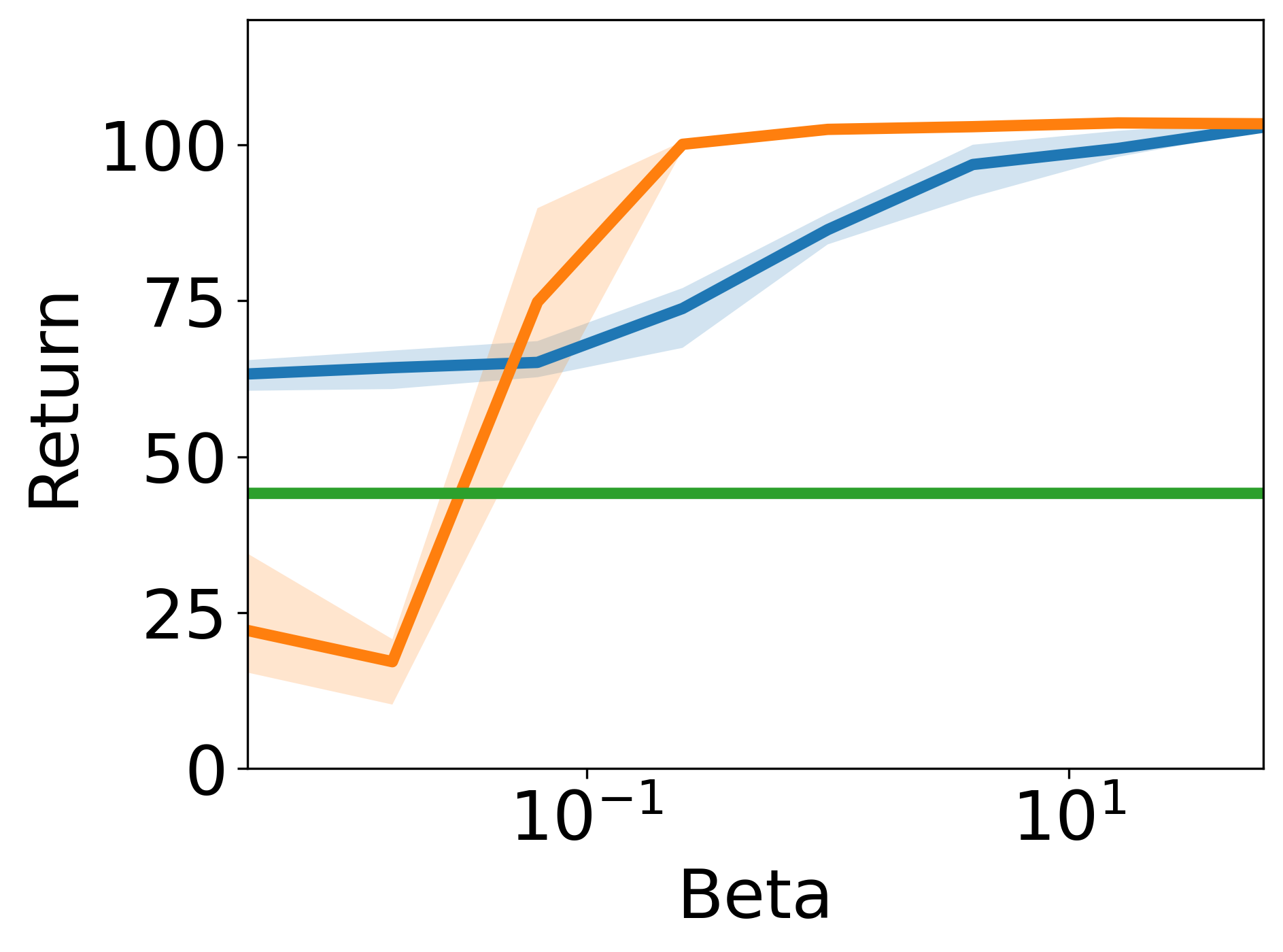}
		\caption{hopper-medium}
		\label{fig:hopper-medium robust PI}
	\end{subfigure}
	\begin{subfigure}{0.24\textwidth}
		\includegraphics[width=\textwidth]{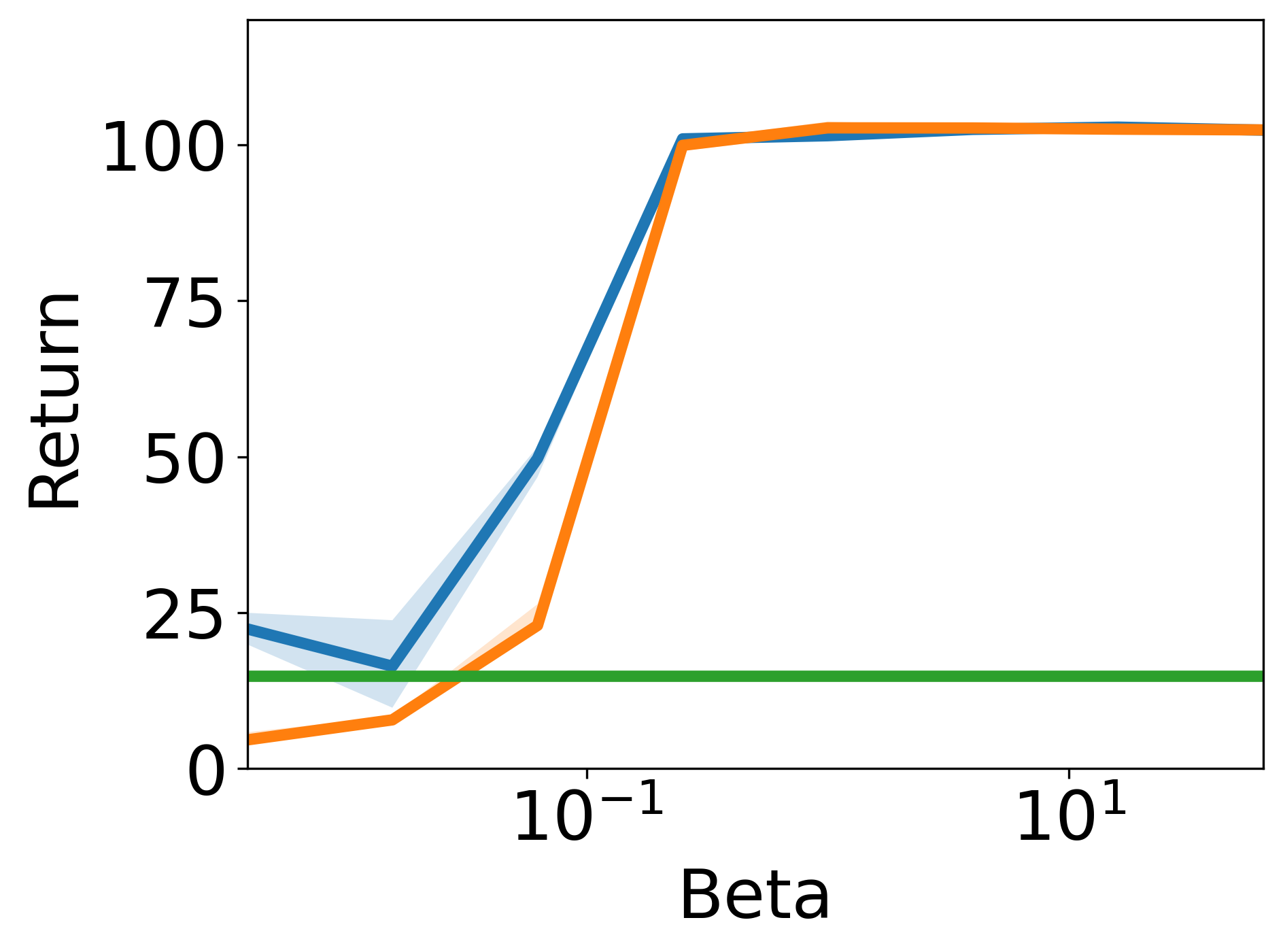}
		\caption{hopper-medium-replay}
		\label{fig:hopper-medium-replay robust PI}
	\end{subfigure}
	\begin{subfigure}{0.24\textwidth}
		\includegraphics[width=\textwidth]{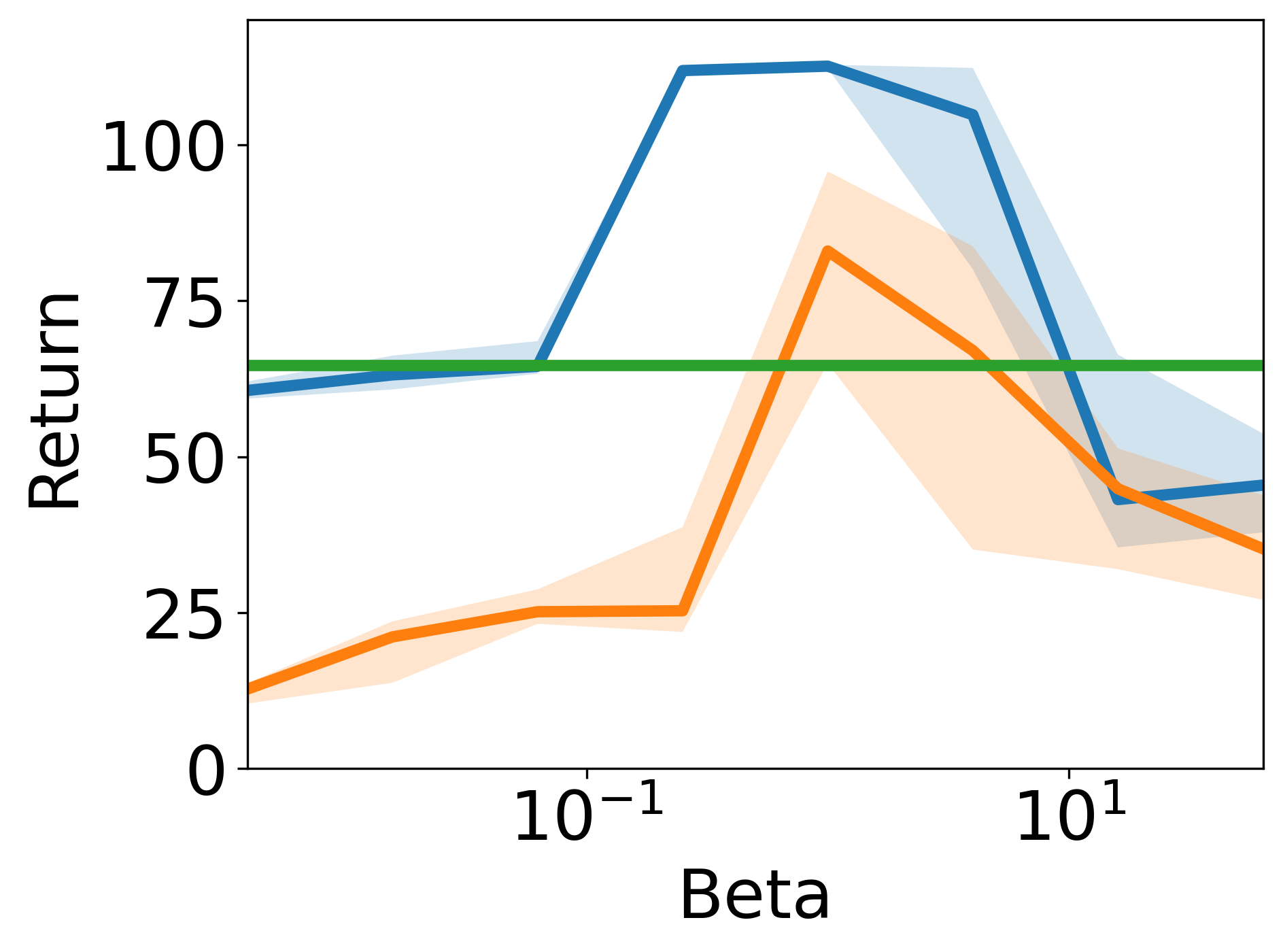}
		\caption{hopper-medium-expert}
		\label{fig:hopper-medium-expert robust PI}
	\end{subfigure}

\caption{\small{Robust Policy Improvement. \algo based on relative pessimism improves from behavior policies over a wide range of hyperparameters ($\beta$) that controls the degree of pessimism, and has a known safe policy improvement anchor point at $\beta=0$. Thus, we can gradually increase $\beta$ from zero to online tune \algo, while not violating the performance baseline of the behavior policy.
By contrast, offline RL based on absolute pessimism~\citep[e.g.,][]{xie2021bellman} has safe policy improvement only for well-tuned hyperparameters.
The differences are most stark in panel (d) where \algo outperforms behavior for $\beta$ ranging over $3$ orders of magnitude (0.01 to 10), compared with the narrow band of choices for absolute pessimism.
The plots show the $25^{th}$, $50^{th}$, $75^{th}$ percentiles over 10 random seeds.
}}
\label{fig:robust PI}
\vspace{-2mm}
\end{figure*}


However, few existing works 
possess all the desiderata above. Regarding consistency guarantees,
deep offline RL algorithms~\citep[e.g.][]{kumar2020conservative,kostrikov2021offline}  show strong empirical performance, but are analyzed theoretically in highly simplified tabular cases. 
Theoretical works~\citep{liu2020provably,jin2020pessimism,xie2021bellman,uehara2021representation} provide systematic analyses of learning correctness and consistency, but most of them have little empirical evaluation~\citep{liu2020provably} or consider only the linear case~\citep{jin2020pessimism,zanette2021provable}.

Turning to the robust policy improvement property, this is relatively rare in state-of-the-art offline RL literature.
Behavior regularization approaches~\citep{fujimoto2019off,kumar2019stabilizing,wu2019behavior,laroche2019safe,fujimoto2021minimalist} are scalable and show robustness for a broad range of hyperparameters that controls the pessimism degree; however, they are often more conservative, which ultimately limits the policy performance, as their robustness is achieved by a proximity regularization/constraint that ignores the reward information.
Some pessimistic algorithms~\citep{liu2020provably,xie2021bellman} have safe policy improvement guarantees but only for carefully selected hyperparameters. 
For a more detailed discussion of related works, see \cref{sec:related}.


In this paper, we propose a new model-free offline RL algorithm, \algofull (\algo). Compared with existing works, \algo
\begin{enumerate*}[label=\emph{\arabic*)}]
\item enjoys strong theoretical guarantees on robust policy improvement over hyperparameter that controls the pessimism degree
and learning consistency for
nonlinear function approximators,
and \item has a scalable implementation that can learn with deep neural networks and large datasets.
\end{enumerate*}

\algo is designed based on the concept of relative pessimism, leading to  a two-player Stackelberg game formulation of offline RL.
We treat the actor policy as the leader that aims to perform well under a follower critic, and adversarially train the critic to find Bellman-consistent~\citep{xie2021bellman} scenarios where the actor is inferior to the 
behavior policy.
Under standard function-approximation assumptions, we prove that, when the actor attains no regret in the two-player game, \algo produces a policy that provably outperforms the behavior policies for a large anchored range of hyperparameter choices and is optimal when the offline data covers scenarios visited by an optimal policy.
%

We also provide a practical implementation of \algo based on stochastic first-order two-timescale optimization.
In particular, we propose a new Bellman error surrogate, called double Q residual algorithm (DQRA) loss, which is inspired by a related idea of~\citet{wang2021convergent} and combines the double Q heuristic~\citep{fujimoto2018addressing} and the residual algorithm~\citep{baird1995residual} to improve the optimization stability of offline RL. 
We test \algo on the D4RL 
benchmark~\citep{fu2020d4rl}, and 
\algo consistently outperforms state-of-the-art baselines 
across multiple continuous-control problems. These empirical results also validate the robust policy improvement property of \algo (Fig.~\ref{fig:robust PI}), which makes \algo suitable for risk sensitive applications.
The code is available at \url{https://github.com/microsoft/ATAC}.

\section{Preliminaries}

\mypar{Markov Decision Process}
We consider RL in a Markov Decision Process (MDP) $\Mcal$, defined by $(\Scal, \Acal, \Pcal, R, \gamma)$. $\Scal$ is the state space, and $\Acal$ is the action space. $\Pcal: \Scal \times \Acal \to \Delta(\Scal)$ is the transition function, where $\Delta(\cdot)$ denotes the probability simplex, $R: \Scal \times \Acal \to [0, \Rmax]$ is the reward function, and $\gamma \in [0,1)$ is the discount factor. Without loss of generality, we assume that the initial state of the MDP, $s_0$, is deterministic.
We use $\pi: \Scal \to \Delta(\Acal)$ to denote the learner's decision-making policy, and $J(\pi) \coloneqq \E[\sum_{t = 0}^{\infty} \gamma^t r_t| a_t \sim \pi(\cdot|s_t)]$ to denote the expected discounted return of $\pi$, with $r_t = R(s_t, a_t)$. The goal of RL is to find a policy that maximizes  $J(\cdot)$. 
For any policy $\pi$, we define  the $Q$-value function as $Q^\pi(s,a) \coloneqq \E[\sum_{t = 0}^{\infty} \gamma^t r_t| (s_0, a_0) = (s,a), a_t \sim \pi(\cdot|s_t)]$. By the boundedness of rewards, we have $0 \leq Q^\pi \leq \tfrac{\Rmax}{1-\gamma} =: \Vmax$. For a policy $\pi$, the Bellman operator $\Tcal^\pi$  
is defined as
$\left(\Tcal^\pi f\right) (s,a) \coloneqq R(s,a) + \gamma \E_{s'|s,a} \left[ f(s', \pi)\right]$,
where $f(s', \pi) \coloneqq \sum_a \pi(a'|s') f(s',a')$.
In addition, we use $d^\pi$ to denote the normalized and discounted state-action occupancy measure of the policy $\pi$. That is, $d^\pi(s,a) \coloneqq (1 - \gamma) \E[\sum_{t = 0}^{\infty} \gamma^t \1(s_t = s, a_t = a)| a_t \sim \pi(\cdot|s_t)]$. We also use $\E_\pi$ to denote expectations with respect to $d^\pi$.

\mypar{Offline RL}
The goal of offline RL is to compute good policies based pre-collected offline data without environment interaction.
We assume the offline data $\Dcal$ consists of $N$ i.i.d.~$(s,a,r,s')$ tuples, where $(s,a) \sim \mu$, $r = R(s,a)$, $s' \sim \Pcal(\cdot|s,a)$. We also assume 
$\mu$ is the discounted state-action occupancy of some \emph{behavior policy}, which we also denote as $\mu$ with abuse of notation (i.e., $\mu = d^\mu$). We will use $a\sim\mu(\cdot | s)$ to denote actions drawn from this policy, and also $(s,a,s')\sim \mu$ to denote $(s,a)\sim \mu$ and $s'\sim P(\cdot | s,a)$.

\mypar{Function Approximation} 
We assume access to a value-function class $\Fcal \subseteq (\Scal \times \Acal \to [0, \Vmax])$ to model the $Q$-functions of policies, and we search for good policies from a policy class $\Pi \subseteq (\Scal \to \Delta(\Acal))$. The combination of $\Fcal$ and $\Pi$ is commonly used in the literature of actor-critic or policy-based approaches~\citep[see, e.g.,][]{bertsekas1995neuro, konda2000actor,haarnoja2018soft}. We now recall some standard assumptions on the expressivity of the value function class $\Fcal$ which are needed for actor-critic methods, particularly in an offline setting.

\begin{assumption}[Approximate Realizability]
\label{asm:relz2}
For any policy $\pi\in\Pi$, $\min_{f \in \Fcal}\max_{\text{admissible }\nu} \left\|f - \Tcal^{\pi} f\right\|_{2,\nu}^2 \leq \varepsilon_\Fcal$,
\label{ass:realizable}
where admissibilty $\nu$ means $ \nu \in \{ d^{\pi'} : \forall \pi \in \Pi\}$.
\end{assumption}
Assumption~\ref{ass:realizable} is a weaker form of stating $Q^\pi\in\Fcal$, $\forall \pi'\in\Pi$.
This realizability assumption is the same as the one made by~\citet{xie2021bellman} and
is weaker than assuming a small error in $\ell_\infty$ norm~\citep{antos2008learning}.

\begin{assumption}[Approximate Completeness]
\label{asm:completeness}
For any $\pi\in\Pi$ and $f\in\Fcal$, we have $\min_{g\in\Fcal}\|g - \Tcal^\pi f\|_{2,\mu}^2 \leq \varepsilon_{\Fcal,\Fcal}$.
\label{ass:complete}
\end{assumption}
Here $\| \cdot \|_{2,\mu} \coloneqq \sqrt{\E_\mu[(\cdot)^2]}$ denotes the $\mu$-weighted 2-norm.\footnote{We will use the notation $\|f\|_{2, \Dcal}$ for an empirical distribution $d$ of a dataset $\Dcal$, where $\| f \|_{2,\Dcal} = \sqrt{\frac{1}{|\Dcal|} \sum_{(s,a) \in \Dcal} f(s,a)^2}$.}
Again Assumption~\ref{ass:complete} weakens the typical completeness assumption of $\Tcal^\pi f \in \Fcal$ for all $f\in\Fcal$, which is commonly assumed in the analyses of policy optimization methods with TD-style value function learning. We require the approximation to be good \emph{only} on the data distribution.

\section{A Game Theoretic Formulation of Offline RL with Robust Policy Improvement} \label{sec:game}


In this section, we introduce the idea of relative pessimism and propose a new Stackelberg game~\citep{von2010market} formulation of offline RL, which is the foundation of our algorithm \algo.
For clarity, in this section we 
discuss solution concepts at the population level instead of using samples.
This simplification is for highlighting the uncertainty in decision making due to missing coverage in the data distribution $\mu$ that offline RL faces. We will consider the effects of finite sample approximation when we introduce \algo in \cref{sec:ATAC}.

\subsection{A Stackelberg Game Formulation of Offline RL}

\mypar{Stackelberg game} A Stackelberg game is a sequential game between a leader and a follower. It can be stated as a bilevel optimization problem,
    $\min_x g(x,y_x),  \textrm{ s.t. } y_x \in \argmin_y h(x,y)$
where the leader and the follower are the variables $x$ and $y$, respectively, and $g,h$ are their objectives.
The concept of Stackelberg game has its origins in the economics literature and has been recently applied to design \emph{online} model-based RL~\citep{rajeswaran2020game} and \emph{online} actor critic algorithms~\citep{zheng2021stackelberg}. The use of this formalism in an offline setting here is novel to our knowledge. Stackelberg games also generalize previous minimax formulations~\citep{xie2021bellman}, which correspond to a two-player zero-sum game with $h = -g$.

\addtolength{\abovedisplayskip}{-0.2cm}
\addtolength{\belowdisplayskip}{-0.2cm}

\mypar{Offline RL as a Stackelberg game} Inspired by the minimax offline RL concept by~\citet{xie2021bellman} and the pessimistic policy evaluation procedure by~\citet{kumar2020conservative}, we formulate the Stackelberg game for offline RL as a bilevel optimization problem, with the learner policy $\pi\in\Pi$ as the leader and a critic $f\in\Fcal$ as the follower: \vspace*{.1cm}
\begin{align} \label{eq:game_formuation}
  \widehat{\pi}^*  & \in\argmax_{\pi\in\Pi}  \Lcal_\mu(\pi, f^\pi) \\
 \textstyle    \textrm{s.t.} \quad f^\pi  & \in\argmin_{f\in\Fcal}  \Lcal_\mu(\pi, f) + \beta \Ecal_\mu(\pi, f) \nonumber
\end{align}
where $\beta\geq 0 $ is a hyperparamter, and 
\begin{align} 
     \Lcal_\mu(\pi, f) &\coloneqq \E_{\mu}[ f(s,\pi) - f(s,a) ]  \label{eq:difference}  \\
     \Ecal_\mu(\pi, f) &\coloneqq  \E_{\mu}[ ((f - \Tcal^\pi f) (s,a))^2 ]. \label{eq:bellman error}  
\end{align}
%
Intuitively, $\widehat\pi^*$ attempts to maximize the value predicted by $f^\pi$, and
$f^\pi$ performs a \emph{relatively} pessimistic policy evaluation of a candidate $\pi$ with respect to the behavior policy $\mu$ (we will show $\Lcal_\mu(\pi, f)$ aims to estimate $(1-\gamma)(J(\pi)- J(\mu))$). In the definition of $f^\pi$, $\Ecal_\mu(\pi, f)$ ensures $f^\pi$'s (approximate) Bellman-consistency on data and $\Lcal_\mu(\pi, f)$ promotes pessimism with $\beta$ being the hyperparameter that controls their relative contributions.
In the rest of this section, we discuss how the relative pessimistic formulation in \Eqref{eq:game_formuation} leads to the desired property of \textit{robust} policy improvement, and compare it to the solution concepts used in the previous offline RL works. 

\subsection{Relative Pessimism and Robust Policy Improvement}
\label{sec:safe_PI}

Our design of the Stackelberg game in \Eqref{eq:game_formuation} is motivated by the benefits of robust policy improvement in  $\beta$ given by relative pessimism. 
As discussed in the introduction, such property is particularly valuable to applying offline RL in risk-sensitive applications, because it guarantees the learned policy is no worse than the behavior policy regardless of the hyperparameter choice and allows potentially direct online performance tuning.
Note that prior works~\citep[e.g.,][]{liu2020provably,xie2021bellman} have relied on well-chosen hyperparameters to show improvement upon the behavior policy (i.e., safe policy improvement). We adopt the term \emph{robust} policy improvement here to distinguish from those weaker guarantees. While there are works~\citep{laroche2019safe,fujimoto2019off,kumar2019stabilizing} that provide robust policy improvement in tabular problems, but their heuristic extensions to function approximation lose this guarantee.

Below we show that the solution $\widehat{\pi}^*$ in \Eqref{eq:game_formuation} is no worse than the behavior policy for \emph{any} $\beta\geq0$ under Assumption~\ref{ass:realizable}. This property is because in \Eqref{eq:game_formuation} the actor is trying to optimize a lower bound on the relative performance $(1-\gamma) (J(\pi)-J(\mu))$ for $\pi$, and this lower bound is tight (exactly zero) at the behavior policy $\mu$, for any $\beta\geq0$.

\begin{proposition}\label{prop:RPI}
If Assumption~\ref{ass:realizable} holds with $\varepsilon_{\Fcal} = 0$ and $\mu\in\Pi$, then $\Lcal_\mu(\pi, f^\pi) \leq (1-\gamma) (J(\pi) - J(\mu))$ $\forall\pi\in\Pi$, for \emph{any} $\beta\geq 0$. This implies $J(\widehat{\pi}^*) \geq J(\mu)$.
\end{proposition}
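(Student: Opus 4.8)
The plan is to run the standard pessimism argument: exhibit the true value function $Q^\pi$ as a feasible comparator for the critic's inner minimization, and use two facts about it — that $Q^\pi$ has zero Bellman error (so the penalty $\beta\,\Ecal_\mu$ cannot penalize it), and that $\Lcal_\mu(\pi, Q^\pi)$ equals the scaled performance gap via the performance difference lemma. Combining these with the optimality of $f^\pi$ shows the actor's objective is a lower bound on $(1-\gamma)(J(\pi)-J(\mu))$, and the implication then follows because this bound is exactly zero at $\pi=\mu$.

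First I would establish the identity $\Lcal_\mu(\pi, Q^\pi) = (1-\gamma)(J(\pi) - J(\mu))$. Since $Q^\pi(s,\pi)=V^\pi(s)$ and, under $d^\mu$, the sampled action is distributed as $\mu(\cdot\mid s)$, we have $\Lcal_\mu(\pi, Q^\pi) = \E_\mu[V^\pi(s) - Q^\pi(s,a)] = -\E_\mu[A^\pi(s,a)]$ with $A^\pi = Q^\pi - V^\pi$. The performance difference lemma applied to the pair $(\mu,\pi)$ gives $J(\mu) - J(\pi) = \tfrac{1}{1-\gamma}\E_\mu[A^\pi(s,a)]$, and rearranging yields the claimed identity. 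Next, under Assumption~\ref{ass:realizable} with $\varepsilon_\Fcal=0$ (which quantifies over $\pi\in\Pi$ and, since $\mu\in\Pi$, over the admissible distribution $\mu$), the critic class $\Fcal$ contains $Q^\pi$; moreover $Q^\pi$ is the global fixed point of $\Tcal^\pi$, so $\Ecal_\mu(\pi, Q^\pi) = \|Q^\pi - \Tcal^\pi Q^\pi\|_{2,\mu}^2 = 0$. Feeding $Q^\pi$ into the inner objective and invoking optimality of $f^\pi$, together with $\beta\geq 0$ and $\Ecal_\mu(\pi, f^\pi)\geq 0$, gives
\begin{align*}
\Lcal_\mu(\pi, f^\pi)
&\le \Lcal_\mu(\pi, f^\pi) + \beta\,\Ecal_\mu(\pi, f^\pi)\\
&\le \Lcal_\mu(\pi, Q^\pi) + \beta\,\Ecal_\mu(\pi, Q^\pi)
= \Lcal_\mu(\pi, Q^\pi)
= (1-\gamma)\big(J(\pi) - J(\mu)\big),
\end{align*}
which proves the first claim for every $\pi\in\Pi$ and every $\beta\geq 0$.

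For the implication $J(\widehat\pi^*)\geq J(\mu)$, I would exploit that the bound is tight at the behavior policy. Because under $d^\mu$ the sampled action has the same conditional law as the policy $\mu$, we get $\Lcal_\mu(\mu, f) = \E_\mu[f(s,\mu) - f(s,a)] = 0$ for \emph{every} $f$, hence $\Lcal_\mu(\mu, f^\mu) = 0$. Since $\mu\in\Pi$ and $\widehat\pi^*$ maximizes $\pi\mapsto\Lcal_\mu(\pi, f^\pi)$ over $\Pi$, this yields $\Lcal_\mu(\widehat\pi^*, f^{\widehat\pi^*}) \ge \Lcal_\mu(\mu, f^\mu) = 0$. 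Applying the first claim at $\pi=\widehat\pi^*$ then gives $0 \le \Lcal_\mu(\widehat\pi^*, f^{\widehat\pi^*}) \le (1-\gamma)\big(J(\widehat\pi^*) - J(\mu)\big)$, and dividing by $1-\gamma>0$ finishes the argument.

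The engine of the proof is routine (performance difference lemma plus the triviality $\Lcal_\mu(\mu,\cdot)\equiv 0$), so the step requiring the most care is turning Assumption~\ref{ass:realizable} with $\varepsilon_\Fcal=0$ into a genuinely usable comparator in $\Fcal$ for which both $\Ecal_\mu=0$ and the $\Lcal_\mu$-identity hold at once. It is worth being explicit that what neutralizes the $\beta$-penalty is the zero Bellman-error property evaluated on $\mu$, and that $\mu$ being admissible (via $\mu\in\Pi$) is exactly what makes the realizability guarantee apply there; everything else is bookkeeping. If one wished to avoid identifying zero-error realizability with $Q^\pi\in\Fcal$, the same conclusion can be reached by carrying the realizable critic $f_\pi$ through the computation and bounding the residual $\E_\mu[(f_\pi-Q^\pi)(s,\pi)-(f_\pi-Q^\pi)(s,a)]$ using its vanishing Bellman residual on the admissible distributions.
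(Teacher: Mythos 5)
Your proof is correct and follows essentially the same route as the paper's: the performance difference lemma identity $\Lcal_\mu(\pi,Q^\pi)=(1-\gamma)(J(\pi)-J(\mu))$, the comparison against the zero-Bellman-error comparator in the critic's inner minimization, and tightness of the bound at $\pi=\mu$ via $\Lcal_\mu(\mu,\cdot)\equiv 0$. Your closing remark about carrying the realizable critic $f_\pi$ through the argument rather than asserting $Q^\pi\in\Fcal$ outright is a welcome extra degree of care that the paper's own one-line proof glosses over.
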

\vspace{-4mm}
\begin{proof}
By performance difference lemma~\citep{kakade2002approximately}, $J(\pi) - J(\mu) = \frac{1}{1-\gamma}\E_{\mu}[ Q^\pi(s,\pi) - Q^\pi(s,a) ] $. Therefore, if $Q^\pi \in \Fcal$ on states of $\mu$, then $(1-\gamma)(J(\pi) - J(\mu))= \Lcal_\mu(\pi, Q^\pi) = \Lcal_\mu(\pi, Q^\pi) + \beta\Ecal(Q^\pi,\pi) \geq \Lcal_\mu(\pi, f^\pi) + \beta\Ecal(f^\pi,\pi) \geq \Lcal_\mu(\pi, f^\pi)$, where we use $\Ecal(\pi,Q^\pi) = 0$ by definition of $Q^\pi$ and $\Ecal(\pi,f) \geq 0$ for any $f\in\Fcal$.
Robust policy improvement follows, as $J(\widehat{\pi}^*) - J(\mu) \geq \Lcal_\mu(\widehat{\pi}^*, f^{\widehat{\pi}^*})\geq \Lcal_\mu(\mu, f^\mu) = 0$.
\end{proof}
\vspace{-2mm}

\mypar{Relative vs.~absolute pessimism} Our formulation is inspired by the maximin objective of~\citet{xie2021bellman}, which optimizes a pessimistic estimate of $J(\pi)$ (which we call \textit{absolute pessimism}) and learns a good policy with a well-chosen value of $\beta$. In contrast, our relative pessimism formulation optimizes the performance of $\pi$ \textit{relative}  to $J(\mu)$, i.e., $J(\pi) - J(\mu)$. As \cref{sec:theoretical_algo} will show, algorithms based on both formulations enjoy similar optimality guarantees with well-chosen hyperparameters. But absolute pessimism gives policy improvement only for certain hyperparameters, while the relative approach enjoys robust policy improvement for all $\beta \geq 0$, which is practically significant.

\mypar{Improvement beyond behavior policy} It is clear from \cref{prop:RPI}  that the objective in~\Eqref{eq:game_formuation} results in the optimization of a lower bound on the value gap $(1-\gamma)(J(\pi) - J(\mu))$. On the other hand, for appropriate settings of $\beta$, it turns out that this lower bound is not too loose for any $\pi\in\Pi$ such that $d^\pi$ is covered by the data distribution $\mu$, as implicitly shown in our detailed theoretical analysis presented in the next section. Consequently, maximizing the objective~\Eqref{eq:game_formuation} generally results in policies that significantly outperform $\mu$ for appropriate choices of $\beta$, as long as the data has support for at least one such policy. 

\mypar{Imitation learning perspective}
An alternative interpretation of \cref{prop:RPI} follows from examining the special case of $\beta = 0$ (i.e. not using any information of rewards and transitions). In this case, the objective~\Eqref{eq:game_formuation} reduces to the maximin problem:
$
\max_{\pi\in\Pi} \min_{f\in\Fcal} \Lcal_\mu (\pi, f), $ 
%
which always yields robust policy improvement under Assumption~\ref{ass:realizable}. More generally, if the function class $\Fcal$ is rich enough to approximate all bounded, Lipschitz functions, then the above objective with $\beta=0$ resembles behavior cloning to match the occupancy measures of $\pi$ and $\mu$ using an integral probability metric~\citep[IPM;][]{muller1997integral}~\citep[or equivalently, Wasserstein GAN;][]{arjovsky2017wasserstein}. With $\beta>0$, the algorithm gets more information and thus intuitively can perform better.
This perspective shows how our formulation unifies the previously disparate literature on behavior regularization and pessimism.

\section{Adversarially Trained Actor Critic} \label{sec:ATAC}

We design our new model-free offline RL algorithm, \algofull (\algo), based on the Stackelberg game of relative pessimism in \cref{sec:game}.\footnote{One can also use the formulation to design model-based algorithms, by constructing $f$ as a $Q$-function $\widehat{Q}_\theta^\pi$  computed from a model parameterized by $\theta$, and using  $\Ecal_\mu(\pi, \widehat{Q}_\theta^\pi)$ to capture the reward and transition errors of the model $\theta$.}
In the following, we first describe a theoretical version of \algo (\cref{alg:atac (theory)}) in \cref{sec:theoretical_algo}, which is based on a no-regret policy optimization oracle and a pessimistic policy evaluation oracle. We discuss its working principles and give theoretical performance guarantees. This theoretical algorithm further serves as a template that provides design principles for implementing \algo.
To show its effectiveness, in \cref{sec:practical algo}, we design  \cref{alg:atac (practice)}, a practical deep-learning implementation of \algo.
\cref{alg:atac (practice)} is a two-timescale first-order algorithm based on stochastic approximation, and uses a novel Bellman error surrogate (called double-Q residual algorithm loss) for off-policy optimization stability.
Later in \cref{sec:experiments}, we empirically demonstrate that the principally designed \cref{alg:atac (practice)} outperforms many state-of-the-art offline RL algorithms.

\subsection{Theory of \algo with Optimization Oracles}
\label{sec:theoretical_algo}

This section instantiates a version of the \algo algorithm with abstract optimization oracles for the leader and follower, using the concepts introduced in \cref{sec:game}. We first define the empirical estimates of $\Lcal_\mu$ and $\Ecal_\mu$ as follows.
Given a dataset $\Dcal$, we define
\begin{align}
\label{eq:pd_loss}
\Lcal_\Dcal(f, \pi) \coloneqq \E_\Dcal \left[ f(s,\pi) - f(s,a) \right],
\end{align}
and the estimated Bellman error~\citep{antos2008learning}
\begin{equation}
\label{eq:defmsbope}
\begin{aligned}
&~ \Ecal_\Dcal(f,\pi) \coloneqq \E_\Dcal \left[ \left(f(s,a) - r - \gamma f(s',\pi) \right)^2 \right]
\\
&~ \qquad\quad - \min_{f' \in \Fcal} \E_\Dcal \left[ \left(f'(s,a) - r - \gamma f(s',\pi) \right)^2 \right].
\end{aligned}
\end{equation}

\subsubsection{Algorithm}
Using these definitions, \cref{alg:atac (theory)} instantiates a version of the \algo approach. At a high-level, the $k^{th}$ iteration of the algorithm first finds a critic $f_k$ that is maximally pessimistic for the current actor $\pi_{k}$ along with a regularization based on the estimated Bellman error of $\pi_k$ (line~\ref{lin:reg_pes_pe}), with a hyperparameter $\beta$ trading off the two terms. The actor $\pi_{k+1}$ then invokes a no-regret policy optimization oracle to update its policy, given $f_k$ (line~\ref{step:po}). We now discuss some of the key aspects of the algorithm.

\begin{algorithm}[t]
\caption{ATAC (Theoretical Version)}
\label{alg:atac (theory)}
{\bfseries Input:} Batch data $\Dcal$. coefficient $\beta$.
\begin{algorithmic}[1]
\State Initialize policy $\pi_1$ as the uniform policy.
\For{$k = 1,2,\dotsc,K$}
\State Obtain the pessimistic estimation of $\pi_k$ as $f_k$, 
\label{lin:reg_pes_pe}
\Statex\qquad$ f_k \leftarrow \argmin_{f \in \Fcal_k} \Lcal_\Dcal(f, \pi_k) + \beta \Ecal_\Dcal(f,\pi_k)$.
%
\State Compute $\pi_{k + 1}$ by 
\label{step:po}
\Statex \qquad
\qquad $\pi_{k + 1} \leftarrow {\sf PO}(\pi_k, f_k, \Dcal)$,
\Statex \hspace{4.5mm} where ${\sf PO}$ denotes a no-regret oracle (Def.~\ref{def:no-regret}).
\EndFor
\State Output $\pibar \coloneqq \unif(\pi_{[1:K]})$. \algocmt{uniformly mix $\pi_1, \ldots, \pi_K$ at the trajectory level}
\end{algorithmic}
\end{algorithm}

\mypar{Policy optimization with a no-regret oracle}
In \cref{alg:atac (theory)}, the policy optimization step (\cref{step:po}) is conducted by calling a no-regret policy optimization oracle ({\sf PO}). We now define the property we expect from this oracle.

\begin{definition}[No-regret policy optimization oracle] \label{def:no-regret}
An algorithm {\sf PO} is called a \emph{no-regret policy optimization oracle} if for any sequence of functions\footnote{$\{f_k\}_{k=1}^K$ can be generated by an adaptive adversary.} $f_1,\ldots,f_K$ with $f_k~:~\Scal\times\Acal\to[0,\Vmax]$, the policies $\pi_1,\ldots,\pi_K$ produced by {\sf PO} satisfy, for any comparator $\pi \in \Pi$:
\begin{align*}
\textstyle
\varepsilon_\opt^\pi \coloneqq \frac{1}{1 - \gamma}\sum_{k = 1}^{K} \E_{\pi} \left[ f_k(s, \pi) - f_k(s,\pi_k) \right] = o(K).
\end{align*}
\end{definition}

The notion of regret used in Definition~\ref{def:no-regret} nearly corresponds to the standard regret definition in online learning~\citep{cesa2006prediction}, except that we take an expectation over states as per the occupancy measure of the comparator. Algorithmically, a natural oracle might perform online learning with states and actions sampled from $\mu$ in the offline RL setting. This mismatch of measures between the optimization objective and regret definition is typical in policy optimization literature~\citep[see e.g.][]{kakade2002approximately,agarwal2021theory}. One scenario in which we indeed have such an oracle is when {\sf PO} corresponds to running a no-regret algorithm \emph{separately in each state}\footnote{The computational complexity of doing so does \textit{not} depend on the size of the state space, since we only need to run the algorithm on states observed in the data. See~\citep{xie2021bellman}.} and the policy class is sufficiently rich to approximate the resulting iterates. There is a rich literature on such approaches using mirror-descent style methods~\citep[e.g.,][]{neu2017unified,geist2019theory}, of which a particularly popular instance is soft policy iteration or natural policy gradient~\citep{kakade2001natural} based on multiplicative weight updates~\citep[e.g.][]{even2009online,agarwal2021theory}: $\pi_{k + 1}(a|s) \propto \pi_{k}(a|s) \exp \left( \eta f_k(s,a) \right)$ 
with $\eta = \sqrt{\frac{\log|\Acal|}{2 \Vmax^2 K}}$.
This oracle is used by~\citet{xie2021bellman}, which leads to the regret bound $\varepsilon_\opt^\picomp \leq \Ocal\left(\frac{\Vmax}{1 - \gamma}\sqrt{K \log|\Acal|}\right)$.

\subsubsection{Theoretical Guarantees}
We now provide the theoretical analysis of \cref{alg:atac (theory)}. Recall that with missing coverage, we can only hope to compete with policies whose distributions are well-covered by data, and we need a quantitative measurement of such coverage. 
Following~\citet{xie2021bellman}, we use $\Cscr(\nu;\mu,\Fcal,\pi)\coloneqq \max_{f \in \Fcal}\frac{\|f - \Tcal^\pi f\|_{2,\nu}^2}{\|f - \Tcal^\pi f\|_{2,\mu}^2}$ to measure how well a distribution of interest $\nu$ (e.g., $d^\pi$) is covered by the data distribution $\mu$ w.r.t.~policy $\pi$ and function class $\Fcal$, which is a sharper measure than the more typical concentrability coefficients~\citep{munos2008finite} (e.g., $\Cscr(\nu;\mu,\Fcal,\pi) \leq \max_{s,a} \nu(s,a) / \mu(s,a)$).

We also use $d_{\Fcal,\Pi}$ to denote the joint statistical complexity of the policy class $\Pi$ and $\Fcal$. For example, when $\Fcal$ and $\Pi$ are finite, we have $d_{\Fcal,\Pi} = \Ocal(\log\nicefrac{|\Fcal| |\Pi|}{\delta})$, where $\delta$ is a failure probability. Our formal proofs utilize the covering number to address infinite function classes; see  \cref{app:theo_proofs} for details. In addition, we also omit the approximation error terms $\varepsilon_\Fcal$ and $\varepsilon_{\Fcal,\Fcal}$ in the results presented in this section for the purpose of clarity. The detailed results incorporating these terms are provided in \cref{app:theo_proofs}.

\begin{theorem}[Informal]
\label{thm:maintext_thm}
Let $|\Dcal| = N$, $C \geq 1$ be any constant, $\nu\in\Delta(\Scal\times\Acal)$ be an arbitrarily distribution that satisfies $\max_{k \in [K]}\Cscr(\nu;\mu,\Fcal,\pi_k) \leq C$, and $\pi\in\Pi$ be an arbitrary competitor policy. Then, when $\varepsilon_\Fcal=\varepsilon_{\Fcal,\Fcal}=0$, choosing $\beta = {\Theta}\left(\sqrt[3]{\frac{\Vmax N^2}{d_{\Fcal,\Pi}^2}}\right)$, with high probability:
\begin{align*}
\textstyle
J(\pi) - J(\pibar) &\leq \textstyle~ \varepsilon_\opt^\pi + \Ocal\left( \frac{\Vmax \sqrt{C} (d_{\Fcal,\Pi})^{\nicefrac{1}{3}}}{(1 - \gamma) N^{\nicefrac{1}{3}}}\right)
\\
&\quad \textstyle~ + \frac{1}{K(1-\gamma)}\sum_{k = 1}^{K} {\left\langle d^{\pi}\setminus\nu, ~ f_k - \Tcal^{\pi_k} f_k \right\rangle},
\end{align*}
where $(d^{\pi}\setminus\nu)(s,a) \coloneqq \max(d^\pi(s,a) - \nu(s,a),0)$, and $\langle d, f \rangle \coloneqq \sum_{(s,a) \in \Scal \times \Acal}d(s,a) f(s,a)$ for any $d$ and $f$.
\end{theorem}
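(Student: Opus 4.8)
The plan is to bound the per-iterate gap $J(\pi)-J(\pi_k)$ and average, using that trajectory-level mixing gives $J(\pibar)=\frac1K\sum_{k=1}^{K}J(\pi_k)$. The crux is a value-difference identity that routes the gap through the critic $f_k$ and its Bellman residual $\Delta_k\coloneqq f_k-\Tcal^{\pi_k}f_k$. Integrating $\Delta_k$ against the discounted occupancies $d^\pi,d^{\pi_k}$ and using $\E_{d^\pi}[r]=(1-\gamma)J(\pi)$ (a telescoping computation, i.e.\ the performance-difference lemma with an inexact critic) gives, for each $k$,
\[
(1-\gamma)\big(J(\pi)-J(\pi_k)\big)=\E_{d^\pi}\big[f_k(s,\pi)-f_k(s,\pi_k)\big]-\E_{d^\pi}[\Delta_k]+\E_{d^{\pi_k}}[\Delta_k].
\]
Summed and averaged over $k$, the first term is, by Definition~\ref{def:no-regret}, the oracle regret and produces the $\varepsilon_\opt^\pi$ term of the statement; it remains to dispatch the two residual terms.

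For the iterate term $\E_{d^{\pi_k}}[\Delta_k]$ I would invoke \emph{relative pessimism} exactly as in \cref{prop:RPI}. A second telescoping step (now against $\mu$) rewrites $\E_{d^{\pi_k}}[\Delta_k]=\E_\mu[\Delta_k]+\Lcal_\mu(\pi_k,f_k)-(1-\gamma)\big(J(\pi_k)-J(\mu)\big)$. Under Assumption~\ref{ass:realizable}, $Q^{\pi_k}$ is a (near-)feasible competitor with $\Lcal_\mu(\pi_k,Q^{\pi_k})=(1-\gamma)(J(\pi_k)-J(\mu))$ and zero Bellman error, so optimality of $f_k$ gives $\Lcal_\mu(\pi_k,f_k)\le\Lcal_\mu(\pi_k,Q^{\pi_k})-\beta\|\Delta_k\|_{2,\mu}^2$, whence $\E_{d^{\pi_k}}[\Delta_k]\le\E_\mu[\Delta_k]-\beta\|\Delta_k\|_{2,\mu}^2$ — a small, $\beta$-controlled quantity. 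For the comparator term $-\E_{d^\pi}[\Delta_k]$ I would split $d^\pi=(d^\pi\wedge\nu)+(d^\pi\setminus\nu)$ (pointwise minimum plus the stated excess). On the covered part, Jensen and the coverage coefficient give $|\langle d^\pi\wedge\nu,\Delta_k\rangle|\le\|\Delta_k\|_{2,\nu}\le\sqrt{C}\,\|\Delta_k\|_{2,\mu}$ via $\Cscr(\nu;\mu,\Fcal,\pi_k)\le C$; the uncovered part is retained and is precisely the Bellman-residual term supported on $d^\pi\setminus\nu$ in the statement.

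It then remains to bound $\|\Delta_k\|_{2,\mu}$ and to make these steps rigorous at finite samples. Using realizability and completeness (Assumptions~\ref{ass:realizable}--\ref{ass:complete}), the $\min_{f'}$ subtraction in $\Ecal_\Dcal$ makes the empirical objective a faithful estimate of $\|\cdot-\Tcal^{\pi_k}\cdot\|_{2,\mu}^2$ despite the double-sampling issue; uniform (Bernstein-type) concentration of $\Lcal_\Dcal$ and $\Ecal_\Dcal$ over covering nets of $\Fcal,\Pi$ (and an argument handling the data-adaptive choice of $\pi_k,f_k$) converts the empirical optimality of $f_k$ into $\beta\|\Delta_k\|_{2,\mu}^2\lesssim\Vmax+\varepsilon_{\mathrm{stat}}+\beta\,\varepsilon_{\mathrm{stat}}$. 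The coverage-weighted residual $\tfrac{\sqrt C}{1-\gamma}\|\Delta_k\|_{2,\mu}\sim\tfrac{\sqrt C}{1-\gamma}\sqrt{\Vmax/\beta}$ decreases in $\beta$, while the $\beta$-amplified estimation error of the Bellman regularizer (entering through the pessimism step above) increases in $\beta$; balancing the two yields $\beta=\Theta\big(\sqrt[3]{\Vmax N^2/d_{\Fcal,\Pi}^2}\big)$ and the stated $N^{-1/3}$ statistical rate $\Ocal\big(\Vmax\sqrt C\,d_{\Fcal,\Pi}^{1/3}/((1-\gamma)N^{1/3})\big)$.

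I expect the main obstacle to be the finite-sample control of the estimated Bellman error $\Ecal_\Dcal$: the naive empirical residual is corrupted by double sampling, and it is the $\min_{f'}$ debiasing together with Assumption~\ref{ass:complete} and a variance-sensitive concentration that rescue it — at the price of the slower $N^{-1/3}$ rate once $\beta$ is optimized. Secondary difficulties are making the pessimism inequality (the finite-sample analogue of \cref{prop:RPI}) precise so that the $\beta$-amplified term is exactly what trades off against the residual, and coping with the measure mismatch between the regret oracle, whose guarantee is stated under $d^\pi$, and the data distribution $\mu$ on which it actually operates.
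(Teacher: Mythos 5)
Your proposal is correct and follows essentially the same route as the paper: your two telescoping identities reproduce exactly the paper's performance-difference decomposition (Lemma~\ref{th:performance_difference} combined with the pessimistic-optimality bound on $\Lcal_\mu(\pi_k,f_k)-\Lcal_\mu(\pi_k,Q^{\pi_k})$ in Lemma~\ref{th:performance_difference_app}), and your subsequent treatment of the on-support term via $\Cscr(\nu;\mu,\Fcal,\pi_k)\leq C$, the off-support remainder, the covering-number concentration of $\Ecal_\Dcal$ with the $\min_{f'}$ debiasing, and the $\sqrt{\Vmax/\beta}$ vs.\ $\beta\varepsilon_{\mathrm{stat}}$ balancing is precisely the argument in \cref{app:main_thm_proof}. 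The only cosmetic difference is that the paper derives the decomposition through a fake-reward MDP and the gap $(J(\pi)-J(\mu))-(J(\pi_k)-J(\mu))$, whereas you telescope directly against $d^\pi$ and $d^{\pi_k}$; these are the same computation.
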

At a high-level, our result shows that we can compete with any policy $\pi$ using a sufficiently large dataset, as long as our optimization regret is small and the data distribution $\mu$ has a good coverage for $d^\pi$. In particular, choosing $\nu = d^\pi$ removes the off-support term, so that we always have a guarantee scaling with $\max_k \Cscr(d^\pi, \mu,\Fcal,\pi_k)$, but can benefit if other distributions $\nu$ are better covered with a small off-support mass $\|d^\pi\setminus\nu\|_1$. The off-support term can also be small if a small Bellman error under $\mu$ generalizes to a small error out of support, due to properties of $\Fcal$.

\mypar{Comparison with prior theoretical results} To compare our result with prior works, we focus on the two statistical error terms in our bound, ignoring the optimization regret. Relative to the information-theoretic bound of~\citet{xie2021bellman}, we observe a similar decomposition into a finite sample deviation term and an off-support bias. Their finite sample error decays as $N^{-1/2}$ as opposed to our $N^{-1/3}$ scaling, which arises from the use of regularization here. Indeed, we can get a $N^{-1/2}$ bound for a constrained version, but such a version is not friendly to practical implementation. Prior linear methods~\citep{jin2020pessimism,zanette2021provable,uehara2021representation} have roughly similar guarantees to~\citet{xie2021bellman}, 
so a similar comparison holds.

Most related to Theorem~\ref{thm:maintext_thm} is the $N^{-1/5}$ bound of~\citet[Corollary 5]{xie2021bellman} for their regularized algorithm PSPI, which is supposed to be computationally tractable though no practical implementation is offered.\footnote{Incidentally, we are able to use our empirical insights to provide a scalable implementation of PSPI; see Section~\ref{sec:experiments}.} While our bound is better, we use a bounded complexity $\Pi$ while their result uses an unrestricted policy class. 
If we were to use the same policy class as theirs, the complexity of $\Pi$ would grow with optimization iterates, requiring us to carefully balance the regret and deviation terms and yielding identical guarantees to theirs. 
To summarize, our result is comparable to~\citet[Corollary 5]{xie2021bellman} and stated in a more general form, and we enjoy a crucial advantage of robust policy improvement as detailed below.



\mypar{Robust policy improvement}
We now formalize the robust policy improvement of \cref{alg:atac (theory)}, which can be viewed as the finite-sample version of Proposition~\ref{prop:RPI}.
\begin{proposition}
\label{prop:safe_pi}
Let 
$\pibar$ be the output of \cref{alg:atac (theory)}.
If Assumption~\ref{ass:realizable} holds with $\varepsilon_{\Fcal} = 0$ and $\mu\in\Pi$, with high probability,
\begin{small}
\begin{align*}
 J(\mu) - J(\pibar) \leq \Ocal\left(\frac{\Vmax}{1-\gamma}\sqrt{\frac{ d_{\Fcal,\Pi}}{N}} + \frac{\beta \Vmax^2 d_{\Fcal,\Pi}}{(1-\gamma)N} \right) + \varepsilon_\opt^\mu.
\end{align*}
\end{small}
\end{proposition}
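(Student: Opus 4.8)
The plan is to reproduce the population argument of Proposition~\ref{prop:RPI} at the finite-sample level, while using the no-regret oracle specialized to the comparator $\mu\in\Pi$ to supply exactly the terms generated by relative pessimism. First I would exploit trajectory-level mixing to write $J(\mu)-J(\pibar)=\frac{1}{K}\sum_{k=1}^{K}\big(J(\mu)-J(\pi_k)\big)$, and then apply the performance difference lemma together with realizability (under $\varepsilon_\Fcal=0$ there is $Q^{\pi_k}\in\Fcal$ agreeing with the true value on the support of $\mu$) to get $J(\mu)-J(\pi_k)=-\frac{1}{1-\gamma}\Lcal_\mu(\pi_k,Q^{\pi_k})$. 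Hence it suffices to lower bound each $\Lcal_\mu(\pi_k,Q^{\pi_k})$.

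The core step is to transfer the empirical optimality of $f_k$ to the population comparator $Q^{\pi_k}$. Since $f_k$ minimizes $\Lcal_\Dcal(\cdot,\pi_k)+\beta\Ecal_\Dcal(\cdot,\pi_k)$ over $\Fcal$ and $Q^{\pi_k}\in\Fcal$, I have $\Lcal_\Dcal(f_k,\pi_k)+\beta\Ecal_\Dcal(f_k,\pi_k)\le\Lcal_\Dcal(Q^{\pi_k},\pi_k)+\beta\Ecal_\Dcal(Q^{\pi_k},\pi_k)$. I would drop the nonnegative term $\beta\Ecal_\Dcal(f_k,\pi_k)\ge0$ on the left and invoke two concentration results to pass to the population: (i) a uniform slow-rate bound $|\Lcal_\Dcal(f,\pi)-\Lcal_\mu(\pi,f)|\lesssim\Vmax\sqrt{d_{\Fcal,\Pi}/N}=:\varepsilon_1$ over $f\in\Fcal,\pi\in\Pi$ (bounded-increment Hoeffding plus covering), and (ii) a one-sided fast-rate bound $\Ecal_\Dcal(Q^{\pi_k},\pi_k)\lesssim\Vmax^2 d_{\Fcal,\Pi}/N=:\varepsilon_2$. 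The latter uses $\Tcal^{\pi_k}Q^{\pi_k}=Q^{\pi_k}\in\Fcal$, so the population Bellman error $\Ecal_\mu(\pi_k,Q^{\pi_k})$ is exactly zero and $Q^{\pi_k}$ is the population least-squares minimizer against the targets $r+\gamma Q^{\pi_k}(s',\pi_k)$; the subtracted $\min_{f'}$ term in $\Ecal_\Dcal$ cancels the Bellman-noise variance so that only the empirical excess risk remains, which concentrates at the $1/N$ rate. Combining (i) and (ii) yields $\Lcal_\mu(\pi_k,f_k)\le\Lcal_\mu(\pi_k,Q^{\pi_k})+2\varepsilon_1+\beta\varepsilon_2$, equivalently $-\Lcal_\mu(\pi_k,Q^{\pi_k})\le-\Lcal_\mu(\pi_k,f_k)+2\varepsilon_1+\beta\varepsilon_2$.

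Finally I would recognize the residual terms as the oracle regret at $\mu$. Because $d^\mu$ draws $a\sim\mu(\cdot\mid s)$, we have $\E_\mu[f_k(s,\mu)]=\E_\mu[f_k(s,a)]$, so $\E_\mu[f_k(s,\mu)-f_k(s,\pi_k)]=-\Lcal_\mu(\pi_k,f_k)$, and therefore $\frac{1}{1-\gamma}\sum_k\big(-\Lcal_\mu(\pi_k,f_k)\big)=\varepsilon_\opt^\mu$ by Definition~\ref{def:no-regret}. Summing the per-$k$ inequality, dividing by $K$, and multiplying by $\frac{1}{1-\gamma}$ gives $J(\mu)-J(\pibar)\le\frac{\varepsilon_\opt^\mu}{K}+\frac{2\varepsilon_1+\beta\varepsilon_2}{1-\gamma}$, which is the claimed bound (under the paper's per-round normalization of $\varepsilon_\opt^\mu$) after substituting the scalings of $\varepsilon_1$ and $\varepsilon_2$. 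All high-probability statements are made uniform over the adaptively chosen $\pi_k\in\Pi$ and $f_k\in\Fcal$ via covering/union bounds absorbed into $d_{\Fcal,\Pi}$.

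I expect the main obstacle to be step (ii): the one-sided fast-rate concentration of the offset Bellman-error estimator $\Ecal_\Dcal(Q^{\pi_k},\pi_k)$, uniform over the data-dependent policies $\pi_k$. This requires a careful Bernstein/localization argument that leverages the zero population Bellman error of the realizable critic, so that only realizability (Assumption~\ref{ass:realizable}) and not completeness is needed — which is exactly why the final guarantee depends on $\varepsilon_\Fcal$ alone.
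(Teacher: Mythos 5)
Your proposal is correct and takes essentially the same route as the paper: the paper obtains this result by specializing its general performance-difference decomposition (\cref{th:performance_difference_app}) to the comparator $\mu$, where the two Bellman-error terms cancel, leaving exactly your three ingredients --- the regret term $\varepsilon_\opt^\mu$, the slow-rate uniform concentration of $\Lcal_\Dcal$ around $\Lcal_\mu$, and the fast-rate bound $\Ecal_\Dcal(f_{\pi_k},\pi_k)\lesssim \Vmax^2 d_{\Fcal,\Pi}/N$ for the realizable critic (\cref{thm:version_space}), transferred through the empirical optimality of $f_k$ and the identity $-\Lcal_\mu(\pi_k,f_k)=\E_\mu[f_k(s,\mu)-f_k(s,\pi_k)]$. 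The only differences are presentational: you apply the performance difference lemma at $\mu$ directly instead of instantiating the general comparator lemma and observing the cancellation, and the $1/K$ normalization of $\varepsilon_\opt^\mu$ you flag is an ambiguity already present in the paper's own statements.
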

\cref{prop:safe_pi} provides the robust policy improvement guarantee in the finite-sample regime, under a \emph{weaker} assumption on $\Fcal$ than that in \cref{thm:maintext_thm}.
In contrast to the regular safe policy improvement results in offline RL~\citep[e.g.,][Corollary 3]{xie2021bellman} where the pessimistic hyperparamter is required to choose properly, the robust policy improvement from \cref{prop:safe_pi} could adapt to a wide range of $\beta$. As long as $\beta = o(N)$, the learned policy $\pibar$ from \cref{alg:atac (theory)} is guaranteed improve the behavior policy $\mu$ consistently.
In fact, for such a range of $\beta$, robust policy improvement holds regardless of the quality of the learned critic. For example, when $\beta=0$,
\cref{prop:safe_pi} still guarantees a policy no worse than the behavior policy $\mu$, though the critic loss does not contain the Bellman error term anymore. (In this case, \algo performs IL). In contrast, prior works  based on absolute pessimism~\citep[e.g.,][]{xie2021bellman} immediately output degenerate solutions when the Bellman error term is removed. 

It is also notable that, compared with \cref{thm:maintext_thm}, \cref{prop:safe_pi} enjoys a better statistical rate with a proper $\beta$, i.e., $\beta \leq O(N^{1/2})$, due to the decomposition of performance difference shown in the following proof sketch.

\mypar{Proof sketch}
\cref{thm:maintext_thm} is established based on the following decomposition of performance difference: $\forall \pi$, 
\begin{small}
\begin{align} \label{th:performance_difference_main}
    & (1-\gamma)(J(\pi) - J(\pi_k)) \leq
    \E_{\mu} \left[ f_k - \Tcal^{\pi_k} f_k \right] - \E_{\pi} \left[ f_k-\Tcal^{\pi_k} f_k  \right] \nonumber \\
     & +  \E_{\pi} \left[ f_k(s,\pi) - f_k(s,\pi_k) \right] + \widetilde\Ocal \Big( \sqrt{\frac{\Vmax^2}{N}} + \frac{\beta \Vmax^2}{N}\Big).
\end{align}
\end{small}\par
Details of this decomposition can be found in \cref{sec:decomp_pd}, and the proof relies on the fact that $f_k$ is obtained by our pessimistic policy evaluation procedure.
In \Eqref{th:performance_difference_main}, the first two terms are controlled by the Bellman error (both on-support and off-support), and the third is controlled by the optimization error.
Notably, when the comparator $\pi$ is the behavior policy $\mu$, the first two terms in \Eqref{th:performance_difference_main} cancel out, giving the faster rate of Proposition~\ref{prop:safe_pi}. This provides insight for why robust policy improvement does not depend on the quality of the learned critic.
%



\begin{algorithm}[t]
\caption{ATAC (Practical Version)}
\label{alg:atac (practice)}
{\bfseries Input:} Batch data $\Dcal$, policy $\pi$, critics $f_1, f_2$, constants $\beta\geq 0$, $\tau\in[0,1]$, $w\in[0,1]$
\begin{algorithmic}[1]
\State Initialize target networks $\bar{f}_1\gets f_1$, $\bar{f}_2\gets f_2$
\For{$k = 1,2,\dotsc,K$}
\State Sample minibatch $\Dcal_{\textrm{mini}}$ from dataset $\Dcal$.

\State For {$f\in\{f_1, f_2\}$}, update critic networks \label{ln:update critic}
\Statex\qquad$l_{\textrm{critic}}(f) \coloneqq
    \Lcal_{\Dcal_{\textrm{mini}}}(f,\pi) + \beta \Ecal_{\Dcal_{\textrm{mini}}}^{w}(f, \pi)$
\Statex \qquad$f \gets \text{Proj}_\Fcal(f - \eta_{\textrm{fast}}\nabla l_{\textrm{critic}})$

\State Update actor network  \label{ln:update actor}

\Statex \qquad$l_{\textrm{actor}}(\pi)
    \coloneqq - \Lcal_{\Dcal_{\textrm{mini}}}(f_1,\pi)$
\Statex \qquad$\pi \gets \text{Proj}_{\Pi}(\pi - \eta_{\textrm{slow}}\nabla l_{\textrm{actor}})$
\State For $(f,\bar{f})\in\{(f_i,\bar{f}_i)\}_{i=1,2}$, update target \label{ln:update target}
\Statex \qquad\qquad$\bar{f} \gets (1-\tau) \bar{f} + \tau{f}  $.
\EndFor
\end{algorithmic}
\end{algorithm}

\subsection{A Practical Implementation of \algo} \label{sec:practical algo}

We present a scalable deep RL version of \algo in  \cref{alg:atac (practice)}, following the principles of \cref{alg:atac (theory)}. With abuse of notation, we use $\nabla l_{\textrm{actor}}$, $\nabla l_{\textrm{critic}}$ to denote taking gradients with respect to the parameters of the actor and the critic, respectively; similarly \cref{ln:update target} in \cref{alg:atac (practice)} refers to a moving average in the parameter space. In addition, every term involving $\pi$ in \cref{alg:atac (practice)} means a stochastic approximation based on sampling an action from $\pi$ when queried. In implementation, we use adaptive gradient descent algorithm ADAM~\citep{kingma2014adam} for updates in \cref{alg:atac (practice)} (i.e. $f - \eta_{\textrm{fast}}\nabla l_{\textrm{critic}}$ and $\pi - \eta_{\textrm{slow}}\nabla l_{\textrm{actor}}$).

\cref{alg:atac (practice)} is a two-timescale first-order algorithm~\citep{borkar1997stochastic,maei2009convergent}, where the critic is updated with a much faster rate $\eta_{\textrm{fast}}$ than the actor with $ \eta_{\textrm{slow}}$.
This two-timescale update is designed to mimic the oracle updates in \cref{alg:atac (theory)}. Using $\eta_{\textrm{fast}} \gg \eta_{\textrm{slow}}$ allows us to approximately treat the critic in \cref{alg:atac (practice)} as the solution to the pessimistic policy evaluation step in \cref{alg:atac (theory)} for a given actor~\citep{maei2009convergent}; on the other hand, the actor's gradient update rule is reminiscent of the incremental nature of no-regret optimization oracles.

\subsubsection{Critic Update}

The update in \cref{ln:update critic} of \cref{alg:atac (practice)} is a first-order approximation of \cref{lin:reg_pes_pe} in \cref{alg:atac (theory)}. We discuss the important design decisions of this practical critic update below.

\mypar{Projection}
Each critic update performs a projected mini-batch gradient step, where the projection to $\Fcal$ ensures bounded complexity for the critic. We parameterize $\Fcal$ as neural networks with $\ell_2$ \emph{bounded} weights.\footnote{We impose no constraint on the bias term. 
}
The projection is crucial to ensure stable learning across all $\beta$ values. 
The use of projection can be traced back to the training Wasserstein GAN~\citep{arjovsky2017wasserstein} or IPM-based IL~\citep{swamy2021moments}. 
We found alternatives such as weight decay penalty to be less reliable. 

\mypar{Double Q residual algorithm loss}
Off-policy optimization with function approximators and bootstrapping faces the notorious issue of deadly triad~\citep{sutton2018reinforcement}. Commonly this is mitigated through the use of double Q heuristic~\citep{fujimoto2018addressing,haarnoja2018soft}; however, we found that this technique alone is insufficient to enable numerically stable policy evaluation when the policy $\pi$ takes very different actions\footnote{Divergence often happens, e.g., when $\pi$ is uniform.} from the behavior policy $\mu$.
To this end, we design a new surrogate for the Bellman error $\Ecal_{D}(f,\pi)$ for \cref{alg:atac (practice)}, by combining the double Q heuristic and the objective of the Residual Algorithm (RA)~\citep{baird1995residual}, both of which are previous attempts to combat the deadly triad.
Specifically, we design the surrogate loss as the convex combination of the temporal difference (TD) losses of the critic and its delayed targets:
\begin{align} \label{eq:DQRA loss}
\hspace{-3mm}
    \Ecal_{\Dcal}^{w}(f, \pi) \coloneqq (1-w) \Ecal_{\Dcal}^{\textrm{td}}(f,f,\pi) +  w \Ecal_{\Dcal}^{\textrm{td}}(f, \bar{f}_{\min},\pi)
\end{align}
where $w\in[0,1]$,
$
    \Ecal_{\Dcal}^{\textrm{td}}(f,f',\pi) \coloneqq \E_{\Dcal}[(f(s,a) - r - \gamma f'(s',\pi))^2]
$, and $\bar{f}_{\min}(s,a) \coloneqq \min_{i=1,2} \bar{f}_i(s,a)$.
We call the objective in \Eqref{eq:DQRA loss}, the \emph{DQRA loss}.
We found that using the DQRA loss significantly improves the optimization stability compared with just the double Q heuristic alone; see \cref{fig:dqra ablation}. As a result, \algo can perform stable optimization with higher $\beta$ values and make the learner less pessimistic.
%
This added stability of DQRA comes from that the residual error $\Ecal_{\Dcal}^{\textrm{td}}(f,f,\pi)$ is a fixed rather than a changing objective. 
%
This stabilization overcomes potential biases due to the challenges (related to double sampling) in unbiased gradient estimation of the RA objective. Similar observations were made by~\citet{wang2021convergent} for online RL.
In practice, we found that $w=0.5$ works stably; using $w\approx 0$ ensures numerical stability, but has a worst-case exponentially slow convergence speed and often deteriorates neural network learning~\citep{schoknecht2003td,wang2021convergent}.
In \cref{sec:experiments}, we show an ablation to study the effects of $w$.

\begin{figure}[t]
	\centering
	\begin{subfigure}{0.23\textwidth}
		\includegraphics[width=\textwidth]{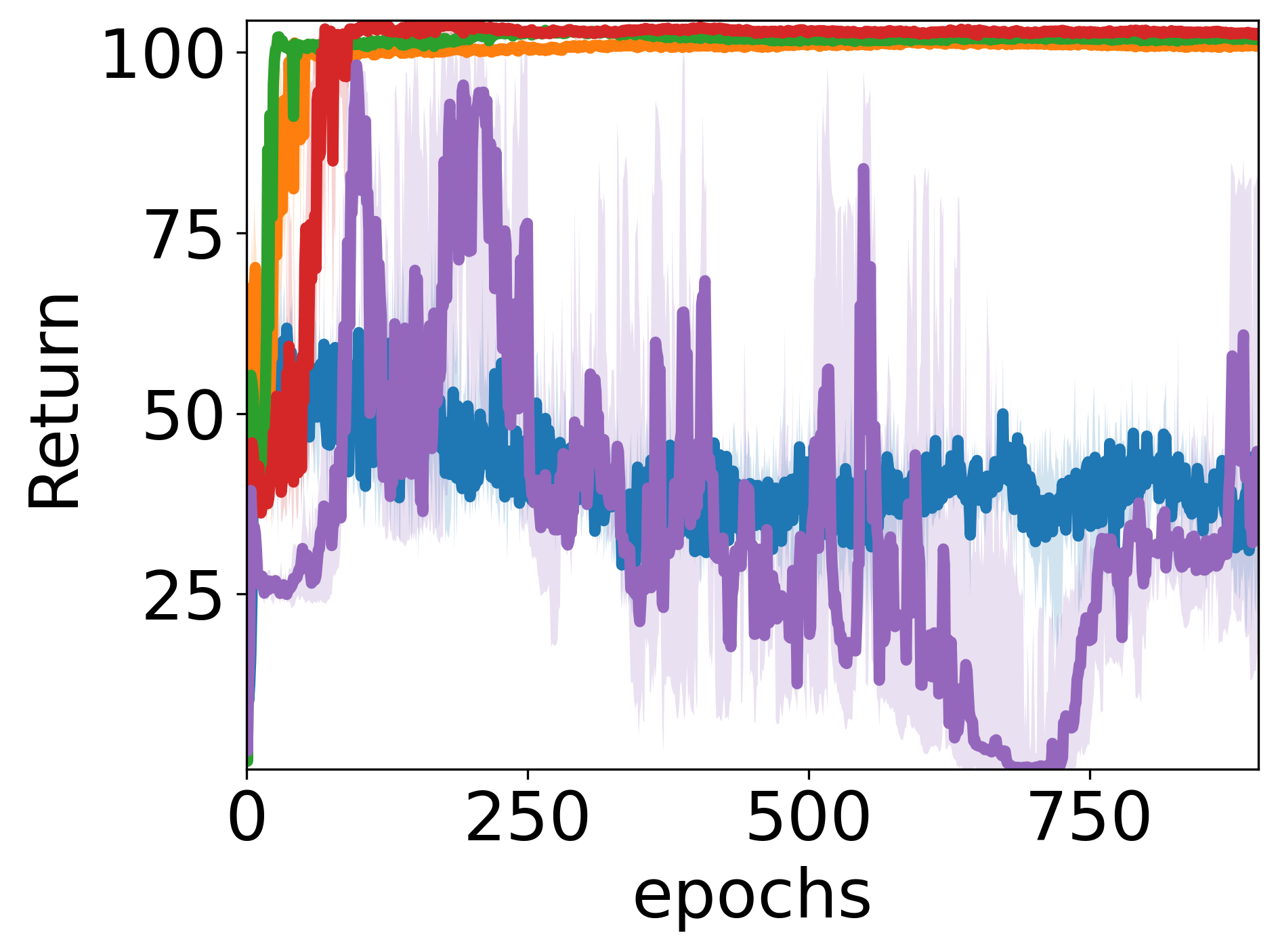}
		\label{fig:rg ablation performance}
	\end{subfigure}
	~~
	\begin{subfigure}{0.23\textwidth}
		\includegraphics[width=\textwidth]{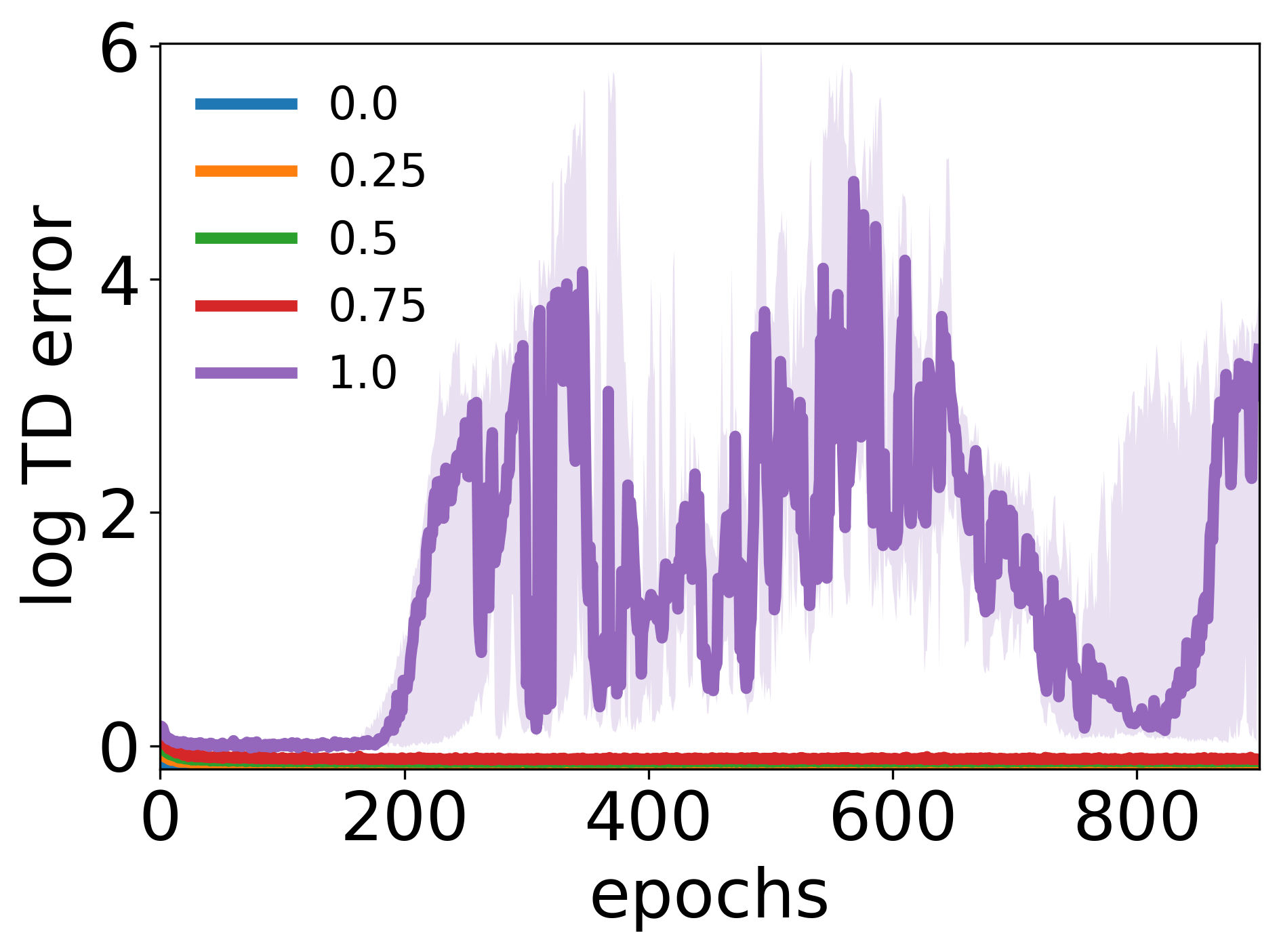}
		\label{fig:rg ablation td error}
	\end{subfigure}

\caption{\small{Ablation of the DQRA loss with different mixing weights $w$ in \Eqref{eq:DQRA loss}. The plots show the policy performance and TD error across optimization epochs of \algo with the hopper-medium-replay dataset. The stability and performance are greatly improved when $w\in(0,1)$.
For each $w$, the plot shows the $25^{th}$, $50^{th}$, $75^{th}$ percentiles over 10 random seeds.}}
\label{fig:dqra ablation}
\end{figure}

\begin{table*}[ht!]
\footnotesize
    \centering
        \begin{tabular}{|c |  c| c c c c c c c c c |}
         \hline
 & Behavior & ATAC$^*$ & ATAC & ATAC$_0^*$ & ATAC$_0$ & CQL & COMBO & TD3+BC & IQL & BC\\
\hline
halfcheetah-rand & -0.1 & 4.8 & 3.9 & 2.3 & 2.3 & 35.4 & \textbf{38.8} & 10.2 & -  & 2.1\\
walker2d-rand & 0.0 & \textbf{8.0} & 6.8 & 7.6 & 5.7 & 7.0 & 7.0 & 1.4 & -  & 1.6\\
hopper-rand & 1.2 & \textbf{31.8} & 17.5 & 31.6 & 18.2 & 10.8 & 17.9 & 11.0 & -  & 9.8\\
halfcheetah-med & 40.6 & \textbf{54.3} & \textbf{53.3} & 43.9 & 36.8 & 44.4 & \textbf{54.2} & 42.8 & 47.4 & 36.1\\
walker2d-med & 62.0 & \textbf{91.0} & \textbf{89.6} & \textbf{90.5} & \textbf{89.6} & 74.5 & 75.5 & 79.7 & 78.3 & 6.6\\
hopper-med & 44.2 & \textbf{102.8} & 85.6 & \textbf{103.5} & 94.8 & 86.6 & 94.9 & \textbf{99.5} & 66.3 & 29.0\\
halfcheetah-med-replay & 27.1 & 49.5 & 48.0 & 49.2 & 47.2 & 46.2 & \textbf{55.1} & 43.3 & 44.2 & 38.4\\
walker2d-med-replay & 14.8 & \textbf{94.1} & 92.5 & \textbf{94.2} & 89.8 & 32.6 & 56.0 & 25.2 & 73.9 & 11.3\\
hopper-med-replay & 14.9 & \textbf{102.8} & \textbf{102.5} & \textbf{102.7} & \textbf{102.1} & 48.6 & 73.1 & 31.4 & 94.7 & 11.8\\
halfcheetah-med-exp & 64.3 & \textbf{95.5} & \textbf{94.8} & 41.6 & 39.7 & 62.4 & 90.0 & \textbf{97.9} & 86.7 & 35.8\\
walker2d-med-exp & 82.6 & \textbf{116.3} & \textbf{114.2} & \textbf{114.5} & 104.9 & 98.7 & 96.1 & 101.1 & \textbf{109.6} & 6.4\\
hopper-med-exp & 64.7 & \textbf{112.6} & \textbf{111.9} & 83.0 & 46.5 & \textbf{111.0} & \textbf{111.1} & \textbf{112.2} & 91.5 & \textbf{111.9}\\
\hline
pen-human & 207.8 & 79.3 & 53.1 & \textbf{106.1} & 61.7 & 37.5 & -  & -  & 71.5 & 34.4\\
hammer-human & 25.4 & \textbf{6.7} & 1.5 & 3.8 & 1.2 & \textbf{4.4} & -  & -  & 1.4 & 1.5\\
door-human & 28.6 & 8.7 & 2.5 & \textbf{12.2} & 7.4 & \textbf{9.9} & -  & -  & 4.3 & 0.5\\
relocate-human & 86.1 & \textbf{0.3} & \textbf{0.1} & \textbf{0.5} & \textbf{0.1} & \textbf{0.2} & -  & -  & \textbf{0.1} & \textbf{0.0}\\
pen-cloned & 107.7 & 73.9 & 43.7 & \textbf{104.9} & 68.9 & 39.2 & -  & -  & 37.3 & 56.9\\
hammer-cloned & 8.1 & 2.3 & 1.1 & \textbf{3.2} & 0.4 & 2.1 & -  & -  & 2.1 & 0.8\\
door-cloned & 12.1 & \textbf{8.2} & 3.7 & 6.0 & 0.0 & 0.4 & -  & -  & 1.6 & -0.1\\
relocate-cloned & 28.7 & \textbf{0.8} & \textbf{0.2} & \textbf{0.3} & \textbf{0.0} & \textbf{-0.1} & -  & -  & \textbf{-0.2} & \textbf{-0.1}\\
pen-exp & 105.7 & \textbf{159.5} & 136.2 & \textbf{154.4} & 97.7 & 107.0 & -  & -  & -  & 85.1\\
hammer-exp & 96.3 & \textbf{128.4} & \textbf{126.9} & 118.3 & 99.2 & 86.7 & -  & -  & -  & \textbf{125.6}\\
door-exp & 100.5 & \textbf{105.5} & \textbf{99.3} & \textbf{103.6} & 48.3 & \textbf{101.5} & -  & -  & -  & 34.9\\
relocate-exp & 101.6 & \textbf{106.5} & \textbf{99.4} & \textbf{104.0} & 74.3 & 95.0 & -  & -  & -  & \textbf{101.3}\\
         \hline
        \end{tabular}
    \caption{\small Evaluation on the D4RL dataset. Algorithms with score within $\epsilon$ from the best on each domain are marked in bold, where $\epsilon = 0.1 |J(\mu)|$. Baseline results are from the respective papers. For \algo variants, we take the median score over $10$ seeds.}
    \label{tab:main exp results}
\end{table*}

\subsubsection{Actor Update}

The actor update aims to achieve no-regret with respect to the adversarially chosen critics. In \cref{alg:atac (practice)}, we adopt a gradient based update (implemented as ADAM) mimicking  the proximal nature of theoretical no-regret algorithms.
Although ADAM has no formal no-regret guarantees for neural network learning, it works quite well in practice for RL and IL algorithms based on no-regret learning~\citep{sun2017deeply,cheng2019predictor,cheng2021heuristic}.

\mypar{Projection}
We set $\Pi$ to be a class of policies with a minimal entropy constraint, so the projection in \cref{ln:update actor} ensures that the updated policy has a non-zero entropy. Soft policy iteration style theoretical algorithms naturally keep a reasonable entropy, and practically this avoids getting trapped in poor local optima.
We implement the constraint by a Lagrange relaxation similar to SAC~\citep{haarnoja2018soft}. 

\mypar{Actor loss with a single critic}
While the critic optimization uses the double Q heuristic for numerical stability, the actor loss only uses one of the critics (we select $f_1$). This actor loss is similar to TD3~\citep{fujimoto2018addressing}, but different from SAC~\citep{haarnoja2018soft} which takes $\min_{i=1,2} f_i(s,a)$ as the objective. This design choice is critical to enable \algo's IL behavior when $\beta$ is low. On the contrary,
using the SAC-style loss produces instability for small $\beta$, with the actor loss oscillating in a limit cycle between the two critics.

%



%

\section{Experiments} \label{sec:experiments}

We test the effectiveness of \algo (\cref{alg:atac (practice)}) in terms of performance and robust policy improvement using the D4RL offline RL benchmark's continuous control domains~\citep{fu2020d4rl}. More details are given in \cref{sec:exp details}.

\mypar{Setup and hyperaparameter selection}
We compare \algo (\cref{alg:atac (practice)}) with recent offline RL algorithms CQL~\citep{kumar2020conservative}, COMBO~\citep{yu2021combo}, TD3+BC~\citep{fujimoto2021minimalist}, IQL~\citep{kostrikov2021offline}, as well as the offline IL baseline, behavior cloning (BC). 
We also introduce an absolute pessimism version of \algo (denoted ATAC$_0$), where we replace $\Lcal_{\Dcal_{\textrm{mini}}}(f, \pi)$ in $l_{\textrm{critic}}$ of \cref{alg:atac (practice)} with $f(s_0, \pi)$. ATAC$_0$ can be viewed as a deep learning implementation of the theoretical algorithm PSPI from~\citet{xie2021bellman} with the template of \cref{alg:atac (practice)}.

In \cref{alg:atac (practice)}, we use $\eta_{\textrm{fast}} = 0.0005$ and $\eta_{\textrm{slow}} = 10^{-3} \eta_{\textrm{fast}}$ based on an offline tuning heuristic,
$\tau = 0.005$ from the work of~\citet{haarnoja2018soft}, and $w = 0.5$, \emph{across all domains}. We include an ablation for $w$ later and further details of our setup are given in~\cref{sec:exp details}. 
The regularization coefficient $\beta$ is our only hyperparameter which varies across datasets, based on an online selection.
Specifically, we run 100 epochs of BC for warm start; followed by 900 epochs of \algo, where 1 epoch denotes 2K gradient updates.
For each dataset, we report the median results over 10 random seeds. Since \algo does not have guarantees on last-iterate convergence, we report also the results of both the last iterate (denoted as \algo and ATAC$_0$) and the best checkpoint (denoted as ATAC$^*$ and ATAC$_0^*$) selected among $9$ checkpoints (each was made every 100 epochs).
The hyperparameter $\beta$ is picked separately for \algo, ATAC$_0$,  ATAC$^*$ and ATAC$_0^*$.


\mypar{Comparison with offline RL baselines}
Overall the experimental results in \cref{tab:main exp results} show that ATAC and ATAC$^*$ outperform other model-free offline RL baselines consistently and model-based method COMBO mostly.
Especially significant improvement is seen in \emph{walker2d-medium, walker2d-medium-replay, hopper-medium-replay and pen-expert}, although the performance is worse than COMBO and CQL in the \emph{halfhcheetah-rand}. It turns out that our fixed learning rate parameter does not result in sufficient convergence of \algo on this domain.
Our adaptation of PSPI (i.e. ATAC$_0$ and ATAC$_0^*$) is remarkably competitive with state-of-the-art baselines. This is the first empirical evaluation of PSPI, which further demonstrates the effectiveness of our design choices in \cref{alg:atac (practice)}. However, ATAC$_0$ and ATAC$_0^*$ perform worse than ATAC and ATAC$^*$, except for \emph{pen-human, door-human, and pen-cloned}.
In \cref{sec:exp details} we show \algo and ATAC$_0$'s variability of performance across seeds by adding $25\%$ and $75\%$ quantiles of scores across 10 random seeds. (For baselines we only have scalar performance from the published results.)

\mypar{Robust policy improvement}
We study whether the practical version of \algo also enjoys robust policy improvement as \cref{prop:safe_pi} proves for the theoretical version. We show how ATAC$^*$ performs with various $\beta$ values in \cref{fig:robust PI} on hopper. The results are consistent with the theoretical prediction in \cref{prop:safe_pi}: \algo robustly improves upon the behavior policy almost for all $\beta$ except very large ones.
For large $\beta$, \cref{prop:safe_pi} shows that the finite-sample statistical error 
dominates the bound.
\algo does, however, not improve from the behavior policy on  \emph{*-human} and \emph{*-cloned} even for well-tuned $\beta$; in fact, none of the offline RL algorithms does.
We suspect that this is due to the failure of the realizability assumption $\mu\in\Pi$, as these datasets contain human demonstrations which can be non-Markovian. We include the variation of results across $\beta$ for all datasets as well as statistics of robust policy improvement across $\beta$ and iterates in
\cref{sec:exp details}.
This robust policy improvement property of \algo means that practitioners can tune the performance of \algo by starting with $\beta=0$ and gradually increasing $\beta$ until the performance drops, without ever deploying a policy significantly worse than the previous behavior policy.

\mypar{Ablation of DQRA loss}
We show that the optimization stability from the DQRA loss is a key contributor to \algo's performance by an ablation. We run \algo with various $w$ on \emph{hopper-medium-replay}. When $w=1$ (i.e. using conventional bootstrapping with double Q), the Bellman minimization part becomes unstable and the TD error $\Ecal_{\Dcal}^{\textrm{td}}(f,f,\pi)$ diverges. Using just the residual gradient ($w=0$), while being numerical stable, leads to bad policy performance as also observed in the literature~\citep{schoknecht2003td,wang2021convergent}.
For $w\in(0,1)$, the stability and performance are usually significantly better than $w\in\{0,1\}$. For simplicity, we use $w=0.5$ in our experiments.

\section{Discussion and Conclusion}
We propose the concept of relative pessimism for offline RL and use it to design a new algorithm \algo based on a Stackelberg game formulation. \algo enjoys strong guarantees comparable to prior theoretical works, with an additional advantage of robust policy improvement due to relative pessimism. Empirical evaluation confirms the theoretical predictions and demonstrates \algo's state-of-the-art performance on D4RL offline RL benchmarks.

\algo shows a natural bridge between IL and offline RL. From its perspective, IL is an offline RL problem with the largest uncertainty on the value function (since IL does not have reward information), as captured by setting $\beta=0$ in \algo. In this case, the best policy under relative pessimism is to mimic the behavior policy exactly; otherwise, there is always a scenario within the uncertainty where the agent performs worse than the behavior policy.
Only by considering the reduced uncertainty due to labeled rewards, it becomes possible for offline RL to learn a policy that strictly improves over the behavior policy.
Conversely, we can view IL as the most pessimistic offline RL algorithm, which ignores the information in the data reward labels. Indeed IL does not make assumption on the data coverage, which is the core issue offline RL attempts to solve.
We hope that this insightful connection can encourage future research on advancing IL and offline RL.

Finally, we remark on some limitations of \algo. While \algo has strong theoretical guarantees with general function approximators, it comes with a computational cost that its adversarial optimization problem (like that of~\citet{xie2021bellman}) is potentially harder to solve than alternative offline RL approaches based on dynamic programming in a fixed pessimism MDP~\citep{jin2020pessimism,liu2020provably,fujimoto2021minimalist,kostrikov2021offline}. For example, in our theoretical algorithm (\cref{alg:atac (theory)}), we require having a no-regret policy optimization oracle (\cref{def:no-regret}), which we only know is provably time and memory efficient for linear function approximators and softmax policies~\citep{xie2021bellman}.\footnote{When using nonlinear function approximators, the scheme there require memory linear in $K$.}
This extra computational difficulty also manifests in the IL special case of \algo (i.e. $\beta=0$): \algo reduces to IPM-minimization or Wasserstein-GAN for IL which requires harder optimization than BC based on maximum likelihood estimation, though the adversarial training version can produce a policy of higher quality. How to strike a better balance between the quality of the objective function and its computational characteristics is an open question. 

\section*{Acknowledgment}
NJ acknowledges funding support from ARL Cooperative Agreement W911NF-17-2-0196, NSF IIS-2112471, NSF CAREER IIS-2141781, and Adobe Data Science Research Award.

\bibliography{ref}
\bibliographystyle{icml2022}

\clearpage
\appendix
\onecolumn
\allowdisplaybreaks
\section{Related Works} \label{sec:related}

There is a rich literature on offline RL with function approximation when the data distribution $\mu$ is sufficiently rich to cover the state-action distribution $d^\pi$ for any $\pi\in\Pi$~\citep{antos2008learning,munos2003error,munos2008finite,farahmand2010error,chen2019information,xie2020q}. However, this is a prohibitive assumption in practice where the data distribution is typically constrained by the quality of available policies, safety considerations and existing system constraints which can lead it to have a significantly narrower coverage. Based on this observation, there has been a line of recent works in both the theoretical and empirical literature that systematically consider datasets with inadequate coverage.

The methods designed for learning without coverage broadly fall in one of two categories. Many works adopt the \emph{behavior regularization} approach, where the learned policy is regularized to be close to the behavior policy in states where adequate data is not observed. On the theoretical side, some works~\citep{laroche2019safe,kumar2019stabilizing,fujimoto2018addressing} provide \emph{safe policy improvement} guarantees, meaning that the algorithms always do at least as well as the behavior policy, while improving upon it when possible. These and other works~\citep{wu2019behavior,fujimoto2021minimalist} also demonstrate the benefits of this principle in comprehensive empirical evaluations.

A second class of methods follow the principle of \emph{pessimism in the face of uncertainty}, and search for a policy with the best value under all possible scenarios consistent with the data. Some papers perform this reasoning in a model-based manner~\citep{kidambi2020morel,yu2020mopo}. In the model-free setting, \citet{liu2020provably} define pessimism by truncating Bellman backups from states with limited support in the data and provide theoretical guarantees for the function approximation setting when the behavior distribution $\mu$ is known or can be easily estimated from samples, along with proof-of-concept experiments. The need to estimate $\mu$ has been subsequently removed by several recent works in both linear~\citep{jin2020pessimism,zanette2021provable} and non-linear~\citep{xie2021bellman, uehara2021representation} settings.

Of these, the work of~\citet{xie2021bellman} is the closest to this paper. Their approach optimizes a maximin objective where the maximization is over policies and minimization over all $f\in\Fcal$ which are Bellman-consistent for that policy under the data distribution. Intuitively, this identifies an $\Fcal$-induced lower bound for the value of each policy through the Bellman constraint and maximizes that lower bound. They also develop a regularized version more amenable to practical implementation, but provide no empirical validation of their approach. While the optimization of a pessimistic estimate of $J(\pi)$ results in a good policy with well-chosen hyperparameters, we argue  that maximizing an alternative lower bound on the relative performance difference $J(\pi) - J(\mu)$ is nearly as good in terms of the absolute quality of the returned policy with well-chosen hyperparameters, but additionally improves upon the behavior policy for all possible choices of certain hyperparameters.

On the empirical side, several recent  approaches~\citep{kumar2020conservative,yu2021combo,kostrikov2021offline} show promising empirical results for pessimistic methods. Many of these works consider policy iteration-style approaches where the policy class is implicitly defined in terms of a critic (e.g. through a softmax), whereas we allow explicit specification of both actor and critic classes. Somewhat related to our approach, the CQL algorithm~\citep{kumar2020conservative} trains a critic $Q$ by maximizing the combination of a lower bound on $J(\pi_Q) - J(\mu)$, where $\pi_Q$ is an implicit policy parameterized by $Q$, along with a Bellman error term for the current actor policy. The actor is trained with respect to the resulting critic. Lacking a clear objective like~\Eqref{eq:game_formuation}, this approach does not enjoy the robust policy improvement or other theoretical guarantees we establish in this paper. (We provide a detailed comparison with CQL in \cref{sec:comparison with CQL})
More generally, our experiments show that several elements of the theoretical design and practical implementation of our algorithm \algo allow us to robustly outperform most of these baselines in a comprehensive evaluation.



\section{Guarantees of Theoretical Algorithm}
\label{app:theo_proofs}
In this section, we provide the guarantees of theoretical algorithm including the the results provided in Section~\ref{sec:theoretical_algo}.



\subsection{Concentration Analysis}

This section provides the main results regarding $\Ecal_\Dcal(f,\pi)$ and its corresponding Bellman error. The results in this section are analogs of the results of~\citet[Appendix A]{xie2021bellman}, but we use covering numbers to provide finer characteristics of the concentration. We provide the background of covering number as follows.
\begin{definition}[$\varepsilon$-covering number]
An $\varepsilon$-cover of a set $\Fcal$ with respect to a metric $\rho$ is a set $\{g_1, \dotsc, g_n\} \subseteq \Fcal$, such that for each $g \in \Fcal$, there exists some $g_i \in \{g_1, \dotsc, g_n\}$ such that $\rho(g,g_i) \leq \varepsilon$.
We define the $\varepsilon$-covering number of a set $\Fcal$ under metric $\rho$, $\Ncal(\Fcal,\varepsilon,\rho)$, to be the the cardinality of the smallest $\varepsilon$-cover.
\end{definition}

Further properties of covering number can be found in standard textbooks~\citep[see, e.g.,][]{wainwright2019high}. In this paper, we will apply the $\varepsilon$-covering number on both function class $\Fcal$ and policy class $\Pi$.
For the function class, we use the following metric
\begin{align}
\label{eq:def_infnorm}
\rho_{\Fcal}(f_1,f_2) \coloneqq &~ \|f_1 - f_2\|_\infty = \sup_{(s,a) \in \Scal \times \Acal} |f_1(s,a) - f_2(s,a)|.
\end{align}
We use $\Ncal_\infty(\Fcal,\varepsilon)$ to denote the $\varepsilon$-covering number of $\Fcal$ w.r.t.~metric $\rho_{\Fcal}$ for simplicity.

Similarly, for the policy class, we define the metric as follows
\begin{align}
\label{eq:def_inf1norm}
\rho_\Pi(\pi_1,\pi_2) \coloneqq \|\pi_1 - \pi_2\|_{\infty,1} = \sup_{s \in \Scal} \|\pi_1(\cdot|s) - \pi_2(\cdot|s)\|_1,
\end{align}
and we use $\Ncal_{\infty,1}(\Pi,\varepsilon)$ to denote the $\varepsilon$-covering number of $\Pi$ w.r.t.~metric $\rho_\Pi$ for simplicity.

The following two theorems are the main results of this concentration analysis.

\begin{theorem}
\label{thm:version_space}
For any $\pi \in \Pi$, let $f_\pi$ be defined as follows,
\begin{align*}
f_\pi \coloneqq &~ \argmin_{f \in \Fcal}\sup_{\text{admissible }\nu} \left\|f - \Tcal^\pi f\right\|_{2,\nu}^2.
\end{align*}
Then, for $\Ecal_\Dcal(f_\pi,\pi)$ (defined in \Eqref{eq:defmsbope}), the following holds with probability at least $1-\delta$ for all $\pi \in \Pi$:
\begin{align*}
\Ecal_\Dcal(f_\pi,\pi) \leq \Ocal \left( \frac{\Vmax^2 \log \nicefrac{|\Ncal_\infty(\Fcal,\nicefrac{\Vmax}{N})||\Ncal_{\infty,1}(\Pi,\nicefrac{1}{N})|}{\delta}}{N} + \varepsilon_{\Fcal} \right) \eqqcolon \varepsilon_r.
\end{align*}
\end{theorem}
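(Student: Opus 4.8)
The plan is to recognize $\Ecal_\Dcal(f_\pi,\pi)$ as an empirical \emph{excess squared-risk} and to control it by a localized Bernstein argument that exploits the near self-consistency of $f_\pi$. Fix $\pi\in\Pi$ and abbreviate the bootstrap target $y \coloneqq r + \gamma f_\pi(s',\pi)$, whose conditional mean given $(s,a)$ is precisely the Bellman backup $g \coloneqq \Tcal^\pi f_\pi$. Writing $L_\Dcal(h)\coloneqq\E_\Dcal[(h(s,a)-y)^2]$ and $L_\mu(h)\coloneqq\E_\mu[(h(s,a)-y)^2]$, the definition \Eqref{eq:defmsbope} becomes $\Ecal_\Dcal(f_\pi,\pi) = L_\Dcal(f_\pi) - \min_{f'\in\Fcal} L_\Dcal(f')$. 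I would first split this as
\begin{align*}
\Ecal_\Dcal(f_\pi,\pi) = \big(L_\Dcal(f_\pi) - L_\Dcal(g)\big) + \big(L_\Dcal(g) - \min_{f'\in\Fcal} L_\Dcal(f')\big),
\end{align*}
where $g$ need not lie in $\Fcal$ but $L_\Dcal(g)$ is nonetheless well defined.

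The point of this particular split is that each piece has a favourable population counterpart. Since $g$ is the Bayes predictor of $y$, for any $h$ we have $L_\mu(h)-L_\mu(g) = \|h-g\|_{2,\mu}^2$. Thus the first piece has population value $\|f_\pi-g\|_{2,\mu}^2 = \|f_\pi-\Tcal^\pi f_\pi\|_{2,\mu}^2 \le \varepsilon_\Fcal$ by \cref{asm:relz2} (with $\nu=\mu$), while the second piece, which equals $\max_{f'\in\Fcal}\big(L_\Dcal(g)-L_\Dcal(f')\big)$, has population value $-\min_{f'\in\Fcal}\|f'-g\|_{2,\mu}^2 \le 0$. Crucially, this uses only realizability of $f_\pi$ (so that $f_\pi$ is an approximate minimizer of $L_\mu$) and never completeness. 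I would then record the variance-to-mean relationship at the heart of the fast rate: since $(h-y)^2-(g-y)^2 = (h-g)(h+g-2y)$ with $|h+g-2y| = \Ocal(\Vmax)$, each summand has variance at most $\Ocal(\Vmax^2)\|h-g\|_{2,\mu}^2$, i.e. proportional to its own mean.

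Next I would turn both pieces into high-probability bounds by Bernstein's inequality together with a union bound over covers, and then use AM-GM to localize. Writing $\zeta \coloneqq \log(|\text{cover}|/\delta)/N$, Bernstein gives deviations of order $\sqrt{\text{Var}\cdot\zeta} + \Vmax^2\zeta$, and the inequality $\sqrt{\Vmax^2 \|h-g\|_{2,\mu}^2\,\zeta}\le \tfrac12\|h-g\|_{2,\mu}^2 + \Ocal(\Vmax^2\zeta)$ absorbs the square-root term into the (signed) mean. For the first piece this yields $L_\Dcal(f_\pi)-L_\Dcal(g) \le \Ocal(\varepsilon_\Fcal + \Vmax^2\zeta)$; for the second piece, each fixed $f'$ satisfies $L_\Dcal(g)-L_\Dcal(f') \le -\|f'-g\|_{2,\mu}^2 + \tfrac12\|f'-g\|_{2,\mu}^2 + \Ocal(\Vmax^2\zeta) \le \Ocal(\Vmax^2\zeta)$, and taking the maximum over $f'$ (the $\min$ inside $L_\Dcal$) bounds the whole piece by $\Ocal(\Vmax^2\zeta)$. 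Summing the two gives the claimed $\Ocal(\varepsilon_\Fcal + \Vmax^2\zeta)$.

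The remaining and, to my mind, most delicate point is the uniformity over $\pi$: the map $\pi\mapsto f_\pi$ is defined through an $\argmin$ and is not Lipschitz, so one cannot simply cover $\{f_\pi\}$. I would sidestep this by proving the Bernstein bounds \emph{uniformly over all triples} $(f,f',\pi)\in\Fcal\times\Fcal\times\Pi$ --- treating $f$ as a stand-in for $f_\pi$, $g=\Tcal^\pi f$ as its (analysis-only) backup, and $f'$ as the competitor --- and only invoking realizability at the very end when instantiating $f = f_\pi$. This requires covering $\Fcal$ twice (for $f$ and $f'$) in $\|\cdot\|_\infty$ at radius $\Vmax/N$ and $\Pi$ once in $\|\cdot\|_{\infty,1}$ at radius $1/N$, so $|\text{cover}| = \Ncal_\infty(\Fcal,\Vmax/N)^2\,\Ncal_{\infty,1}(\Pi,1/N)$, whose logarithm matches the stated bound up to the constant hidden in $\Ocal(\cdot)$; the chosen radii make the discretization error $\Ocal(\Vmax^2/N)$, which is absorbed. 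Assembling the pieces and taking a final union bound over the two high-probability events completes the argument.
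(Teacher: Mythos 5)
Your argument is correct and reaches the stated $\Ocal(\varepsilon_\Fcal + \Vmax^2\zeta)$ rate, but it is organized differently from the paper's. The paper does not re-derive the bound directly: it inherits the proof of Theorem A.1 of \citet{xie2021bellman}, which treats $\Ecal_\Dcal(f_\pi,\pi)$ as a \emph{single} empirical difference between the squared TD losses of $f_\pi$ and of the empirical minimizer $\widehat{g}\in\Fcal$, applies the two-function Bernstein bound (\cref{lem:bernstein_general}) to the pair $(f_\pi,\widehat{g})$ --- whose deviation term scales with $\|f_\pi-\widehat{g}\|_{2,\mu}$ --- and therefore needs the auxiliary \cref{lem:msbosol} to certify that $\widehat{g}$ is $\Ocal(\Vmax\sqrt{\zeta}+\sqrt{\varepsilon_\Fcal})$-close to $f_\pi$ before that deviation can be absorbed by AM--GM. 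You instead split through the out-of-class Bayes predictor $\Tcal^\pi f_\pi$ and localize each of the two pieces directly against its own population mean, which dispenses with \cref{lem:msbosol} entirely and is arguably more self-contained; the price is that your uniform Bernstein statement must cover pairs involving $\Tcal^\pi f$ rather than only pairs inside $\Fcal$, which you handle correctly by covering $(f,f',\pi)$ and noting that $\Tcal^\pi f$ is determined by $(f,\pi)$, so the cover size $\Ncal_\infty(\Fcal,\nicefrac{\Vmax}{N})^2\,\Ncal_{\infty,1}(\Pi,\nicefrac{1}{N})$ matches the stated logarithm up to constants. The paper's routing is motivated by reuse --- \cref{lem:bernstein_general} and \cref{lem:msbosol} also drive \cref{thm:mspo2be} --- whereas your decomposition is local to this theorem. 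Both arguments invoke realizability in the same way, via $\|f_\pi-\Tcal^\pi f_\pi\|_{2,\mu}^2\le\varepsilon_\Fcal$, and so both share the paper's tacit requirement that $\mu$ be among the admissible distributions in \cref{asm:relz2}.
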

%

We now show that $\Ecal_\Dcal(f,\pi)$ could effectively estimate $\|f - \Tcal^\pi f \|_{2,\mu}^2$.
\begin{theorem}
\label{thm:mspo2be}
With probability at least $1-\delta$, for any $\pi \in \Pi$, $f \in \Fcal$,
\begin{align}
\label{eq:upbdmsbe_ori}
\| f - \Tcal^\pi f\|_{2,\mu} - \sqrt{\Ecal_\Dcal(f,\pi)} \leq \Ocal \left( \Vmax \sqrt{\frac{\log \nicefrac{|\Ncal_\infty(\Fcal,\nicefrac{\Vmax}{N})||\Ncal_{\infty,1}(\Pi,\nicefrac{1}{N})|}{\delta}}{N}} + \sqrt{\varepsilon_{\Fcal,\Fcal}}\right) .
\end{align}
\end{theorem}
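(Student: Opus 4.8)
The plan is to prove the equivalent squared-norm statement
$\|f - \Tcal^\pi f\|_{2,\mu}^2 \le \Ecal_\Dcal(f,\pi) + \Ocal\big(\Vmax^2 \iota/N + \varepsilon_{\Fcal,\Fcal}\big)$
uniformly over $(f,\pi)\in\Fcal\times\Pi$, where $\iota = \log\big(|\Ncal_\infty(\Fcal,\Vmax/N)|\,|\Ncal_{\infty,1}(\Pi,1/N)|/\delta\big)$, and then take square roots using $\sqrt{a+b}\le\sqrt a+\sqrt b$ to recover the claim. The one subtlety I must respect is that the leading coefficient on $\sqrt{\Ecal_\Dcal(f,\pi)}$ has to be exactly $1$: since $\sqrt{\Ecal_\Dcal(f,\pi)}$ is itself only $\Ocal(\Vmax)$ and does not decay with $N$, any spurious constant factor above $1$ could not be absorbed into the error term. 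This forces a completing-the-square argument in place of a naive AM--GM split.

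First I would reduce the problem to a deviation bound. Fix $(f,\pi)$ and set the regression target $y := r + \gamma f(s',\pi)$, so that $\E[y\mid s,a] = (\Tcal^\pi f)(s,a)$. Let $g^\star := \argmin_{g\in\Fcal}\|g - \Tcal^\pi f\|_{2,\mu}^2$ be the population minimizer and $\widehat g$ the empirical one, and define $Z := (f-y)^2 - (g^\star-y)^2$. A bias--variance decomposition, in which the conditional variance of $y$ cancels between $f$ and $g^\star$, gives the population identity $\E_\mu[Z] = \|f-\Tcal^\pi f\|_{2,\mu}^2 - \|g^\star-\Tcal^\pi f\|_{2,\mu}^2$, so Assumption~\ref{ass:complete} yields $\E_\mu[Z] \ge \|f-\Tcal^\pi f\|_{2,\mu}^2 - \varepsilon_{\Fcal,\Fcal}$. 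On the empirical side, optimality of $\widehat g$ over $\Fcal$ gives $\E_\Dcal[(\widehat g - y)^2]\le \E_\Dcal[(g^\star - y)^2]$, hence $\Ecal_\Dcal(f,\pi) \ge \E_\Dcal[Z]$. It therefore suffices to lower bound $\E_\Dcal[Z]$ by $\E_\mu[Z]$ up to a deviation term.

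The core step is a variance-adaptive uniform deviation bound. Factoring $Z = (f-g^\star)(f+g^\star-2y)$ and using $f,g^\star,y\in[0,\Vmax]$ gives $|Z|\le \Ocal(\Vmax^2)$ and, crucially, $\mathrm{Var}(Z)\le \E[Z^2]\le \Ocal(\Vmax^2)\,\|f-g^\star\|_{2,\mu}^2 \le \Ocal(\Vmax^2)\big(\|f-\Tcal^\pi f\|_{2,\mu}+\sqrt{\varepsilon_{\Fcal,\Fcal}}\big)^2$ by the triangle inequality. I would then apply a one-sided Bernstein inequality with a union bound over an $(\Vmax/N)$-cover of $\Fcal\times\Fcal$ (covering both $f$ and $g^\star$, contributing $2\log|\Ncal_\infty(\Fcal,\Vmax/N)|$) and a $(1/N)$-cover of $\Pi$, with the discretization error controlled by the $\Ocal(\Vmax)$-Lipschitzness of the per-sample loss in $\|\cdot\|_\infty$ and $\|\cdot\|_{\infty,1}$ (each net induces only an $\Ocal(\Vmax^2/N)$ slack). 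Writing $B:=\|f-\Tcal^\pi f\|_{2,\mu}$ and $c:=\Vmax\sqrt{\iota/N}$, this produces $\E_\mu[Z]-\E_\Dcal[Z]\le \Ocal\big(c(B+\sqrt{\varepsilon_{\Fcal,\Fcal}})+c^2\big)$.

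Chaining the three facts gives $\Ecal_\Dcal(f,\pi) \ge B^2 - \Ocal(cB) - \Ocal(\varepsilon_{\Fcal,\Fcal}+c^2)$. Completing the square as $B^2 - \Ocal(cB) = (B-\Ocal(c))^2 - \Ocal(c^2)$ preserves the coefficient $1$ on $B$ and yields $(B-\Ocal(c))^2 \le \Ecal_\Dcal(f,\pi) + \Ocal(\varepsilon_{\Fcal,\Fcal}+c^2)$; taking square roots and using subadditivity gives $B \le \sqrt{\Ecal_\Dcal(f,\pi)} + \Ocal(c) + \Ocal(\sqrt{\varepsilon_{\Fcal,\Fcal}})$, which is exactly the stated bound after substituting $c=\Vmax\sqrt{\iota/N}$. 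I expect the main obstacle to be the fast (variance-adaptive) concentration together with the bookkeeping needed to keep the leading constant on $\sqrt{\Ecal_\Dcal(f,\pi)}$ equal to one: a crude Hoeffding bound (using only $|Z|\le\Ocal(\Vmax^2)$) would give a deviation of order $\Vmax^2\sqrt{\iota/N}$ and hence only an $N^{-1/4}$ rate after square-rooting, so localizing the variance by $\|f-g^\star\|_{2,\mu}$ and completing the square are both essential.
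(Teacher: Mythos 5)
Your proposal is correct and follows essentially the same route as the paper, which proves this theorem by the strategy of Xie et al.'s Theorem A.2 instantiated with Lemma~\ref{lem:bernstein_general}: replace the inner empirical minimizer by the completeness witness $g^\star$ via empirical optimality, apply the variance-localized Bernstein bound to $Z=(f-y)^2-(g^\star-y)^2$ with deviation scaling as $\Vmax\|f-g^\star\|_{2,\mu}\sqrt{\iota/N}+\Vmax^2\iota/N$, and solve the resulting quadratic inequality in $\|f-\Tcal^\pi f\|_{2,\mu}$. Your emphasis on keeping the leading coefficient of $\sqrt{\Ecal_\Dcal(f,\pi)}$ equal to one (via completing the square rather than AM--GM) and on why a Hoeffding-only bound would degrade the rate to $N^{-1/4}$ correctly identifies the reason the paper's Lemma~\ref{lem:bernstein_general} is stated in its variance-adaptive form.
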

When setting $\Ecal_\Dcal(f,\pi) = \varepsilon_r$, \Eqref{eq:upbdmsbe_ori} implies a bound on $\| f - \Tcal^\pi f\|_{2,\mu}$ which we denote as $\sqrt{\varepsilon_b}$ and will be useful later. That is,
\begin{align}
\label{eq:upbdmsbe}
\sqrt{\varepsilon_b} \coloneqq \sqrt{\varepsilon_r} + \Ocal \left( \Vmax \sqrt{\frac{ \log \nicefrac{|\Ncal_\infty(\Fcal,\nicefrac{\Vmax}{N})||\Ncal_{\infty,1}(\Pi,\nicefrac{1}{N})|}{\delta}}{N}} + \sqrt{\varepsilon_{\Fcal,\Fcal}} \right) .
\end{align}

We first provide some complementary lemmas used for proving Theorems~\ref{thm:version_space} and \ref{thm:mspo2be}. The first lemma, Lemma~\ref{lem:bernstein_general}, is the only place where we use concentration inequalities on $\Ecal_{\Dcal}$, and all high-probability statements regarding $\Ecal_{\Dcal}$ follow deterministically from Lemma~\ref{lem:bernstein_general}.


\begin{lemma}
\label{lem:bernstein_general}
With probability at least $1-\delta$, for any $f, g_1, g_2 \in \Fcal$ and $\pi \in \Pi$,
\begin{align*}
&~ \bigg| \left\| g_1 - \Tcal^\pi f\right\|_{2,\mu}^2 - \left\| g_2 - \Tcal^\pi f\right\|_{2,\mu}^2
\\
&~ - \frac{1}{N} \sum_{(s,a,r,s') \in \Dcal} \left(g_1(s,a) - r - \gamma f(s',\pi) \right)^2 + \frac{1}{N} \sum_{(s,a,r,s') \in \Dcal} \left(g_2(s,a)  - r - \gamma f(s',\pi) \right)^2 \bigg|
\\
\leq &~ \Ocal \left( \Vmax \|g_1 - g_2\|_{2,\mu}\sqrt{\frac{ \log \frac{|\Ncal_\infty(\Fcal,\frac{\Vmax}{N})||\Ncal_{\infty,1}(\Pi,\frac{1}{N})|}{\delta}}{N}} + \frac{\Vmax^2 \log \frac{|\Ncal_\infty(\Fcal,\frac{\Vmax}{N})||\Ncal_{\infty,1}(\Pi,\frac{1}{N})|}{\delta}}{N} \right).
\end{align*}
\end{lemma}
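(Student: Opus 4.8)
The plan is to recognize the quantity inside the absolute value as the deviation of an empirical mean from its population mean for a suitably chosen per-sample statistic, and then combine Bernstein's inequality with a covering-number union bound.

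First I would fix $f, g_1, g_2 \in \Fcal$ and $\pi\in\Pi$ and define, for each tuple $(s,a,r,s')$, the random variable
\begin{align*}
X(s,a,r,s') \coloneqq \left(g_1(s,a) - r - \gamma f(s',\pi)\right)^2 - \left(g_2(s,a) - r - \gamma f(s',\pi)\right)^2.
\end{align*}
Factoring the difference of squares gives $X = (g_1(s,a) - g_2(s,a))\left(g_1(s,a) + g_2(s,a) - 2r - 2\gamma f(s',\pi)\right)$. Since under $\mu$ we have $\E[r + \gamma f(s',\pi)\mid s,a] = (\Tcal^\pi f)(s,a)$, taking expectations yields $\E_\mu[X] = \|g_1 - \Tcal^\pi f\|_{2,\mu}^2 - \|g_2 - \Tcal^\pi f\|_{2,\mu}^2$, so the expression inside the absolute value in the lemma is \emph{exactly} $\E_\mu[X] - \tfrac1N\sum_{\Dcal} X$.

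Next I would control the two moments Bernstein needs. The second factor of $X$ lies in an interval of width $\Ocal(\Vmax)$ (using $0 \le g_i \le \Vmax$, $0 \le r \le \Rmax \le \Vmax$, and $0 \le f(s',\pi) \le \Vmax$), so $|X| \le \Ocal(\Vmax^2)$ and, more usefully, $\Var_\mu[X] \le \E_\mu[X^2] \le \Ocal(\Vmax^2)\,\E_\mu[(g_1 - g_2)^2] = \Ocal(\Vmax^2 \|g_1 - g_2\|_{2,\mu}^2)$. Bernstein's inequality then gives, for this fixed quadruple, with probability $1-\delta$, the claimed bound with $\log\tfrac1\delta$ in place of the covering-number logarithm. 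To make the statement uniform, I would union-bound over an $\tfrac{\Vmax}{N}$-cover of $\Fcal$ in $\rho_\Fcal$ (three copies, for $f$, $g_1$, $g_2$) and a $\tfrac1N$-cover of $\Pi$ in $\rho_\Pi$, which replaces $\log\tfrac1\delta$ by $\Ocal\!\left(\log\tfrac{|\Ncal_\infty(\Fcal,\Vmax/N)|\,|\Ncal_{\infty,1}(\Pi,1/N)|}{\delta}\right)$, since the three copies only contribute a constant factor inside the log. The final step is the discretization argument: for arbitrary $f,g_1,g_2,\pi$ I would pass to the nearest net elements and bound the perturbation of each squared term. Each squared loss is $\Ocal(\Vmax)$-Lipschitz in $g$ and in $f$ (in $\ell_\infty$), and because $f(s',\pi)$ is linear in $\pi$ with $\|f\|_\infty \le \Vmax$, it is $\Ocal(\Vmax^2)$-Lipschitz in $\pi$ (in $\|\cdot\|_{\infty,1}$); with net radii $\Vmax/N$ and $1/N$, every induced change in the empirical and population averages, as well as in $\|g_1-g_2\|_{2,\mu}$, is $\Ocal(\Vmax^2/N)$ and is absorbed into the additive $\tfrac{\Vmax^2\log(\cdots)}{N}$ term.

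The main obstacle I anticipate is the bookkeeping in the discretization step: I must verify that the variance-dependent term retains the \emph{true} $\|g_1 - g_2\|_{2,\mu}$ (rather than the net point's value) after the perturbation, and that none of the covering errors degrade the leading $\sqrt{\cdot/N}$ rate. Both hinge on the perturbations being genuinely $\Ocal(\Vmax^2/N)$, matching the lower-order term already present in the bound. Everything else is a routine instantiation of Bernstein's inequality combined with the covering net.
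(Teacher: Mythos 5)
Your proposal is correct and follows essentially the same route as the paper's proof: factor the difference of squared TD losses as $(g_1-g_2)(g_1+g_2-2r-2\gamma f(s',\pi))$, identify its conditional expectation with the population Bellman-residual difference, apply Bernstein with the variance bound $\Ocal(\Vmax^2\|g_1-g_2\|_{2,\mu}^2)$ union-bounded over the $\nicefrac{\Vmax}{N}$- and $\nicefrac{1}{N}$-covers, and absorb the $\Ocal(\Vmax\varepsilon_1+\Vmax^2\varepsilon_2)$ discretization perturbations (including the $2\varepsilon_1$ shift in $\|g_1-g_2\|_{2,\mu}$) into the lower-order $\nicefrac{\Vmax^2\log(\cdot)}{N}$ term. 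The bookkeeping concerns you flag are exactly the ones the paper resolves, and in the same way.
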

\begin{proof}[\bf\em Proof of Lemma~\ref{lem:bernstein_general}]

This proof follows a similar approach as the proof of~\citet[Lemma A.4]{xie2021bellman}, but ours is established based on a more refined concentration analysis via covering number. We provide the full detailed proof here for completeness.
By a standard calculation,
\begin{align}
&~ \frac{1}{N} \sum_{(s,a,r,s') \in \Dcal} \left(g_1(s,a) - r - \gamma f(s',\pi) \right)^2 - \frac{1}{N} \sum_{(s,a,r,s') \in \Dcal} \left(g_2(s,a)  - r - \gamma f(s',\pi) \right)^2
\nonumber
\\
= &~ \frac{1}{N} \sum_{(s,a,r,s') \in \Dcal} \left( \left(g_1(s,a) - r - \gamma f(s',\pi) \right)^2 - \left(g_2(s,a)  - r - \gamma f(s',\pi) \right)^2 \right)
\nonumber
\\
\label{eq:empmsbo}
= &~ \frac{1}{N} \sum_{(s,a,r,s') \in \Dcal} \left( \left(g_1(s,a) - g_2(s,a) \right) \left(g_1(s,a) + g_2(s,a)  - 2 r - 2 \gamma f(s',\pi) \right) \right).
\end{align}

Similarly, letting $\mu \times (\Pcal,R)$ denote the distribution $(s,a) \sim \mu, r = R(s,a), s' \sim \Pcal(\cdot|s,a)$, we have
\begin{align}
&~ \E_{\mu\times(\Pcal,R)} \left[\left(g_1(s,a) - r - \gamma f(s',\pi) \right)^2\right] - \E_{\mu\times(\Pcal,R)} \left[\left(g_2(s,a) - r - \gamma f(s',\pi) \right)^2 \right] \nonumber
\\
\overset{\text{(a)}}{=} &~ \E_{\mu\times(\Pcal,R)} \left[ \left(g_1(s,a) - g_2(s,a) \right) \left(g_1(s,a) + g_2(s,a)  - 2 r - 2 \gamma f(s',\pi) \right) \right] \nonumber
\\
= &~ \E_\mu \left[ \E \left[ \left(g_1(s,a) - g_2(s,a) \right) \left(g_1(s,a) + g_2(s,a)  - 2 r - 2 \gamma f(s',\pi) \right) \middle| s,a \right] \right] \nonumber
\\
\label{eq:popmsbo_middle}
= &~ \E_\mu \left[  \left(g_1(s,a) - g_2(s,a) \right) \left(g_1(s,a) + g_2(s,a)  - 2 \left(\Tcal^\pi f\right)(s,a) \right) \right]
\\
\label{eq:popmsbo}
\overset{\text{(b)}}{=} &~ \E_{\mu} \left[\left(g_1(s,a) - \left(\Tcal^\pi f\right)(s,a) \right)^2\right] - \E_{\mu} \left[\left(g_2(s,a) - \left(\Tcal^\pi f\right)(s,a) \right)^2 \right],
\end{align}
where (a) and (b) follow from the similar argument to \Eqref{eq:empmsbo}.

By using \Eqref{eq:empmsbo} and \Eqref{eq:popmsbo}, we know
\begin{align}
&~ \E_{\mu\times(\Pcal,R)} \left[ \frac{1}{N} \sum_{(s,a,r,s') \in \Dcal} \left(g_1(s,a) - r - \gamma f(s',\pi) \right)^2 - \frac{1}{N} \sum_{(s,a,r,s') \in \Dcal} \left(g_2(s,a)  - r - \gamma f(s',\pi) \right)^2 \right] \nonumber
\\
= &~\E_{\mu} \left[\left(g_1(s,a) - \left(\Tcal^\pi f\right)(s,a) \right)^2\right] - \E_{\mu} \left[\left(g_2(s,a) - \left(\Tcal^\pi f\right)(s,a) \right)^2 \right]. \nonumber
\end{align}

Now, let $\Fcal_{\varepsilon_1}$ be an ${\varepsilon_1}$-cover of $\Fcal$ and $\Pi_{\varepsilon_2}$ be an ${\varepsilon_2}$-cover of $\Pi$, so that we know: i) $|\Fcal_{\varepsilon_1}| = \Ncal_\infty(\Fcal,\varepsilon_1)$, $|\Pi_{\varepsilon_2}| = \Ncal_{\infty,1}(\Pi,\varepsilon_2)$; ii) there exist $\ftilde, \gtilde_1, \gtilde_2 \in \Fcal_{\varepsilon_1}$ and $\pitilde \in \Pi_{\varepsilon_2}$, such that $\| f - \ftilde \|_\infty, \| g_1 - \gtilde_1 \|_\infty, \| g_2 - \gtilde_2 \|_\infty \leq \varepsilon_1$ and $\| \pi - \pitilde \|_{\infty,1} \leq \varepsilon_2$, where $\| \cdot \|_\infty$ and $\| \cdot \|_{\infty,1}$ are defined in \Eqref{eq:def_infnorm} and \Eqref{eq:def_inf1norm}.

Then, with probability at least $1-\delta$, for all $f, g_1, g_2 \in \Fcal$, $\pi \in \Pi$, and the corresponding $\ftilde, \gtilde_1, \gtilde_2, \pitilde$,
\begin{align}
&~ \bigg| \E_{\mu} \left[\left(\gtilde_1(s,a) - \left(\Tcal^\pitilde \ftilde\right)(s,a) \right)^2\right] - \E_{\mu} \left[\left(\gtilde_2(s,a) - \left(\Tcal^\pitilde \ftilde\right)(s,a) \right)^2 \right]
\nonumber
\\
&~ - \frac{1}{N} \sum_{(s,a,r,s') \in \Dcal} \left(\gtilde_1(s,a) - r - \gamma \ftilde(s',\pitilde) \right)^2 + \frac{1}{N} \sum_{(s,a,r,s') \in \Dcal} \left(\gtilde_2(s,a)  - r - \gamma \ftilde(s',\pitilde) \right)^2 \bigg|
\nonumber
\\
= &~ \bigg| \E_{\mu} \left[\left(\gtilde_1(s,a) - \left(\Tcal^\pitilde \ftilde\right)(s,a) \right)^2\right] - \E_{\mu} \left[\left(\gtilde_2(s,a) - \left(\Tcal^\pitilde \ftilde\right)(s,a) \right)^2 \right]
\nonumber
\\
&~ - \frac{1}{N} \sum_{(s,a,r,s') \in \Dcal} \left( \left(\gtilde_1(s,a) - \gtilde_2(s,a) \right) \left(\gtilde_1(s,a) + \gtilde_2(s,a)  - 2 r - 2 \gamma \ftilde(s',\pitilde) \right) \right) \bigg|
\nonumber
\\
\leq &~ \sqrt{\frac{4\V_{\mu\times(\Pcal,R)} \left[ \left(\gtilde_1(s,a) - \gtilde_2(s,a) \right) \left(\gtilde_1(s,a) + \gtilde_2(s,a)  - 2 r - 2 \gamma \ftilde(s',\pitilde) \right) \right] \log \frac{|\Ncal_\infty(\Fcal,\varepsilon_1)||\Ncal_{\infty,1}(\Pi,\varepsilon_2)|}{\delta}}{N}}
\nonumber
\\ &~ + \frac{2 \Vmax^2 \log \frac{|\Ncal_\infty(\Fcal,\varepsilon_1)||\Ncal_{\infty,1}(\Pi,\varepsilon_2)|}{\delta}}{3N}, \nonumber
\end{align}
where the first equation follows from \Eqref{eq:popmsbo_middle} and the last inequality follows from the Bernstein's inequality and union bounding over $\Fcal_{\varepsilon_1}$ and $\Pi_{\varepsilon_2}$.

We now upper bound the variance term inside the squareroot of the above expression:
\begin{align*}
&~ \V_{\mu\times(\Pcal,R)} \left[ \left(\gtilde_1(s,a) - \gtilde_2(s,a) \right) \left(\gtilde_1(s,a) + \gtilde_2(s,a)  - 2 r - 2 \gamma \ftilde(s',\pitilde) \right) \right]
\\
\leq &~ \E_{\mu\times(\Pcal,R)} \left[ \left(\gtilde_1(s,a) - \gtilde_2(s,a) \right)^2 \left(\gtilde_1(s,a) + \gtilde_2(s,a)  - 2 r - 2 \gamma \ftilde(s',\pitilde) \right)^2 \right]
\\
\label{eq:msbobernsteinvar}
\leq &~ 4 \Vmax^2\E_{\mu} \left[ \left(\gtilde_1(s,a) - \gtilde_2(s,a) \right)^2 \right].
\end{align*}
where the last inequality follows from the fact of $|\gtilde_1(s,a) + \gtilde_2(s,a)  - 2 r - 2 \gamma \ftilde(s',\pitilde) | \leq 2 \Vmax$.
Therefore, w.p.~$1-\delta$,
\begin{align*}
&~ \bigg| \left\| \gtilde_1 - \Tcal^\pitilde \ftilde\right\|_{2,\mu}^2 - \left\| \gtilde_2 - \Tcal^\pitilde \ftilde\right\|_{2,\mu}^2
\\
&~ - \frac{1}{N} \sum_{(s,a,r,s') \in \Dcal} \left(\gtilde_1(s,a) - r - \gamma \ftilde(s',\pitilde) \right)^2 + \frac{1}{N} \sum_{(s,a,r,s') \in \Dcal} \left(\gtilde_2(s,a)  - r - \gamma \ftilde(s',\pitilde) \right)^2 \bigg|
\\
\leq &~ 4 \Vmax \|\gtilde_1 - \gtilde_2\|_{2,\mu}\sqrt{\frac{ \log \frac{|\Ncal_\infty(\Fcal,\varepsilon_1)||\Ncal_{\infty,1}(\Pi,\varepsilon_2)|}{\delta}}{N}} + \frac{2 \Vmax^2 \log \frac{|\Ncal_\infty(\Fcal,\varepsilon_1)||\Ncal_{\infty,1}(\Pi,\varepsilon_2)|}{\delta}}{3N}.
\end{align*}

By definitions of $\ftilde, \gtilde_1, \gtilde_2$ and $\pitilde$, we know for any $(s,a,r,s')$ tuple,
\begin{align*}
&~ \bigg| \left(g_1(s,a) - r - \gamma f(s',\pi) \right)^2 + \left(g_2(s,a)  - r - \gamma f(s',\pi) \right)^2
\\
&~ - \left(\gtilde_1(s,a) - r - \gamma \ftilde(s',\pitilde) \right)^2 + \left(\gtilde_2(s,a)  - r - \gamma \ftilde(s',\pitilde) \right)^2 \bigg|
= \Ocal(\Vmax\varepsilon_1 + \Vmax^2 \varepsilon_2),
\end{align*}
and
\begin{align*}
\|g_1 - g_2\|_{2,\mu} = &~ \|\gtilde_1 - \gtilde_2 + (g_1 - \gtilde_1) - (g_2 - \gtilde_2)\|_{2,\mu}
\\
\leq &~ \|\gtilde_1 - \gtilde_2\|_{2,\mu} + \|g_1 - \gtilde_1\|_{2,\mu} + \|g_2 - \gtilde_2\|_{2,\mu}
\\
\leq &~ \|\gtilde_1 - \gtilde_2\|_{2,\mu} + 2 \varepsilon_1.
\end{align*}
These implies
\begin{align*}
&~ \bigg| \left\| g_1 - \Tcal^\pi f\right\|_{2,\mu}^2 - \left\| g_2 - \Tcal^\pi f\right\|_{2,\mu}^2
\\
&~ - \frac{1}{N} \sum_{(s,a,r,s') \in \Dcal} \left(g_1(s,a) - r - \gamma f(s',\pi) \right)^2 + \frac{1}{N} \sum_{(s,a,r,s') \in \Dcal} \left(g_2(s,a)  - r - \gamma f(s',\pi) \right)^2 \bigg|
\\
\lesssim &~ \Vmax \|g_1 - g_2\|_{2,\mu}\sqrt{\frac{ \log \frac{|\Ncal_\infty(\Fcal,\varepsilon_1)||\Ncal_{\infty,1}(\Pi,\varepsilon_2)|}{\delta}}{N}} + \frac{\Vmax^2 \log \frac{|\Ncal_\infty(\Fcal,\varepsilon_1)||\Ncal_{\infty,1}(\Pi,\varepsilon_2)|}{\delta}}{N}
\\
&~ + \Vmax \varepsilon_1 \sqrt{\frac{ \log \frac{|\Ncal_\infty(\Fcal,\varepsilon_1)||\Ncal_{\infty,1}(\Pi,\varepsilon_2)|}{\delta}}{N}} + \Vmax\varepsilon_1 + \Vmax^2 \varepsilon_2.
\\
\tag{$x \lesssim y$ means $x \leq C \cdot y$ for some absolute constant $C$}
\end{align*}

Choosing $\varepsilon_1 = \Ocal(\frac{\Vmax}{N})$ and $\varepsilon_2 = \Ocal(\frac{1}{N})$ completes the proof.
\end{proof}


\begin{lemma}
\label{lem:msbosol}
For any $\pi \in \Pi$, let $f_\pi$ and $g$ be defined as follows,
\begin{align*}
f_\pi \coloneqq &~ \argmin_{f \in \Fcal}\sup_{\text{admissible }\nu} \left\|f - \Tcal^\pi f\right\|_{2,\nu}^2
\\
g \coloneqq &~ \argmin_{g' \in \Fcal} \frac{1}{N} \sum_{(s,a,r,s') \in \Dcal} \left(g'(s,a) - r - \gamma f_\pi(s',\pi) \right)^2.
\end{align*}
Then, with high probability,
\begin{align*}
\|f_\pi - g\|_{2,\mu} \leq \Ocal \left(\Vmax \sqrt{\frac{ \log \frac{|\Ncal_\infty(\Fcal,\frac{\Vmax}{N})||\Ncal_{\infty,1}(\Pi,\frac{1}{N})|}{\delta}}{N}} + \sqrt{\varepsilon_{\Fcal}}\right).
\end{align*}
\end{lemma}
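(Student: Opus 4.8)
The plan is to show that both $f_\pi$ and $g$ are close, in the $\|\cdot\|_{2,\mu}$ norm, to the common anchor function $h := \Tcal^\pi f_\pi$, and then to conclude by the triangle inequality. First I would note that by Approximate Realizability (Assumption~\ref{ass:realizable}), since $\mu$ is an admissible distribution, the minimizer $f_\pi$ satisfies $\|f_\pi - h\|_{2,\mu}^2 \leq \sup_{\text{admissible }\nu}\|f_\pi - \Tcal^\pi f_\pi\|_{2,\nu}^2 \leq \varepsilon_\Fcal$. Thus it only remains to control $\|g - h\|_{2,\mu}$.

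To bound $\|g - h\|_{2,\mu}$, I would apply Lemma~\ref{lem:bernstein_general} with $f = f_\pi$, $g_1 = g$, and $g_2 = f_\pi$. Crucially, $g$ minimizes the empirical least-squares objective $\frac{1}{N}\sum_{(s,a,r,s')\in\Dcal}(g'(s,a) - r - \gamma f_\pi(s',\pi))^2$ over $\Fcal$, and $f_\pi \in \Fcal$, so the empirical-loss difference appearing in the lemma is nonpositive and can be dropped. Writing $L := \log\frac{|\Ncal_\infty(\Fcal,\Vmax/N)||\Ncal_{\infty,1}(\Pi,1/N)|}{\delta}$ and $\lambda := \Vmax\sqrt{L/N}$, the lemma then yields, with high probability,
\[
\|g - h\|_{2,\mu}^2 \leq \|f_\pi - h\|_{2,\mu}^2 + \Ocal\big(\lambda\,\|g - f_\pi\|_{2,\mu} + \lambda^2\big).
\]

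The main obstacle is that the deviation term on the right-hand side involves $\|g - f_\pi\|_{2,\mu}$, essentially the very quantity we are trying to bound; this self-referential structure is characteristic of variance-aware (Bernstein) concentration. I would break the circularity with the triangle inequality $\|g - f_\pi\|_{2,\mu} \leq \|g - h\|_{2,\mu} + \|f_\pi - h\|_{2,\mu} \leq \|g - h\|_{2,\mu} + \sqrt{\varepsilon_\Fcal}$, substitute back, and treat the result as a quadratic inequality in the single unknown $X := \|g - h\|_{2,\mu}$, namely $X^2 \leq \varepsilon_\Fcal + \Ocal(\lambda X + \lambda\sqrt{\varepsilon_\Fcal} + \lambda^2)$. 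Solving this quadratic, and using $\sqrt{u+v}\leq\sqrt{u}+\sqrt{v}$ together with $\sqrt{\lambda\sqrt{\varepsilon_\Fcal}}\lesssim \lambda + \sqrt{\varepsilon_\Fcal}$, yields $X \leq \Ocal(\lambda + \sqrt{\varepsilon_\Fcal})$.

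Finally, a last triangle inequality $\|f_\pi - g\|_{2,\mu} \leq \|f_\pi - h\|_{2,\mu} + \|g - h\|_{2,\mu} \leq \sqrt{\varepsilon_\Fcal} + X$ delivers the stated bound $\Ocal(\Vmax\sqrt{L/N} + \sqrt{\varepsilon_\Fcal})$. The only genuinely delicate step is the quadratic self-bounding argument; everything else is bookkeeping resting on the covering-number concentration already supplied by Lemma~\ref{lem:bernstein_general}.
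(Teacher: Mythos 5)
Your proposal is correct and follows essentially the same route as the paper, which simply defers to the proof of Lemma A.5 in \citet{xie2021bellman} with Lemma~\ref{lem:bernstein_general} substituted for their concentration lemma: anchor both $f_\pi$ and $g$ at $\Tcal^\pi f_\pi$, use realizability for the first distance, use the empirical optimality of $g$ to drop the empirical loss difference in Lemma~\ref{lem:bernstein_general}, and resolve the self-bounding quadratic in $\|g-\Tcal^\pi f_\pi\|_{2,\mu}$. Your write-up actually spells out the details the paper leaves to the external reference, and all the steps (in particular the applicability of Lemma~\ref{lem:bernstein_general} with $g_1=g$, $g_2=f_\pi$, $f=f_\pi\in\Fcal$) check out.
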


\begin{proof}[\bf\em Proof of Lemma~\ref{lem:msbosol}]
The proof of this lemma is obtained exactly the same as~\citet[Proof of Lemma A.5]{xie2021bellman}, we we only need to change the use of~\citet[Lemma A.4]{xie2021bellman} to \cref{lem:bernstein_general}. This completes the proof.
\end{proof}


We now ready to prove Theorem \ref{thm:version_space} and Theorem \ref{thm:mspo2be}. Note that the proofs of Theorem \ref{thm:version_space} and Theorem \ref{thm:mspo2be} follow similar approaches as the proof of~\citet[Theorem A.1, Theorem A.2]{xie2021bellman}, and we provide the full detailed proof here for completeness.

\begin{proof}[\bf\em Proof of Theorem~\ref{thm:version_space}]
This proof is obtained by  exactly the same strategy of~\citet[Proof of Theorem A.1]{xie2021bellman}, but we we change to change the corresponding lemmas to the new ones provided above. The correspondence of those lemmas are as follows:~\citep[Lemma A.4]{xie2021bellman} $\to$ \cref{lem:bernstein_general};~\citep[Lemma A.5]{xie2021bellman} $\to$ \cref{lem:msbosol}. This completes the proof.
\end{proof}


\begin{proof}[\bf\em Proof of Theorem~\ref{thm:mspo2be}]
This proof is obtained by the same exactly same strategy of~\citet[Proof of Theorem A.2]{xie2021bellman}, but we we change to change the corresponding lemmas to the new ones provided above. The correspondence of those lemmas are as follows:~\citep[Lemma A.4]{xie2021bellman} $\to$ \cref{lem:bernstein_general};~\citep[Lemma A.5]{xie2021bellman} $\to$ \cref{lem:msbosol}. This completes the proof.
\end{proof}


\subsection{Decomposition of Performance Difference}
\label{sec:decomp_pd}
This section proves \Eqref{th:performance_difference_main}.
We provide a more general version of \Eqref{th:performance_difference_main} with its proof as follows.
\begin{lemma}
\label{th:performance_difference}
Let $\picomp$ be an arbitrary competitor policy, $\pihat \in \Pi$ be some learned policy, and $f$ be an arbitrary function over $\Scal\times\Acal$.  Then we have,
\begin{align*}
&~ J(\picomp) - J(\pihat)
\\
= &~ \frac{1}{1 - \gamma} \left( \E_{\mu} \left[ \left(f - \Tcal^\pihat f\right)(s,a) \right] + \E_{\picomp} \left[ \left(\Tcal^\pihat f - f\right)(s,a) \right] + \E_{\picomp} \left[ f(s,\picomp) - f(s,\pihat) \right] + \Lcal_\mu(\pihat, f) - \Lcal_\mu(\pihat, Q^\pihat)\right).
\end{align*}
\end{lemma}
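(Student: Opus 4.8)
The plan is to reduce everything to a single telescoping (simulation-lemma) identity and then verify the claimed equality by grouping the five terms on the right-hand side into two clusters, each of which collapses to a boundary ($t=0$) contribution. The central tool is: for any state function $g:\Scal\to\mathbb{R}$ and any policy $\pi$,
\begin{align*}
\E_{(s,a)\sim d^\pi}\!\left[ g(s) - \gamma \E_{s'|s,a}[g(s')] \right] = (1-\gamma)\, g(s_0).
\end{align*}
I would prove this by expanding $d^\pi(s,a) = (1-\gamma)\sum_{t\ge0}\gamma^t\Pr(s_t=s,a_t=a)$: the first term gives $(1-\gamma)\sum_{t\ge0}\gamma^t\E[g(s_t)]$ and the discounted next-state term gives $(1-\gamma)\sum_{t\ge1}\gamma^t\E[g(s_t)]$, so the two series telescope to leave only $(1-\gamma)g(s_0)$, using that $s_0$ is deterministic. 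Combining this with $\E_{d^\pi}[R(s,a)]=(1-\gamma)J(\pi)$ yields the action-dependent form $\E_{d^\pi}[(f-\Tcal^\pi f)(s,a)] = (1-\gamma)(f(s_0,\pi)-J(\pi))$ for any $f$.

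Next I would establish the two cluster identities. For the cluster $\E_\picomp[(\Tcal^\pihat f - f)(s,a)] + \E_\picomp[f(s,\picomp)-f(s,\pihat)]$, I would write $\Tcal^\pihat f = \Tcal^\picomp f + \gamma\E_{s'|s,a}[f(s',\pihat)-f(s',\picomp)]$, so the identity applied to $\picomp$ converts $\E_\picomp[(\Tcal^\picomp f - f)(s,a)]$ into $(1-\gamma)(J(\picomp)-f(s_0,\picomp))$, while the correction term $\gamma\E_\picomp[\E_{s'|s,a}[f(s',\pihat)-f(s',\picomp)]]$ is a once-shifted occupancy that cancels against $\E_\picomp[f(s,\picomp)-f(s,\pihat)]$ everywhere except at the $t=0$ boundary, leaving the net value $(1-\gamma)(J(\picomp)-f(s_0,\pihat))$. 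For the cluster $\E_\mu[(f-\Tcal^\pihat f)(s,a)] + \Lcal_\mu(\pihat,f) - \Lcal_\mu(\pihat,Q^\pihat)$, I would cancel the $f(s,a)$ terms and substitute $Q^\pihat=\Tcal^\pihat Q^\pihat$ together with the linearity of $\Tcal^\pihat$; this collapses the whole cluster to $\E_\mu[v(s)-\gamma\E_{s'|s,a}[v(s')]]$ with the state function $v(s):=f(s,\pihat)-Q^\pihat(s,\pihat)$, which equals $(1-\gamma)v(s_0)=(1-\gamma)(f(s_0,\pihat)-J(\pihat))$ by the identity applied to $\mu=d^\mu$. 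Adding the two clusters, the $f(s_0,\pihat)$ terms cancel and I obtain $(1-\gamma)(J(\picomp)-J(\pihat))$; dividing by $1-\gamma$ finishes the proof.

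The main obstacle is the bookkeeping in the first cluster: one must carefully recognize the discounted next-state expectation $\gamma\E_\picomp[\E_{s'|s,a}[\,\cdot\,]]$ as a once-shifted copy of the occupancy measure and match it term-by-term against $\E_\picomp[f(s,\picomp)-f(s,\pihat)]$ so that only the $t=0$ term survives. The other subtle points are that the second cluster relies on the standing assumption $\mu=d^\mu$ (so the $\E_\mu$ terms genuinely telescope along $\mu$-trajectories) and on $s_0$ being deterministic, which is exactly what makes the surviving boundary terms clean evaluations at $s_0$ rather than expectations over an initial distribution.
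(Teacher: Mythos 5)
Your proof is correct and rests on the same core mechanism as the paper's: telescoping the discounted occupancy measure so that only a boundary contribution at the deterministic initial state $s_0$ survives, which the paper packages as the performance difference lemma applied in a fake-reward MDP with reward $R^{f,\pihat}(s,a) = f(s,a) - \gamma\E_{s'|s,a}[f(s',\pihat)]$, under which $f$ is exactly the $Q$-function of $\pihat$. Your two clusters reproduce, after regrouping, the paper's terms (Ia)$+$(Ib) and (II) minus the $\Lcal_\mu(\pihat,Q^\pihat)-\Lcal_\mu(\pihat,f)$ correction, each evaluating to $(1-\gamma)(J(\picomp)-f(s_0,\pihat))$ and $(1-\gamma)(f(s_0,\pihat)-J(\pihat))$ respectively, so the difference from the paper is organizational (direct verification by grouping versus constructive derivation) rather than substantive.
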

\begin{proof}[\bf\em Proof of \cref{th:performance_difference}]

Let $R^{f,\pihat}(s,a) \coloneqq f(s,a) - \gamma \E_{s'|s,a}[f(s',\pihat)]$ be a fake reward function given $f$ and $\pihat$. We use the subscript ``$(\cdot)_{R^{f,\pihat}}$'' to denote functions or operators under the true dynamics but the fake reward $R^{f,\pihat}$. 
Since $f(s,a) = (\Tcal^{\pi}_{R^{f,\pihat}} f)(s,a)$, we know $f \equiv Q^{\pi}_{R^{f,\pihat}}$.

We perform a performance decomposition:
\begin{align*}
    J(\picomp) - J(\pihat)
    = &~  \left( J(\picomp) - J(\mu) \right) - \left( J(\pihat)  - J(\mu) \right)
\end{align*}
and rewrite the second term as
\begin{align*}
    (1-\gamma) \left( J(\pihat)  - J(\mu) \right)
    = &~ \Lcal_\mu(\pihat,Q^\pihat)  \\
    = &~ \Delta(\pihat)  + \Lcal_\mu(\pihat, f)
    \tag{$\Delta(\pihat)  \coloneqq \Lcal_\mu(\pihat,Q^\pihat)  - \Lcal_\mu(\pihat, f)$}
    \\
    = &~ \Delta(\pihat)  + \E_{\mu} [ f(s,\pihat)  - f(s,a) ]
    \\
    = &~ \Delta(\pihat)  + (1 - \gamma) (J_{R^{f,\pihat}}(\pihat) - J_{R^{f,\pihat}}(\mu))
    \tag{by performance difference lemma~\citep{kakade2002approximately}}
    \\
    = &~ \Delta(\pihat)  + (1-\gamma)  Q^\pihat_{R^{f,\pihat}}(s_0,\pihat)  - \E_{\mu}[ R^{\pihat,f}(s,a)]
    \\
    = &~ \Delta(\pihat)  + (1-\gamma)  f(s_0,\pihat)  - \E_{\mu}[ R^{\pihat,f}(s,a)].
    \tag{by $f(\cdot,\cdot) \equiv Q^{\pi}_{R^{f,\pihat}}(\cdot,\cdot)$}
\end{align*}


Therefore,
\begin{align*}
(1 - \gamma)(J(\picomp) - J(\pihat) ) = \underbrace{(1-\gamma) \left( J(\picomp) - f(d_0,\pihat)  \right)}_{\text{(I)}}
  + \underbrace{\left( \E_{\mu}[ R^{\pihat,f}(s,a)] - (1-\gamma) J(\mu) \right)}_{\text{(II)}} - \Delta(\pihat).
\end{align*}

We first analyze (II). We can expand it by the definition of $R^{\pihat,f}$ as follows
\begin{align*}
\text{(II)} = &~ \E_{\mu}[ R^{\pihat,f}(s,a)] - (1-\gamma) J(\mu)
\\
= &~ \E_{\mu}[ R^{\pihat,f}(s,a)- R(s,a)]
\\
= &~ \E_{\mu}[ (f - \Tcal^\pihat f)(s,a)].
\end{align*}

We now write (I) as
\begin{align*}
\text{(I)} = &~ (1-\gamma) \left( J(\picomp) - f(s_0,\pihat)  \right)\\
     = &~ \underbrace{ (1-\gamma) J(\picomp) -  \E_{d^{\picomp}}[ R^{\pihat,f}(s,a) ]}_{\text{(Ia)}} + \underbrace{\E_{d^{\picomp}}[ R^{\pihat,f}(s,a) ] - (1-\gamma)f(s_0,\pihat) }_{\text{(Ib)}}.
\end{align*}

We analyze each term above in the following.
\begin{align*}
\text{(Ib)} = &~ \E_{d^{\picomp}}[ R^{\pihat,f}(s,a) ] - (1-\gamma)f(s_0,\pihat)
\\
= &~ \E_{d^{\picomp}}[ f(s,\picomp) - f(s,\pihat) ].
\end{align*}
On the other hand, we can write
\begin{align*}
\text{(Ia)} = &~ (1-\gamma) J(\picomp) -  \E_{d^{\picomp}}[ R^{\pihat,f} (s,a) ]
\\
= &~ \E_{d^{\picomp}}[  R(s,a) -  R^{\pihat,f}(s,a)]
\\
= &~ \E_{d^{\picomp}}[ (\Tcal^\pihat f - f)(s,a)].
\end{align*}

Combine them all, we have
\begin{align*}
&~ J(\picomp) - J(\pihat)
\\
= &~ \frac{1}{1 - \gamma} \left( \text{(Ia)} + \text{(Ib)} + \text{(II)} - \Delta(\pihat) \right)
\\
= &~ \frac{1}{1 - \gamma} \left( \E_{\mu} \left[ \left(f - \Tcal^\pihat f\right)(s,a) \right] + \E_{\picomp} \left[ \left(\Tcal^\pihat f - f\right)(s,a) \right] + \E_{\picomp} \left[ f(s,\picomp) - f(s,\pihat)  \right] + \Lcal_\mu(\pihat, f) - \Lcal_\mu(\pihat, Q^\pihat) \right).
\end{align*}
This completes the proof. 
\end{proof}

We now prove a general version of \Eqref{th:performance_difference_main} using \cref{th:performance_difference}, which takes into account the approximation errors in the realizability and completeness assumptions (\cref{asm:relz2} and \cref{asm:completeness}).

\begin{lemma}[General Version of \Eqref{th:performance_difference_main}]
\label{th:performance_difference_app}
Let $\picomp$ be an arbitrary competitor policy. Also let $\pi_k$ and $f_k$ be obtained by Algorithm~\ref{alg:atac (theory)} for $k \in [K]$. Then with high probability, for any $k\in[K]$,
\begin{align*}
&~ (1 - \gamma) \left(J(\picomp) - J(\pi_k)\right) \leq \E_{\mu} \left[ f_k - \Tcal^{\pi_k} f_k \right] + \E_{\picomp} \left[ \Tcal^{\pi_k} f_k - f_k \right] +  \E_{\picomp} \left[ f_k(s,\picomp) - f_k(s,\pi_k) \right]
\\
&~ \qquad\quad + \Ocal \left( \Vmax \sqrt{\frac{ \log \nicefrac{|\Ncal_\infty(\Fcal,\nicefrac{\Vmax}{N})||\Ncal_{\infty,1}(\Pi,\nicefrac{1}{N})|}{\delta}}{N}} + \sqrt{\varepsilon_\Fcal}\right) + \beta \cdot \Ocal\left(\frac{\Vmax^2 \log \nicefrac{|\Ncal_\infty(\Fcal,\nicefrac{\Vmax}{N})||\Ncal_{\infty,1}(\Pi,\nicefrac{1}{N})|}{\delta}}{N} + \varepsilon_{\Fcal}\right).
\end{align*}
\end{lemma}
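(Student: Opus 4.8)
The plan is to start from the exact performance-difference identity of \cref{th:performance_difference}, instantiated with $f = f_k$ and $\pihat = \pi_k$ (and the given competitor $\picomp$). This immediately produces the three leading terms $\E_\mu[f_k - \Tcal^{\pi_k}f_k]$, $\E_{\picomp}[\Tcal^{\pi_k}f_k - f_k]$ and $\E_{\picomp}[f_k(s,\picomp) - f_k(s,\pi_k)]$ appearing in the claim, all scaled by $1/(1-\gamma)$. Hence the entire task reduces to upper bounding the single residual term $\Lcal_\mu(\pi_k, f_k) - \Lcal_\mu(\pi_k, Q^{\pi_k})$ by the stated statistical and approximation errors.

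I would control this residual by inserting the realizability witness $f_{\pi_k} \coloneqq \argmin_{f\in\Fcal}\sup_{\text{adm. }\nu}\|f - \Tcal^{\pi_k}f\|_{2,\nu}^2$ from \cref{thm:version_space} and telescoping through four differences: (i) $\Lcal_\mu(\pi_k,f_k) - \Lcal_\Dcal(f_k,\pi_k)$; (ii) $\Lcal_\Dcal(f_k,\pi_k) - \Lcal_\Dcal(f_{\pi_k},\pi_k)$; (iii) $\Lcal_\Dcal(f_{\pi_k},\pi_k) - \Lcal_\mu(\pi_k,f_{\pi_k})$; and (iv) $\Lcal_\mu(\pi_k,f_{\pi_k}) - \Lcal_\mu(\pi_k,Q^{\pi_k})$. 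Terms (i) and (iii) are uniform deviations between $\Lcal_\Dcal$ and $\Lcal_\mu$; since $f(s,\pi)-f(s,a)\in[-\Vmax,\Vmax]$, a Hoeffding bound with a union bound over the $\nicefrac{\Vmax}{N}$- and $\nicefrac{1}{N}$-covers of $\Fcal$ and $\Pi$ yields $\Ocal(\Vmax\sqrt{\log(\nicefrac{|\Ncal_\infty(\Fcal,\nicefrac{\Vmax}{N})||\Ncal_{\infty,1}(\Pi,\nicefrac{1}{N})|}{\delta})/N})$, matching the first error term. Term (ii) is where the critic's optimality enters: by \cref{lin:reg_pes_pe}, $\Lcal_\Dcal(f_k,\pi_k) + \beta\Ecal_\Dcal(f_k,\pi_k) \le \Lcal_\Dcal(f_{\pi_k},\pi_k) + \beta\Ecal_\Dcal(f_{\pi_k},\pi_k)$, and since $\Ecal_\Dcal(\cdot,\cdot)\ge 0$ (it subtracts a minimum) we may drop $\beta\Ecal_\Dcal(f_k,\pi_k)$ and invoke $\Ecal_\Dcal(f_{\pi_k},\pi_k)\le\varepsilon_r$ from \cref{thm:version_space}, producing exactly the $\beta\cdot\Ocal(\cdots)$ term.

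The crux is term (iv), the population value gap between the witness and the true $Q$-function, which I would handle through Bellman residuals alone, avoiding any concentrability. Reusing the fake-reward computation from the proof of \cref{th:performance_difference} gives the identity $\Lcal_\mu(\pi_k,f_{\pi_k}) - \Lcal_\mu(\pi_k,Q^{\pi_k}) = (1-\gamma)(f_{\pi_k}-Q^{\pi_k})(s_0,\pi_k) - \E_\mu[(f_{\pi_k}-\Tcal^{\pi_k}f_{\pi_k})(s,a)]$. The key step is that, writing $P^{\pi_k}$ for the state-action transition operator, $f_{\pi_k}-Q^{\pi_k} = (I-\gamma P^{\pi_k})^{-1}(f_{\pi_k}-\Tcal^{\pi_k}f_{\pi_k})$, so the single-point evaluation telescopes into an occupancy expectation, $(1-\gamma)(f_{\pi_k}-Q^{\pi_k})(s_0,\pi_k) = \E_{d^{\pi_k}}[(f_{\pi_k}-\Tcal^{\pi_k}f_{\pi_k})(s,a)]$. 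Both $d^{\pi_k}$ and $\mu$ are admissible, so Cauchy--Schwarz together with the realizability bound $\|f_{\pi_k}-\Tcal^{\pi_k}f_{\pi_k}\|_{2,\nu}\le\sqrt{\varepsilon_\Fcal}$ controls term (iv) by $\Ocal(\sqrt{\varepsilon_\Fcal})$. Summing (i)--(iv), multiplying by $1/(1-\gamma)$, and absorbing constants yields the claim.

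I expect the main obstacle to be precisely term (iv): one must recast the single-point value gap as the admissible-occupancy average of a Bellman residual so that realizability applies on $d^{\pi_k}$ (and likewise treat $\mu$ as admissible), rather than naively bounding $\|f_{\pi_k}-Q^{\pi_k}\|$ under the non-admissible state-$\mu$/action-$\pi_k$ distribution, which realizability does not cover. A secondary care point is ensuring the two $\Lcal_\Dcal$-to-$\Lcal_\mu$ deviations hold uniformly over $\Fcal\times\Pi$ on the same high-probability event as \cref{thm:version_space}, so that every bound is simultaneously valid for all $k\in[K]$.
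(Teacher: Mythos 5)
Your proposal is correct and follows essentially the same route as the paper's proof: starting from the exact identity of \cref{th:performance_difference}, telescoping the residual $\Lcal_\mu(\pi_k,f_k)-\Lcal_\mu(\pi_k,Q^{\pi_k})$ through the realizability witness $f_{\pi_k}$ via the same four pieces (two $\Lcal_\Dcal$-to-$\Lcal_\mu$ concentration steps, the critic's empirical optimality combined with $\Ecal_\Dcal\ge 0$ and $\Ecal_\Dcal(f_{\pi_k},\pi_k)\le\varepsilon_r$ from \cref{thm:version_space}, and the performance-difference identity converting the witness-versus-$Q^{\pi_k}$ gap into Bellman residuals under the admissible distributions $\mu$ and $d^{\pi_k}$). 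Your identification of term (iv) as the step requiring the occupancy-measure recasting so that realizability applies is exactly the paper's use of the extended performance difference lemma.
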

\begin{proof}[\bf\em Proof of \cref{th:performance_difference_app}]
By \cref{th:performance_difference}, we have
\begin{align*}
J(\picomp) - J(\pi_k) = \frac{\E_{\mu} \left[ f_k - \Tcal^{\pi_k} f_k \right]}{1 - \gamma} + \frac{\E_{\picomp} \left[ \Tcal^{\pi_k} f_k - f_k \right]}{1 - \gamma} +  \frac{\E_{\picomp} \left[ f_k(s,\picomp) - f_k(s,\pi_k) \right]}{1 - \gamma} + \frac{\Lcal_\mu(\pi_k, f_k) - \Lcal_\mu(\pi_k, Q^{\pi_k})}{1 - \gamma}.
\end{align*}
We now bound the term of $\Lcal_\mu(\pi_k, f_k) - \Lcal_\mu(\pi_k, Q^{\pi_k})$.
\begin{align*}
f_\pi \coloneqq &~ \argmin_{f \in \Fcal}\sup_{\text{admissible }\nu} \left\|f - \Tcal^\pi f\right\|_{2,\nu}^2, ~\forall \pi \in \Pi
\\
\varepsilon_\stat \coloneqq &~ \Ocal \left( \frac{\Vmax^2 \log \nicefrac{|\Ncal_\infty(\Fcal,\nicefrac{\Vmax}{N})||\Ncal_{\infty,1}(\Pi,\nicefrac{1}{N})|}{\delta}}{N} \right),
\\
\varepsilon_r \coloneqq &~ \varepsilon_\stat + \Ocal \left( \varepsilon_{\Fcal} \right).
\end{align*}
Then, by \cref{thm:version_space}, we know that with high probability, for any $k \in [K]$,
\begin{align}
\label{eq:bound_fpit}
\Ecal_\Dcal(\pi_k, f_{\pi_k}) \leq \varepsilon_r.
\end{align}
For $| \Lcal_\mu(\pi_k, Q^{\pi_k}) - \Lcal_\mu(\pi_k, f_{\pi_k}) |$, we have,
\begin{align}
\Lcal_\mu(\pi_k, Q^{\pi_k}) = &~ \E_\mu \left[Q^{\pi_k}(s,\pi_k) - Q^{\pi_k}(s,a) \right]
\nonumber
\\
= &~ (1 - \gamma) \left( J(\pi_k) - J(\mu) \right)
\nonumber
\\
= &~ (1 - \gamma) \left( f_{\pi_k}(s_0,\pi_k) - J(\mu) \right) + (1 - \gamma) \left(J(\pi_k) - f_{\pi_k}(s_0,\pi_k) \right)
\nonumber
\\
= &~ \E_\mu \left[ f_{\pi_k}(s,\pi_k) - (\Tcal^{\pi_k} f_{\pi_k})(s,a) \right] + \E_{d^{\pi_k}} \left[ (\Tcal^{\pi_k} f_{\pi_k})(s,a) - f_{\pi_k}(s,a) \right]
\tag{by the extension of performance difference lemma~\citep[see, e.g.,][Lemma 1]{cheng2020policy}}
\\
= &~ \Lcal_\mu(\pi_k,f_{\pi_k}) + \E_\mu \left[ f_{\pi_k}(s,a) - (\Tcal^{\pi_k} f_{\pi_k})(s,a) \right] + \E_{d^{\pi_k}} \left[ (\Tcal^{\pi_k} f_{\pi_k})(s,a) - f_{\pi_k}(s,a) \right]
\nonumber
\\
\Longrightarrow | \Lcal_\mu(\pi_k, Q^{\pi_k}) - \Lcal_\mu(\pi_k, f_{\pi_k}) | \leq &~ \| f_{\pi_k} - \Tcal^{\pi_k} f_{\pi_k}\|_{2,\mu} + \| \Tcal^{\pi_k} f_{\pi_k} - f_{\pi_k}\|_{2,d^{\pi_k}}
\nonumber
\\
\leq &~ \Ocal(\sqrt{\varepsilon_\Fcal}),
\label{eq:bd_lmu_fpit}
\end{align}
where the last step is by \cref{asm:relz2}.
%
Also, by applying standard concentration inequalities on $\Lcal_\Dcal$ (the failure probability will be split evenly with that on $\Ecal_{\Dcal}$ from Lemma~\ref{lem:bernstein_general}):
\begin{align}
\label{eq:bd_Lmu_ft}
\left|\Lcal_\mu(\pi_k, f_k) - \Lcal_\Dcal(\pi_k, f_k)\right| + \left|\Lcal_\mu(\pi_k, f_{\pi_k}) - \Lcal_\Dcal(\pi_k, f_{\pi_k})\right| \leq \sqrt{\varepsilon_\stat}, ~\forall k \in [K].
\end{align}
Therefore, 
\begin{align*}
&~ \Lcal_\mu(\pi_k, f_k) - \Lcal_\mu(\pi_k, Q^{\pi_k})
\\
\leq &~ \Lcal_\mu(\pi_k, f_k) + \beta \Ecal_\Dcal(\pi_k, f_k) - \Lcal_\mu(\pi_k, Q^{\pi_k}) \tag{$\Ecal_\Dcal(\cdot)\ge 0$}
\\
\leq &~ \Lcal_\mu(\pi_k, f_k) + \beta \Ecal_\Dcal(\pi_k, f_k) - \Lcal_\mu(\pi_k, f_{\pi_k}) - \beta \Ecal_\Dcal(\pi_k, f_{\pi_k}) + \Ocal(\sqrt{\epsilon_\Fcal}) + \beta \varepsilon_r
\tag{by \Eqref{eq:bound_fpit} and \Eqref{eq:bd_lmu_fpit}}
\\
\leq &~ \Lcal_\Dcal(\pi_k, f_k) + \beta \Ecal_\Dcal(\pi_k, f_k) - \Lcal_\Dcal(\pi_k, f_{\pi_k}) - \beta \Ecal_\Dcal(\pi_k, f_{\pi_k})
\\
&~ + \Ocal(\sqrt{\epsilon_\Fcal}) + \sqrt{\varepsilon_\stat} + \beta \cdot \Ocal\left(\varepsilon_\stat + \varepsilon_{\Fcal}\right)
\tag{by \Eqref{eq:bd_Lmu_ft}}
\\
\leq &~ \Ocal\left( \sqrt{\varepsilon_\Fcal} \right) + \sqrt{\varepsilon_\stat} + \beta \cdot \Ocal\left(\varepsilon_\stat + \varepsilon_{\Fcal}\right)
\tag{by the optimality of $f_k$}
\\
\leq &~ \Ocal \left(\sqrt{\varepsilon_\Fcal} + \Vmax \sqrt{\frac{ \log \nicefrac{|\Ncal_\infty(\Fcal,\nicefrac{\Vmax}{N})||\Ncal_{\infty,1}(\Pi,\nicefrac{1}{N})|}{\delta}}{N}} \right) + \beta \cdot \Ocal\left(\frac{\Vmax^2 \log \nicefrac{|\Ncal_\infty(\Fcal,\nicefrac{\Vmax}{N})||\Ncal_{\infty,1}(\Pi,\nicefrac{1}{N})|}{\delta}}{N} + \varepsilon_{\Fcal}\right).
\end{align*}
This completes the proof.
\end{proof}

\subsection{Performance Guarantee of the Theoretical Algorithm}
\label{app:main_thm_proof}

This section proves a general version of \cref{thm:maintext_thm} using \cref{th:performance_difference}, which relies on the approximate realizability and completeness assumptions (\cref{asm:relz2} and \cref{asm:completeness}).

\begin{theorem}[General Version of \cref{thm:maintext_thm}]
\label{thm:main_thm}
Under the same condition as \cref{thm:maintext_thm}, let $C > 0$ be any constant, $\nu$ be an arbitrarily distribution that satisfies $\Cscr(\nu;\mu,\Fcal,\pi_k) \leq C$, $\varepsilon_\stat \coloneqq \Ocal \left( \frac{\Vmax^2 \log \nicefrac{|\Ncal_\infty(\Fcal,\nicefrac{\Vmax}{N})||\Ncal_{\infty,1}(\Pi,\nicefrac{1}{N})|}{\delta}}{N} \right)$, and $\picomp$ be an arbitrary competitor policy. Then, we choose $\beta = \Ocal\left(\frac{\Vmax^{1/3}}{(\varepsilon_\Fcal + \varepsilon_\stat)^{2/3}}\right)$ and with probability at least $1-\delta$,
\begin{align*}
&~ J(\picomp) - J(\pibar)
\\
\leq &~ \Ocal\left( \frac{\sqrt{C}\left(\sqrt{\varepsilon_\Fcal} + \sqrt{\varepsilon_{\Fcal,\Fcal}} + \sqrt{\varepsilon_\stat} + (\Vmax\varepsilon_\Fcal + \Vmax\varepsilon_\stat)^{\nicefrac{1}{3}}\right) }{1 - \gamma}
\right) + \frac{\left\langle d^{\pi}\setminus\nu, ~ f_k - \Tcal^{\pi_k} f_k \right\rangle}{1 - \gamma}.
\end{align*}
\end{theorem}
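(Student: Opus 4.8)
The plan is to start from the per-iterate performance-difference inequality in \cref{th:performance_difference_app}, instantiated at the competitor $\picomp=\pi$ and at each pair $(\pi_k,f_k)$ produced by \cref{alg:atac (theory)}, and then average over $k\in[K]$. Because $\pibar$ mixes the iterates at the trajectory level, $J(\pibar)=\frac1K\sum_{k}J(\pi_k)$, so averaging the inequality directly yields an upper bound on $(1-\gamma)\big(J(\picomp)-J(\pibar)\big)$. It then remains to control the four groups of terms on the right-hand side.

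First I would handle the three ``main'' terms. The on-support Bellman term $\E_\mu[f_k-\Tcal^{\pi_k}f_k]$ is bounded by $\|f_k-\Tcal^{\pi_k}f_k\|_{2,\mu}$ via Cauchy--Schwarz against the constant $1$. For $\E_{\picomp}[\Tcal^{\pi_k}f_k-f_k]$ I would split the occupancy as $d^{\picomp}=\min(d^{\picomp},\nu)+(d^{\picomp}\setminus\nu)$: on the part dominated by $\nu$, Cauchy--Schwarz gives a bound of $\|f_k-\Tcal^{\pi_k}f_k\|_{2,\nu}$, which by the coverage hypothesis $\max_k\Cscr(\nu;\mu,\Fcal,\pi_k)\le C$ is at most $\sqrt{C}\,\|f_k-\Tcal^{\pi_k}f_k\|_{2,\mu}$, while the leftover mass contributes the uncontrolled off-support residual $\langle d^{\picomp}\setminus\nu,\,\Tcal^{\pi_k}f_k-f_k\rangle$ appearing in the statement. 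The improvement term $\E_{\picomp}[f_k(s,\picomp)-f_k(s,\pi_k)]$ sums over $k$ to exactly $(1-\gamma)\varepsilon_\opt^{\picomp}$ by \cref{def:no-regret}, contributing an optimization error that vanishes after dividing by $K$.

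The crux is bounding the on-support Bellman error $\|f_k-\Tcal^{\pi_k}f_k\|_{2,\mu}$ uniformly in $k$. Here I would use that $f_k$ minimizes $\Lcal_\Dcal(\cdot,\pi_k)+\beta\Ecal_\Dcal(\cdot,\pi_k)$: comparing against the realizability minimizer $f_{\pi_k}$, invoking $\Ecal_\Dcal(f_{\pi_k},\pi_k)\le\varepsilon_r$ from \cref{thm:version_space} and the crude bound $|\Lcal_\Dcal|\le\Ocal(\Vmax)$, rearrangement gives $\Ecal_\Dcal(f_k,\pi_k)\le\Ocal(\Vmax)/\beta+\varepsilon_r$. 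Feeding this into \cref{thm:mspo2be}, which converts the empirical surrogate $\Ecal_\Dcal$ into the population Bellman error, yields $\|f_k-\Tcal^{\pi_k}f_k\|_{2,\mu}\le\sqrt{\Ocal(\Vmax)/\beta}+\Ocal(\sqrt{\varepsilon_\Fcal}+\sqrt{\varepsilon_\stat}+\sqrt{\varepsilon_{\Fcal,\Fcal}})$. This is precisely the step where using a regularizer rather than a hard Bellman-error constraint introduces the extra $\beta^{-1/2}$ term, and is the source of the $N^{-1/3}$ (rather than $N^{-1/2}$) rate; I expect this to be the main obstacle, with the rest being bookkeeping around it.

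Finally I would collect everything. The Bellman error enters with total coefficient $\Ocal(\sqrt{C})$ (a factor $1$ from the $\mu$-term and $\sqrt{C}$ from the $\nu$-term), so together with the additive statistical error of \cref{th:performance_difference_app}, including its $\beta\cdot\Ocal(\varepsilon_\stat+\varepsilon_\Fcal)$ contribution, the only $\beta$-dependent terms are $\sqrt{C\Vmax/\beta}$ and $\beta(\varepsilon_\stat+\varepsilon_\Fcal)$. Balancing these by choosing $\beta=\Theta\big(\Vmax^{1/3}/(\varepsilon_\Fcal+\varepsilon_\stat)^{2/3}\big)$ makes both of order $\sqrt{C}\,(\Vmax\varepsilon_\Fcal+\Vmax\varepsilon_\stat)^{1/3}$; combining with the $C$-free lower-order terms $\sqrt{\varepsilon_\Fcal}+\sqrt{\varepsilon_\stat}$ (absorbed into the $\sqrt{C}$ terms since $C\ge 1$) and dividing through by $(1-\gamma)$ gives the claimed bound.
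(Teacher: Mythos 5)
Your proposal is correct and follows essentially the same route as the paper: average the per-iterate decomposition of \cref{th:performance_difference_app} over $k$, bound the on-support and $\nu$-covered Bellman terms by $\|f_k-\Tcal^{\pi_k}f_k\|_{2,\mu}$ and $\sqrt{C}\,\|f_k-\Tcal^{\pi_k}f_k\|_{2,\mu}$ respectively, control that norm via the optimality of $f_k$ against $f_{\pi_k}$ (giving the $\sqrt{\Vmax/\beta}+\sqrt{\varepsilon_b}$ bound through \cref{thm:version_space} and \cref{thm:mspo2be}), absorb the policy-improvement term into the no-regret guarantee, and balance $\sqrt{\Vmax/\beta}$ against $\beta(\varepsilon_\Fcal+\varepsilon_\stat)$. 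The paper defers the middle steps to the corresponding argument in \citet{xie2021bellman}, whereas you spell them out explicitly, but the content is the same.
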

\begin{proof}[\pfname{Theorem}{\ref{thm:main_thm}}]
Over this proof, let
\begin{align*}
\varepsilon_\stat \coloneqq &~ \Ocal \left( \frac{\Vmax^2 \log \nicefrac{|\Ncal_\infty(\Fcal,\nicefrac{\Vmax}{N})||\Ncal_{\infty,1}(\Pi,\nicefrac{1}{N})|}{\delta}}{N} \right).
\end{align*}
By the definition of $\pibar$, we have
\begin{align*}
&~ J(\picomp) - J(\pibar)
\\
= &~ \frac{1}{K} \sum_{k = 1}^{K} \left( J(\picomp) - J(\pi_k) \right)
\\
\leq &~ \frac{1}{K} \sum_{k = 1}^{K} \Bigg( \underbrace{\frac{\E_{\mu} \left[ f_k - \Tcal^{\pi_k} f_k \right]}{1 - \gamma}}_{\text{(I)}} + \underbrace{\frac{\E_{\picomp} \left[ \Tcal^{\pi_k} f_k - f_k \right]}{1 - \gamma}}_{\text{(II)}} +  \underbrace{\frac{\E_{\picomp} \left[ f_k(s,\picomp) - f_k(s,\pi_k) \right]}{1 - \gamma}}_{\text{(III)}} + \sqrt{\varepsilon_\Fcal} + \sqrt{\varepsilon_\stat} + \beta \cdot \Ocal(\varepsilon_\Fcal + \varepsilon_\stat) \Bigg).
\tag{by \cref{th:performance_difference_app}}
\end{align*}

By the same argument of~\citet[Proof of Theorem 4.1]{xie2021bellman}, we know for any $k \in [K]$,
\begin{align*}
\text{(I)} \leq \frac{\sqrt{\varepsilon_b} + \sqrt{V_{\max}/\beta}}{1 - \gamma}
\tag{$\varepsilon_b$ is defined in \cref{eq:upbdmsbe}}
\end{align*}
and
\begin{align*}
\text{(II)} \leq \frac{2 \sqrt{C}(\sqrt{\varepsilon_b} + \sqrt{V_{\max}/\beta}) }{1 - \gamma} + \frac{\left\langle d^{\pi}\setminus\nu, ~ f_k - \Tcal^{\pi_k} f_k \right\rangle}{1 - \gamma},
\end{align*}
where $C\geq 1$ can be selected arbitrarily and $\nu$ is an arbitrarily distribution that satisfies $\Cscr(\nu;\mu,\Fcal,\pi_k) \leq C$.

Also, using the property of the no-regret oracle, we have
\begin{align*}
\frac{1}{K} \sum_{k = 1}^{K} \text{(III)} = o(1) 
\end{align*}

Note that $\sqrt{\varepsilon_b} = \Ocal(\sqrt{\varepsilon_\Fcal} + \sqrt{\varepsilon_{\Fcal,\Fcal}} + \sqrt{\varepsilon_\stat} )$. Then, combine them all, we obtain,
\begin{align*}
&~ J(\picomp) - J(\pibar)
\\
\leq &~ \Ocal\left( \frac{\sqrt{C}\left(\sqrt{\varepsilon_\Fcal} + \sqrt{\varepsilon_{\Fcal,\Fcal}} + \sqrt{\varepsilon_\stat} + \sqrt{V_{\max}/\beta}\right) }{1 - \gamma} 
+ \beta(\varepsilon_\Fcal + \varepsilon_\stat)\right) + \frac{1}{K}\sum_{k = 1}^{K} \frac{\left\langle d^{\pi}\setminus\nu, ~ f_k - \Tcal^{\pi_k} f_k \right\rangle}{1 - \gamma}.
\end{align*}
Therefore, we choose $\beta = \Theta\left(\frac{\Vmax^{1/3}}{(\varepsilon_\Fcal + \varepsilon_\stat)^{2/3}}\right)$, and obtain
\begin{align*}
&~ J(\picomp) - J(\pibar)
\\
\leq &~ \Ocal\left( \frac{\sqrt{C}\left(\sqrt{\varepsilon_\Fcal} + \sqrt{\varepsilon_{\Fcal,\Fcal}} + \sqrt{\varepsilon_\stat} + (\Vmax\varepsilon_\Fcal + \Vmax\varepsilon_\stat)^{\nicefrac{1}{3}}\right) }{1 - \gamma}
\right) +
\frac{1}{K}\sum_{k = 1}^{K} \frac{\left\langle d^{\picomp}\setminus\nu, ~ f_k - \Tcal^{\pi_k} f_k \right\rangle}{1 - \gamma}.
\end{align*}
This completes the proof.
\end{proof}

\section{Experiment Details} \label{sec:exp details}


\subsection{Implementation Details}

\begin{algorithm}[t]
\caption{ATAC (Detailed Practical Version)}
\label{alg:atac (detailed practice) }
{\bfseries Input:} Batch data $\Dcal$, policy $\pi$, critics $f_1, f_2$, constants $\beta\geq 0$, $\tau\in[0,1]$, $w\in[0,1]$, entropy lower bound $ \textrm{Entropy}_{\min}$
\begin{algorithmic}[1]
\State Initialize target networks $\bar{f}_1\gets f_1$, $\bar{f}_2\gets f_2$
\State Initialize Lagrange multiplier $\alpha \gets 1$
\For{$k = 1,2,\dotsc,K$}
\State Sample minibatch $\Dcal_{\textrm{mini}}$ from dataset $\Dcal$.

\State For {$f\in\{f_1, f_2\}$}, update critic networks 
\Statex\qquad$l_{\textrm{critic}}(f) \coloneqq
    \Lcal_{\Dcal_{\textrm{mini}}}(f,\pi) + \beta \Ecal_{\Dcal_{\textrm{mini}}}^{w}(f, \pi)$
\Statex \qquad  \textcolor{ForestGreen}{  \# $ f \gets \text{Proj}_\Fcal(f - \eta_{\textrm{fast}}\nabla l_{\textrm{critic}})$}
\Statex \qquad $ f \gets \text{ADAM}(f , \nabla l_{\textrm{critic}}, \eta_{\textrm{fast}})$
\Statex \qquad $ f \gets \text{ClipWeightL2}(f)$ 

\State Update actor network 

\Statex \qquad \textcolor{ForestGreen}{ \# $l_{\textrm{actor}}(\pi)   \coloneqq - \Lcal_{\Dcal_{\textrm{mini}}}(f_1,\pi)$}
\Statex \qquad   \textcolor{ForestGreen}{\# $\pi \gets \text{Proj}_{\Pi}(\pi - \eta_{\textrm{slow}}\nabla l_{\textrm{actor}})$}

\Statex \qquad  $ \tilde{l}_{\textrm{actor}}(\pi, \alpha) = - \Lcal_{\Dcal_{\textrm{mini}}}(f_1,\pi) - \alpha ( \E_{\Dcal_{\textrm{mini}}}[ \pi \log \pi] + \textrm{Entropy}_{\min} )$

\Statex \qquad $\pi \gets \text{ADAM}(\pi , \nabla_\pi \tilde{l}_{\textrm{actor}}, \eta_{\textrm{slow}})$
\Statex \qquad $\alpha \gets \text{ADAM}(\alpha , - \nabla_\alpha  \tilde{l}_{\textrm{actor}}, \eta_{\textrm{fast}})$
\Statex \qquad $\alpha \gets \max\{0, \alpha\}$

\State For $(f,\bar{f})\in\{(f_i,\bar{f}_i)\}_{i=1,2}$, update target networks 
\Statex \qquad\qquad$\bar{f} \gets (1-\tau) \bar{f} + \tau{f}  $.
\EndFor
\end{algorithmic}
\end{algorithm}

We provide a more detailed version of our practical algorithm \cref{alg:atac (practice)} in \cref{alg:atac (detailed practice) }, which shows how the actor and critic updates are done with ADAM. As mentioned in \cref{sec:practical algo}, the projection in $\pi \gets \text{Proj}_{\Pi}(\pi - \eta_{\textrm{slow}}\nabla l_{\textrm{actor}})$ of the pseudo-code \cref{alg:atac (practice)} is done by a further Lagrange relaxation through introducing a Lagrange multiplier $\alpha\geq 0$. We update $\alpha$ in the fast timescale $\eta_{\textrm{fast}}$, so the policy entropy $\E_\Dcal[-\pi\log\pi]$ can be maintained above a threshold $\textrm{Entropy}_{\min}$, roughly following the path of the projected update in \cref{ln:update actor} in the pseudo code in \cref{alg:atac (practice)}.
$\textrm{Entropy}_{\min}$ is set based on the heuristic used in SAC~\citep{haarnoja2018soft}.

In implementation, we use separate 3-layer fully connected neural networks to realize the policy and the critics, where each hidden layer has 256 neurons and ReLU activation and the output layer is linear. The policy is Gaussian, with the mean and the standard deviation predicted by the neural network. We impose an $l_2$ norm constraint of 100 for the weight (not the bias) in each layer of the critic networks.

The first-order optimization is implemented by ADAM~\citep{kingma2014adam} with a minibatch size $|\Dcal_{\textrm{mini}}|=256$, and the two-timescale stepsizes are set as $\eta_{\textrm{fast}} = 0.0005$ and $\eta_{\textrm{slow}} = 10^{-3} \eta_{\textrm{fast}}$. These stepsizes $\eta_{\textrm{fast}}$ and $\eta_{\textrm{slow}}$ were selected \emph{offline} with a heuristic:  Since \algo with $\beta=0$ is IPM-IL, we did a grid search (over $\eta_{\textrm{fast}} \in \{5e-4 , 5e-5, 5e-6\}$ and $\eta_{\textrm{slow}} = \{5e-5, 5e-6 , 5e-7\}$, on the \textit{hopper-medium} and \textit{hopper-expert} datasets) and selected the combination that attains the lowest $\ell_2$ IL error after 100 epochs.

We set $w=0.5$ in \Eqref{eq:DQRA loss}, as we show in the ablation (\cref{fig:extra dqra ablation}) that either $w=0$ and $w=1$ leads to bad numerical stability and/or policy performance. We use $\tau=0.005$ for target network update from the work of~\citet{haarnoja2018soft}. The discount is set to the common $\gamma=0.99$.

The regularization coefficient $\beta$ is our only hyperparameter that varies across datasets based on an online selection. We consider $\beta$ in $ \in \{0, 4^{-4}, 4^{-3}, 4^{-2}, 4^{-1}, 1, 4, 4^{2}, 4^{3}, 4^{4}\}$.
For each $\beta$, we perform \algo training with 10 different seeds: for each seed, we run 100 epochs of BC for warm start and 900 epochs of \algo, where 1 epoch denotes 2K gradient updates. During the warmstart, the critics are optimized to minimize the Bellman surrogate $\Ecal_{\Dcal_{\textrm{mini}}}^{w}(f, \pi)$ except for $\beta=0$.

Since \algo does not have guarantees on last-iterate convergence, we report also the results of both the last iterate (denoted as \algo and ATAC$_0$) and the best checkpoint (denoted as ATAC$^*$ and ATAC$_0^*$) selected among $9$ checkpoints (each was made every 100 epochs).

We argue that the online selection of $\beta$ and few checkpoints are reasonable for \algo, as \algo theory provides robust policy improvement guarantees. While the assumptions made in the theoretical analysis does not necessarily apply to the practical version of \algo, empirically we found that \algo does demonstrate robust policy improvement properties in the D4RL benchmarks that we experimented with, which we will discuss more below.

\subsection{Detailed Experimental Results}

\begin{table}[hbtp]
\scriptsize
    \centering
    \begin{adjustbox}{angle=90}
        \begin{tabular}{|c |c |ccc |cc c| cc c| ccc|}
         \hline
 & Behavior & ATAC$^*$ & CI & $\beta$ & ATAC & CI & $\beta$ & ATAC$_0^*$ & CI & $\beta$ & ATAC$_0$ & CI & $\beta$\\
\hline
halfcheetah-rand & -0.1 & 4.8 & [-0.5, 0.5] & 16.0 & 3.9 & [-1.2, 0.5] & 4.0 & 2.3 & [-0.0, 0.3] & 64.0 & 2.3 & [-0.1, 0.3] & 64.0\\
walker2d-rand & 0.0 & 8.0 & [-0.9, 0.4] & 64.0 & 6.8 & [-0.4, 1.1] & 4.0 & 7.6 & [-0.2, 0.3] & 16.0 & 5.7 & [-0.1, 0.1] & 16.0\\
hopper-rand & 1.2 & 31.8 & [-7.2, 0.1] & 64.0 & 17.5 & [-11.2, 13.2] & 16.0 & 31.6 & [-0.9, 0.6] & 64.0 & 18.2 & [-14.0, 12.7] & 16.0\\
halfcheetah-med & 40.6 & 54.3 & [-0.8, 0.2] & 4.0 & 53.3 & [-0.4, 0.1] & 4.0 & 43.9 & [-4.9, 0.6] & 16.0 & 36.8 & [-1.4, 1.2] & 0.0\\
walker2d-med & 62.0 & 91.0 & [-0.5, 0.3] & 64.0 & 89.6 & [-0.2, 0.2] & 64.0 & 90.5 & [-0.4, 0.8] & 64.0 & 89.6 & [-1.2, 0.3] & 64.0\\
hopper-med & 44.2 & 102.8 & [-0.5, 0.9] & 64.0 & 85.6 & [-7.2, 6.6] & 16.0 & 103.5 & [-0.8, 0.2] & 16.0 & 94.8 & [-11.4, 4.9] & 4.0\\
halfcheetah-med-replay & 27.1 & 49.5 & [-0.3, 0.1] & 16.0 & 48.0 & [-0.6, 0.2] & 64.0 & 49.2 & [-0.3, 0.4] & 64.0 & 47.2 & [-0.2, 0.4] & 64.0\\
walker2d-med-replay & 14.8 & 94.1 & [-0.2, 0.4] & 64.0 & 92.5 & [-4.5, 1.0] & 16.0 & 94.2 & [-1.2, 1.1] & 64.0 & 89.8 & [-4.1, 2.0] & 16.0\\
hopper-med-replay & 14.9 & 102.8 & [-0.3, 0.3] & 16.0 & 102.5 & [-0.4, 0.2] & 16.0 & 102.7 & [-1.0, 1.0] & 1.0 & 102.1 & [-0.3, 0.5] & 16.0\\
halfcheetah-med-exp & 64.3 & 95.5 & [-0.2, 0.1] & 0.062 & 94.8 & [-0.4, 0.1] & 0.062 & 41.6 & [-0.1, 2.4] & 0.0 & 39.7 & [-1.3, 1.7] & 0.0\\
walker2d-med-exp & 82.6 & 116.3 & [-0.7, 0.5] & 64.0 & 114.2 & [-7.4, 0.7] & 16.0 & 114.5 & [-1.5, 0.8] & 64.0 & 104.9 & [-8.1, 8.0] & 64.0\\
hopper-med-exp & 64.7 & 112.6 & [-0.3, 0.2] & 1.0 & 111.9 & [-0.3, 0.3] & 1.0 & 83.0 & [-18.0, 12.8] & 1.0 & 46.5 & [-16.6, 15.2] & 0.0\\
\hline
pen-human & 207.8 & 79.3 & [-14.2, 16.5] & 0.004 & 53.1 & [-37.2, 21.0] & 0.004 & 106.1 & [-32.2, 7.3] & 0.004 & 61.7 & [-7.0, 27.6] & 0.004\\
hammer-human & 25.4 & 6.7 & [-3.6, 8.2] & 0.016 & 1.5 & [-0.2, 0.1] & 64.0 & 3.8 & [-1.9, 0.9] & 4.0 & 1.2 & [-0.3, 0.7] & 64.0\\
door-human & 28.6 & 8.7 & [-2.1, 0.1] & 0.0 & 2.5 & [-1.9, 1.5] & 0.0 & 12.2 & [-3.6, 6.7] & 16.0 & 7.4 & [-7.2, 1.3] & 16.0\\
relocate-human & 86.1 & 0.3 & [-0.2, 0.7] & 0.25 & 0.1 & [-0.1, 0.1] & 64.0 & 0.5 & [-0.2, 1.2] & 4.0 & 0.1 & [-0.0, 0.0] & 1.0\\
pen-cloned & 107.7 & 73.9 & [-5.2, 5.7] & 0.0 & 43.7 & [-28.2, 24.4] & 0.0 & 104.9 & [-13.5, 13.0] & 0.016 & 68.9 & [-49.0, 16.4] & 0.062\\
hammer-cloned & 8.1 & 2.3 & [-0.2, 3.3] & 16.0 & 1.1 & [-0.4, 0.2] & 0.016 & 3.2 & [-1.6, 0.6] & 4.0 & 0.4 & [-0.2, 0.2] & 0.25\\
door-cloned & 12.1 & 8.2 & [-4.5, 5.1] & 0.062 & 3.7 & [-2.8, 1.0] & 0.016 & 6.0 & [-5.5, 1.1] & 4.0 & 0.0 & [-0.0, 0.0] & 0.0\\
relocate-cloned & 28.7 & 0.8 & [-0.6, 0.8] & 0.062 & 0.2 & [-0.1, 0.1] & 0.004 & 0.3 & [-0.1, 0.8] & 16.0 & 0.0 & [-0.0, 0.0] & 4.0\\
pen-exp & 105.7 & 159.5 & [-8.4, 1.8] & 1.0 & 136.2 & [-5.4, 18.7] & 0.062 & 154.4 & [-4.1, 4.0] & 4.0 & 97.7 & [-66.7, 8.8] & 0.062\\
hammer-exp & 96.3 & 128.4 & [-0.5, 0.2] & 0.016 & 126.9 & [-0.5, 0.3] & 0.016 & 118.3 & [-20.3, 9.9] & 0.062 & 99.2 & [-41.0, 5.6] & 4.0\\
door-exp & 100.5 & 105.5 & [-1.6, 0.3] & 0.0 & 99.3 & [-24.1, 4.6] & 0.25 & 103.6 & [-6.5, 0.8] & 64.0 & 48.3 & [-39.3, 31.2] & 64.0\\
relocate-exp & 101.6 & 106.5 & [-1.5, 1.0] & 0.016 & 99.4 & [-12.2, 1.9] & 0.004 & 104.0 & [-2.7, 3.1] & 64.0 & 74.3 & [-1.4, 7.1] & 16.0\\
         \hline
        \end{tabular}
        \end{adjustbox}

    \caption{\small Experimental results of \algo and ATAC$_0$ on the D4RL dataset and its confidence interval. We report the median score and the $25^{th}$ and $75^{th}$ percentiles, over $10$ random seeds.}
    \label{tab:ATAC with CI}

\end{table}

\begin{figure}[t!]
	\centering
	\begin{subfigure}{0.23\textwidth}
		\includegraphics[width=\textwidth]{figures/hopper-medium-replay-v2_rg_ablation.png}
	\end{subfigure}
	\begin{subfigure}{0.23\textwidth}
		\includegraphics[width=\textwidth]{figures/hopper-medium-replay-v2_rg_ablation_td_error.png}
	\end{subfigure}
	\\
	\begin{subfigure}{0.23\textwidth}
	\includegraphics[width=\textwidth]{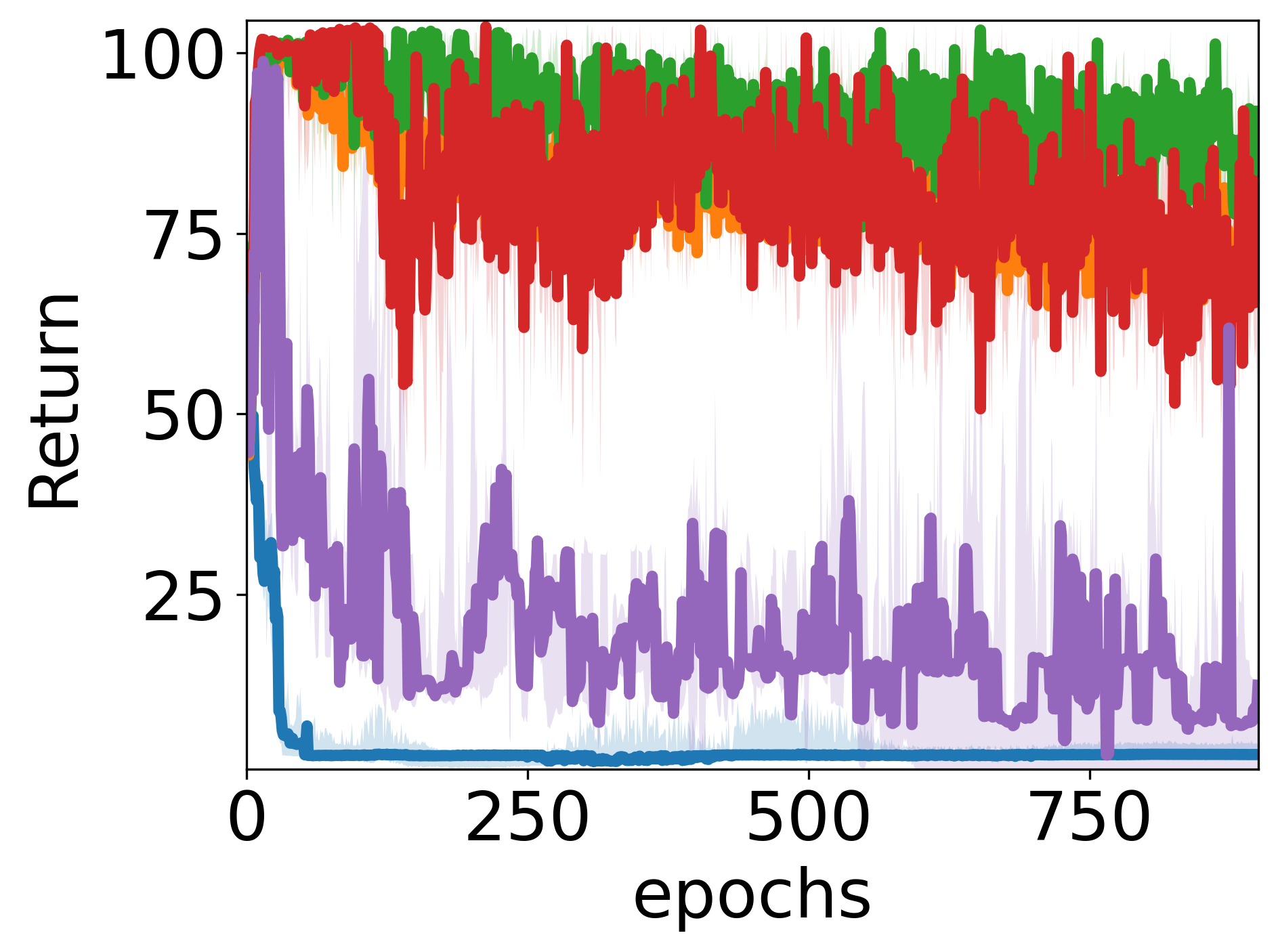}
    \end{subfigure}
    \begin{subfigure}{0.23\textwidth}
    	\includegraphics[width=\textwidth]{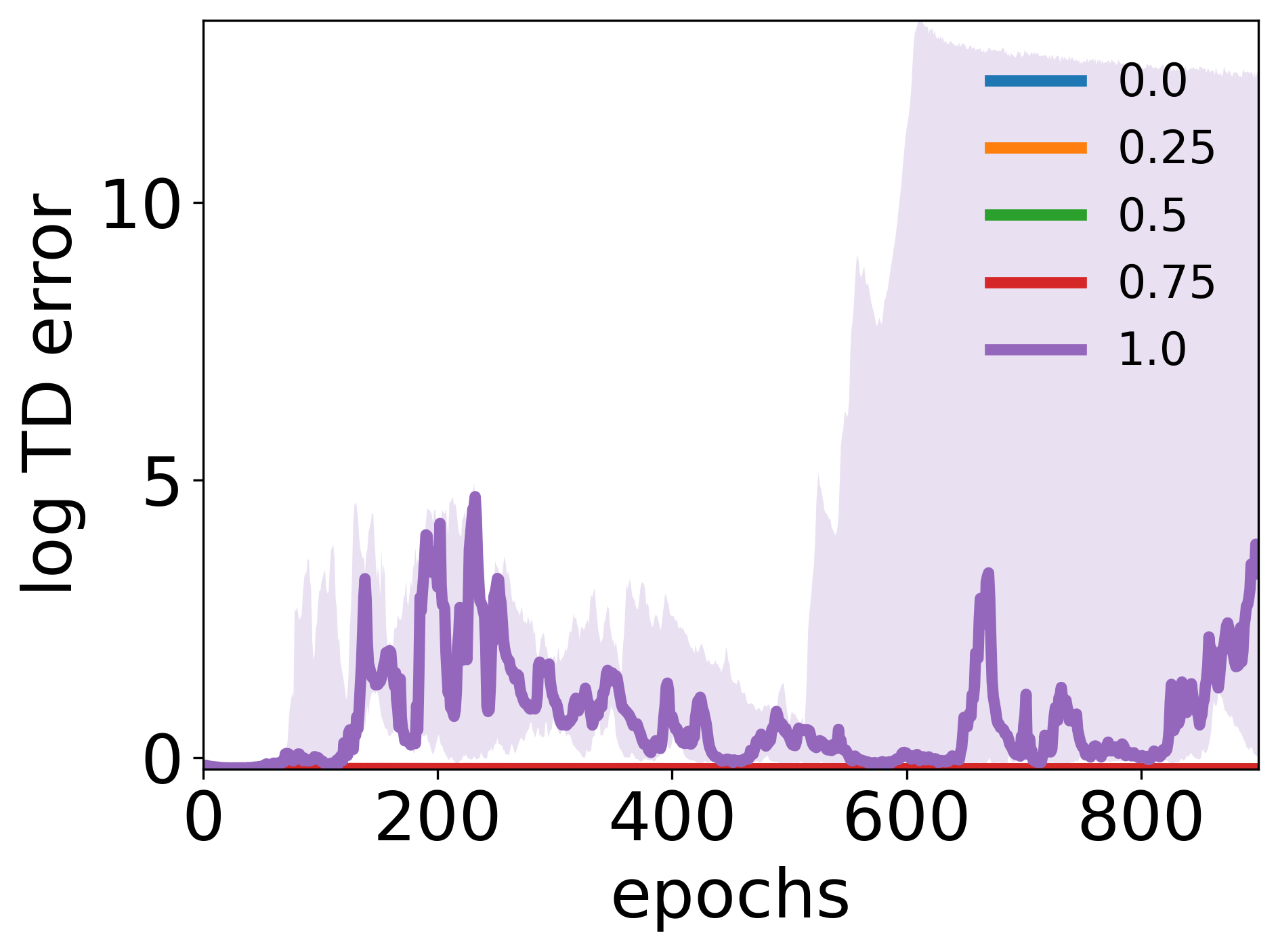}
    \end{subfigure}
    \\
    \begin{subfigure}{0.23\textwidth}
		\includegraphics[width=\textwidth]{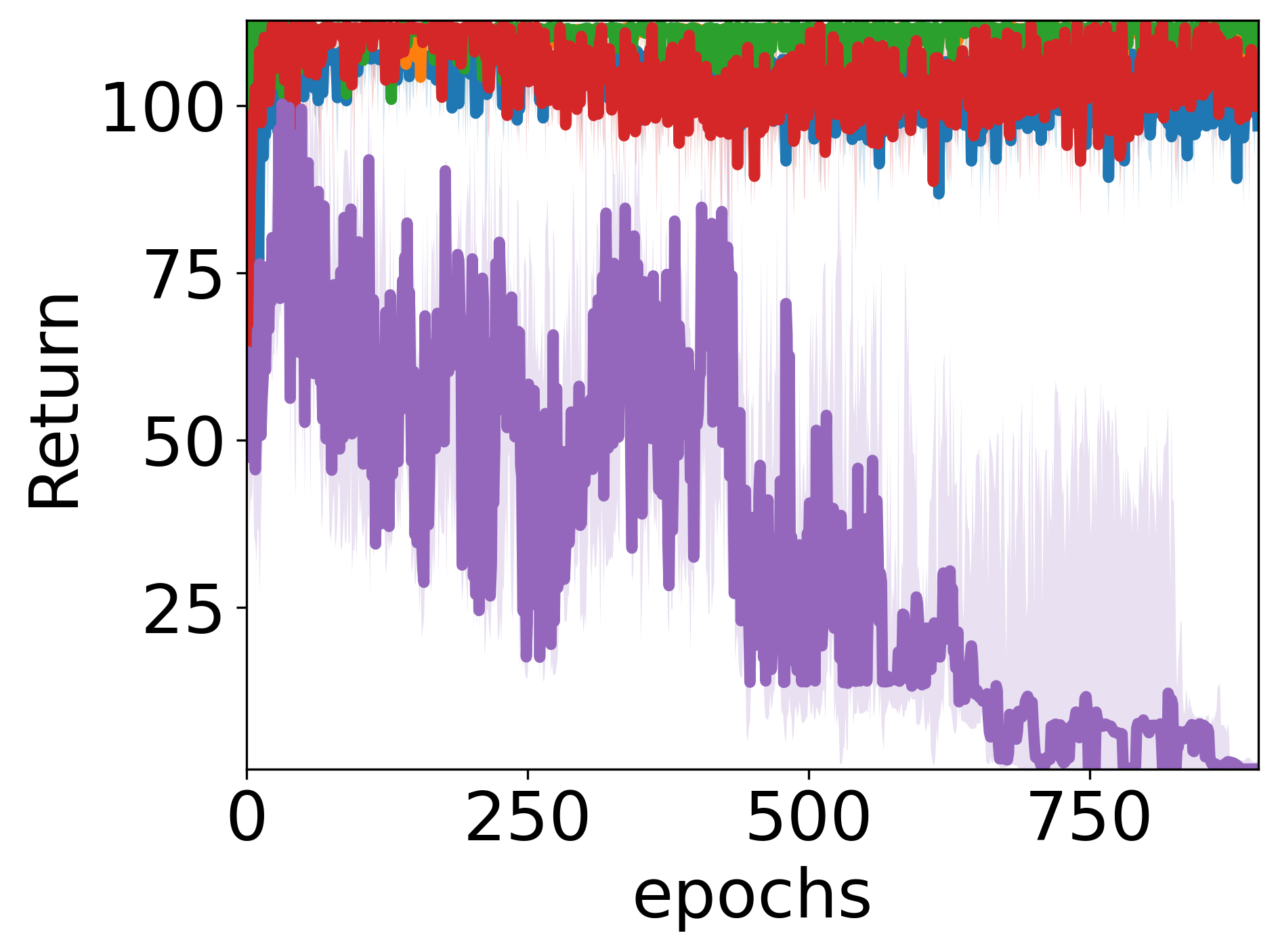}
	\end{subfigure}
	\begin{subfigure}{0.23\textwidth}
		\includegraphics[width=\textwidth]{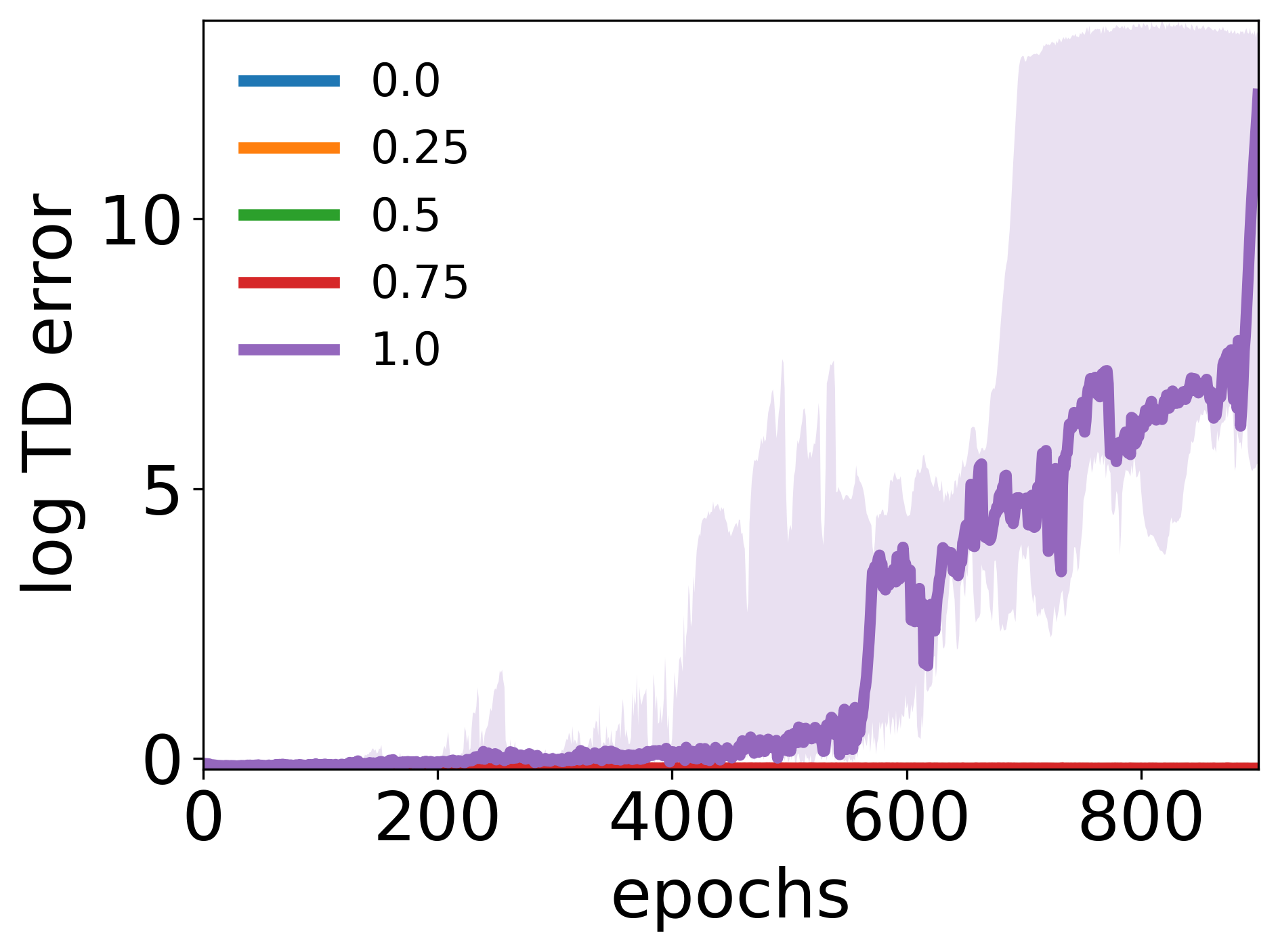}
	\end{subfigure}
	\caption{\small{Ablation of the DQRA loss with different mixing weights $w$ in \Eqref{eq:DQRA loss}. The plots show the policy performance and TD error across optimization epochs of \algo with the \textit{hopper-medium-replay}, \textit{hopper-medium}, and \textit{hopper-medium-expert} datasets from top to buttom. The stability and performance are greatly improved when $w\in(0,1)$.
For each $w$, the plot shows the $25^{th}$, $50^{th}$, $75^{th}$ percentiles over 10 random seeds.}}
\label{fig:extra dqra ablation}
\vspace{-2mm}
\end{figure}

\begin{table}[t!]
\footnotesize
    \centering
        \begin{tabular}{|c |  c c | c c |}
         \hline
   & ATAC$^*$ & ATAC & CQL & TD3+BC \\ 
\hline
halfcheetah-rand  & 2.3 & 2.3 & \textbf{35.4} & 10.2 \\ 
walker2d-rand  & \textbf{8.2} & 6.5 & 7.0 & 1.4 \\ 
\color{lightgray}hopper-rand & \color{lightgray}\textbf{12.1} & \color{lightgray}\textbf{12.0} &\color{lightgray} 10.8 & \color{lightgray}11.0 \\ 
 halfcheetah-med  & \textbf{42.9} &  \textbf{42.6} &  \textbf{44.4} &\textbf{42.8}\\ 
walker2d-med  & \textbf{84.0} & \textbf{83.0} & 74.5 & 79.7 \\ 
\color{lightgray} hopper-med  & \color{lightgray} 53.3 & \color{lightgray} 33.5 & \color{lightgray} 86.6 & \color{lightgray} \textbf{99.5} \\ 
halfcheetah-med-replay & 43.3 & 41.7 & \textbf{46.2} & 43.3 \\ 
walker2d-med-replay & \textbf{33.7} & 21.8 & 32.6 & 25.2 \\ 
\color{lightgray} hopper-med-replay  & \color{lightgray} 39.2 & \color{lightgray}  29.5 & \color{lightgray} \textbf{48.6} & \color{lightgray} 31.4 \\ 
halfcheetah-med-exp & \textbf{108.4} & \textbf{107.5} & 62.4 & 97.9 \\ 
walker2d-med-exp  & \textbf{111.8} & 109.1 & 98.7 & 101.1 \\ 
\color{lightgray} hopper-med-exp  & \color{lightgray} \textbf{112.8} & \color{lightgray}  \textbf{112.5} &\color{lightgray}  \textbf{111.0} & \color{lightgray} \textbf{112.2}\\ 
     \hline
        \end{tabular}
    \caption{\small Results of mujoco-v0 dataset. We grayed out the results of hopper-v0 because these datasets have bug (see D4RL github).}
    \label{tb:mujocov0}
\end{table}

We used a selection of the Mujoco datasets (v2) and Adroit datasets (v1) from D4RL as our benchmark environments. For each evaluation, we roll out the mean part of the Gaussian policy for 5 rollouts and compute the Monte Carlo return. For each dataset, we report the statistical results over 10 random seeds in \cref{tab:ATAC with CI}.

Compared with the summary we provided in the main text (\cref{tab:main exp results}), \cref{tab:ATAC with CI} includes also the confidence interval which shows how much the $25^{th}$ and the $75^{th}$ percentiles of performance deviate from the median (i.e. the $50^{th}$ percentile). In addition, \cref{tab:ATAC with CI} also provides the selected hyperparamter $\beta$ for each method.

Overall, we see that confidence intervals are small for \algo, except for larger variations happening in \textit{hopper-rand}, \textit{pen-human}, and \textit{hammer-human}. Therefore, the performance improvement of \algo from other offline RL baselines and behavior policies is significant.
We also see that \algo most of the time picks $\beta=64$ for the Mujoco datasets, except for the halfcheetah domain, and has a tendency of picking smaller $\beta$ as the dataset starts to contain expert trajectories (i.e. in *-exp datasets). This is reasonable, since when the behavior policy has higher performance, an agent requires less information from the reward to perform well; in the extreme of learning with trajectories of the optimal policy, the learner can be optimal just by IL.

We also include extra ablation results on the effectiveness of DQRA loss in stabilizing learning in \cref{fig:extra dqra ablation},  which includes two extra hopper datasets compared with \cref{fig:dqra ablation}. Similar to the results in the main paper, we see that $w=1$ is unstable, $w=0$ is stable but under-performing, while using $w\in(0,1)$ strikes a balance between the two. Our choice $w=0.5$ has the best performance in these three datasets and is numerically stable.
We also experimented with the max-aggregation version recently proposed by~\citet{wang2021convergent}. It does address the instability issue seen in the typical bootstrapped version $w=1$, but its results tend to be noisier compared with $w=0.5$ as it makes the optimization landscape more non-smooth.

Lastly, we include experimental results of \algo on D4RL mujoco-v0 datasets in \cref{tb:mujocov0}. We used v2 instead of v0 in the main results, because 1) hopper-v0 has a bug (see D4RL github; for this reason they are grayed out in \cref{tb:mujocov0}), and 2) some baselines we compare \algo with also used v2 (or they didn't specify and we suspect so). Here we include these results for completeness.

\subsection{Robust Policy Improvement}

We study empirically the robust policy improvement property of \algo.
First we provide an extensive validation on how ATAC$^*$ performs with different $\beta$ on all datasets in \cref{fig:robust PI (mujoco)} and \cref{fig:robust PI (adroit)}, which are the complete version of \cref{fig:robust PI}. In these figures, we plot the results of ATAC$^*$ (relative pessimism) and ATAC$_0^*$ (absolute pessimism) (which is a deep learning implementation of PSPI~\citep{xie2021bellman}) in view of the behavior policy's performance. These results show similar trends as we have observed in \cref{fig:robust PI}. \algo can robustly improve from the behavior policy over a wide range of $\beta$ values. In particular, we see the performance degrades below the behavior policy only for large $\beta$s, because of the following reasons. When $\beta\to0$ \algo converges to the IL mode, which can recover the behavior policy performance if the realizability assumption is satisfied. On the other hand, when $\beta$ is too large, \cref{prop:safe_pi} shows that the statistical error will start to dominate and therefore lead to substandard performance. This robust policy improvement property means that practitioners of \algo can online tune its performance by starting with $\beta=0$ and the gradually increasing $\beta$ until the performance drop,  without ever dropping below the performance of behavior policy much.

\cref{fig:robust PI (mujoco)} and \cref{fig:robust PI (adroit)} show the robustness of ATAC$^*$ which uses the best checkpoint. Below in \cref{tab:robust PI scores} we validate further whether safe policy improvement holds across iterates.
To this end, we define a robust policy improvement score
\begin{align}
    \textrm{score}_{\textrm{RPI}}(\pi) \coloneqq \frac{J(\pi)-J(\mu)}{|J(\mu)|}
\end{align}
which captures how a policy $\pi$ performs relatively to the behavior policy $\mu$.
\cref{tab:robust PI scores} shows the percentiles of the robust policy improvement score for each dataset, over \textit{all} the $\beta$ choices, random seeds, and iterates from the 100$^{th}$ epoch to the 900$^{th}$ epoch of \algo training.
Overall, we see that in most datasets (excluding *-human and *-clone datasets which do not satisfy our theoretical realizability assumption), more than 50\% of iterates generated by \algo across all the experiments are better than the behavior policy. For others, more than 60\% of iterates are within 80 \% of the behavior policy's performance. This robustness result is quite remarkable as it includes iterates where \algo has not fully converged as well as bad choices of $\beta$.
%

\begin{table*}[t!]
\footnotesize
    \centering
        \begin{tabular}{|c | c  c c c c c  c c c c  |}
\hline
 & $10^{th}$ & $20^{th}$ & $30^{th}$ & $40^{th}$ & $50^{th}$ & $60^{th}$ & $70^{th}$ & $80^{th}$ & $90^{th}$ & $100^{th}$\\
 \hline
halfcheetah-rand & \textbf{0.9} & \textbf{1.0} & \textbf{1.0} & \textbf{1.0} & \textbf{1.0} & \textbf{1.1} & \textbf{1.4} & \textbf{1.5} & \textbf{1.8} & \textbf{5.5}\\
walker2d-rand & \textbf{1.9} & \textbf{2.3} & \textbf{3.5} & \textbf{5.6} & \textbf{16.5} & \textbf{64.6} & \textbf{134.0} & \textbf{139.1} & \textbf{159.2} & \textbf{519.1}\\
hopper-rand & \textbf{0.1} & \textbf{0.2} & \textbf{1.3} & \textbf{3.5} & \textbf{6.7} & \textbf{10.6} & \textbf{11.4} & \textbf{12.6} & \textbf{23.2} & \textbf{63.3}\\
halfcheetah-med & -0.1 & \textbf{0.0} & \textbf{0.1} & \textbf{0.1} & \textbf{0.1} & \textbf{0.1} & \textbf{0.2} & \textbf{0.3} & \textbf{0.3} & \textbf{0.4}\\
walker2d-med & \textbf{0.1} & \textbf{0.2} & \textbf{0.3} & \textbf{0.3} & \textbf{0.3} & \textbf{0.4} & \textbf{0.4} & \textbf{0.4} & \textbf{0.4} & \textbf{0.5}\\
hopper-med & \textbf{0.2} & \textbf{0.2} & \textbf{0.3} & \textbf{0.4} & \textbf{0.4} & \textbf{0.6} & \textbf{0.7} & \textbf{0.9} & \textbf{1.1} & \textbf{1.4}\\
halfcheetah-med-replay & \textbf{0.6} & \textbf{0.6} & \textbf{0.6} & \textbf{0.7} & \textbf{0.7} & \textbf{0.8} & \textbf{0.8} & \textbf{0.9} & \textbf{0.9} & \textbf{1.0}\\
walker2d-med-replay & -1.0 & -1.0 & -1.0 & -0.2 & \textbf{3.5} & \textbf{4.4} & \textbf{4.8} & \textbf{5.0} & \textbf{5.2} & \textbf{5.5}\\
hopper-med-replay & -1.0 & -0.9 & -0.9 & \textbf{1.1} & \textbf{5.4} & \textbf{6.0} & \textbf{6.0} & \textbf{6.1} & \textbf{6.1} & \textbf{6.2}\\
halfcheetah-med-exp & -0.6 & -0.5 & -0.4 & -0.3 & -0.2 & \textbf{-0.0} & \textbf{0.2} & \textbf{0.4} & \textbf{0.5} & \textbf{0.5}\\
walker2d-med-exp & -0.1 & -0.1 & \textbf{0.0} & \textbf{0.3} & \textbf{0.3} & \textbf{0.3} & \textbf{0.3} & \textbf{0.4} & \textbf{0.4} & \textbf{0.4}\\
hopper-med-exp & -0.6 & -0.4 & -0.2 & -0.2 & -0.1 & -0.1 & \textbf{0.0} & \textbf{0.6} & \textbf{0.7} & \textbf{0.8}\\

\hline
pen-human & -1.0 & -1.0 & -1.0 & -0.9 & -0.9 & -0.9 & -0.8 & -0.8 & -0.7 & -0.2\\
hammer-human & -1.1 & -1.1 & -1.1 & -1.1 & -1.0 & -1.0 & -1.0 & -1.0 & -1.0 & \textbf{0.9}\\
door-human & -1.1 & -1.1 & -1.1 & -1.1 & -1.1 & -1.1 & -1.1 & -1.0 & -0.9 & -0.1\\
relocate-human & -1.0 & -1.0 & -1.0 & -1.0 & -1.0 & -1.0 & -1.0 & -1.0 & -1.0 & -0.9\\
pen-cloned & -1.0 & -1.0 & -1.0 & -0.9 & -0.9 & -0.8 & -0.7 & -0.6 & -0.5 & \textbf{0.5}\\
hammer-cloned & -1.3 & -1.3 & -1.3 & -1.3 & -1.3 & -1.2 & -1.2 & -1.1 & -1.0 & \textbf{9.2}\\
door-cloned & -1.2 & -1.2 & -1.2 & -1.2 & -1.2 & -1.2 & -1.1 & -1.0 & -0.8 & \textbf{1.4}\\
relocate-cloned & -1.0 & -1.0 & -1.0 & -1.0 & -1.0 & -1.0 & -1.0 & -1.0 & -1.0 & -0.7\\
pen-exp & -0.4 & -0.2 & -0.1 & \textbf{0.0} & \textbf{0.1} & \textbf{0.2} & \textbf{0.2} & \textbf{0.3} & \textbf{0.4} & \textbf{0.6}\\
hammer-exp & -1.0 & -1.0 & -1.0 & -0.8 & -0.6 & -0.2 & \textbf{0.1} & \textbf{0.3} & \textbf{0.3} & \textbf{0.4}\\
door-exp & -1.0 & -0.8 & -0.6 & -0.4 & -0.2 & -0.2 & \textbf{-0.0} & \textbf{0.0} & \textbf{0.0} & \textbf{0.1}\\
relocate-exp & -0.4 & -0.3 & -0.3 & -0.2 & -0.2 & -0.1 & -0.1 & \textbf{-0.0} & \textbf{0.0} & \textbf{0.1}\\
\hline
        \end{tabular}
    \caption{\small The robust policy improvement scores of \algo. We report for each dataset, the percentiles of iterates over all 9 choices of $\beta$, 10 seeds, and 800 epochs (from the 100$^{th}$ to the 900$^{th}$ epochs). In most datasets (excluding *-human and *-clone datasets which likely do not satisfy our theoretical realizability assumption), more than 50\% of iterates generated by \algo across all seeds and $\beta$s are better than the behavior policy. For others, more than 60\% of iterates are within 80\% of the behavior policy's performance.
    }
    \label{tab:robust PI scores}
\end{table*}

\begin{figure*}[t!]
	\centering
	\begin{subfigure}{0.24\textwidth}
		\includegraphics[width=\textwidth]{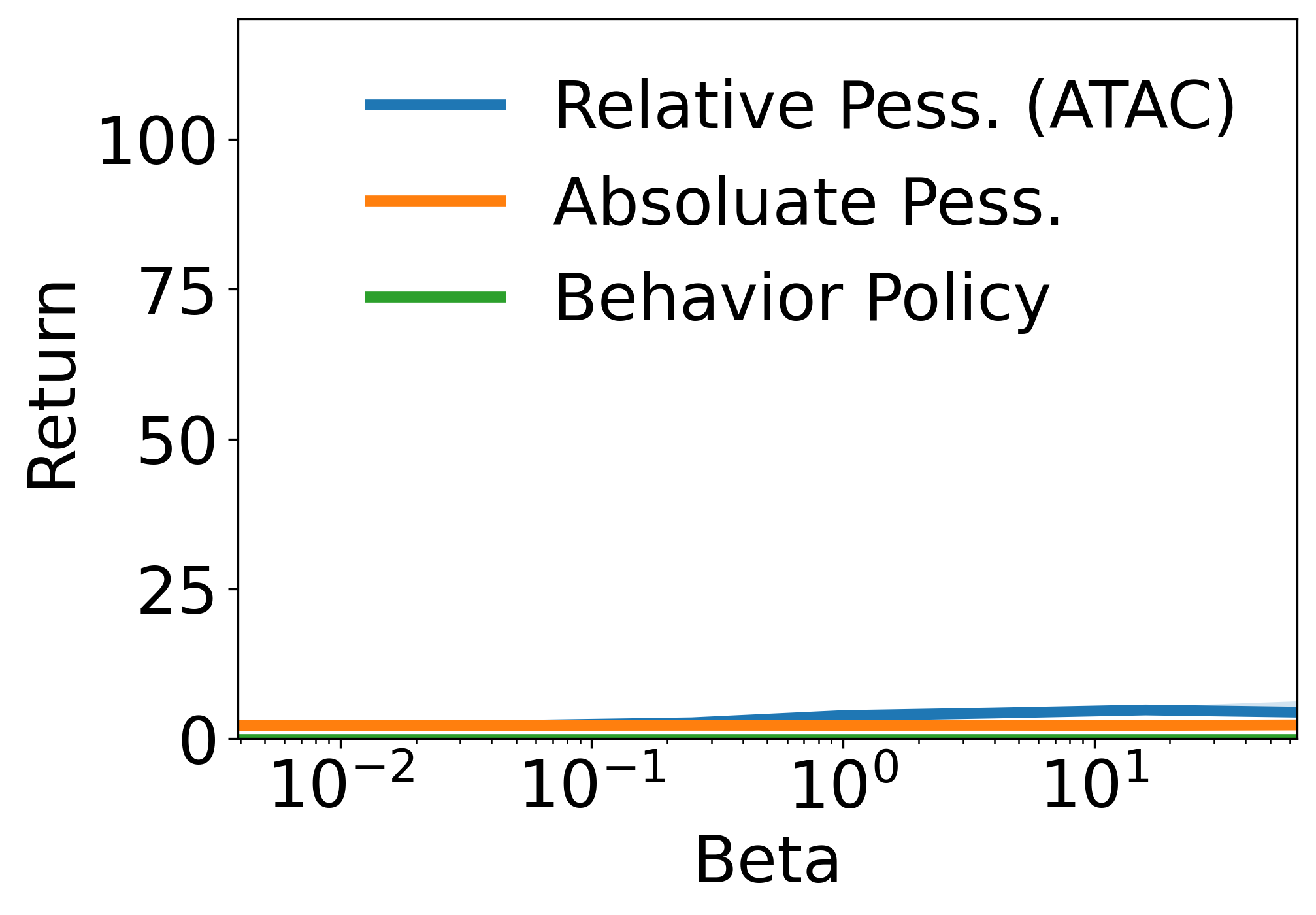}
		\caption{halfcheetah-random}
		\label{fig:halfcheetah-random robust PI}
	\end{subfigure}
	\begin{subfigure}{0.24\textwidth}
		\includegraphics[width=\textwidth]{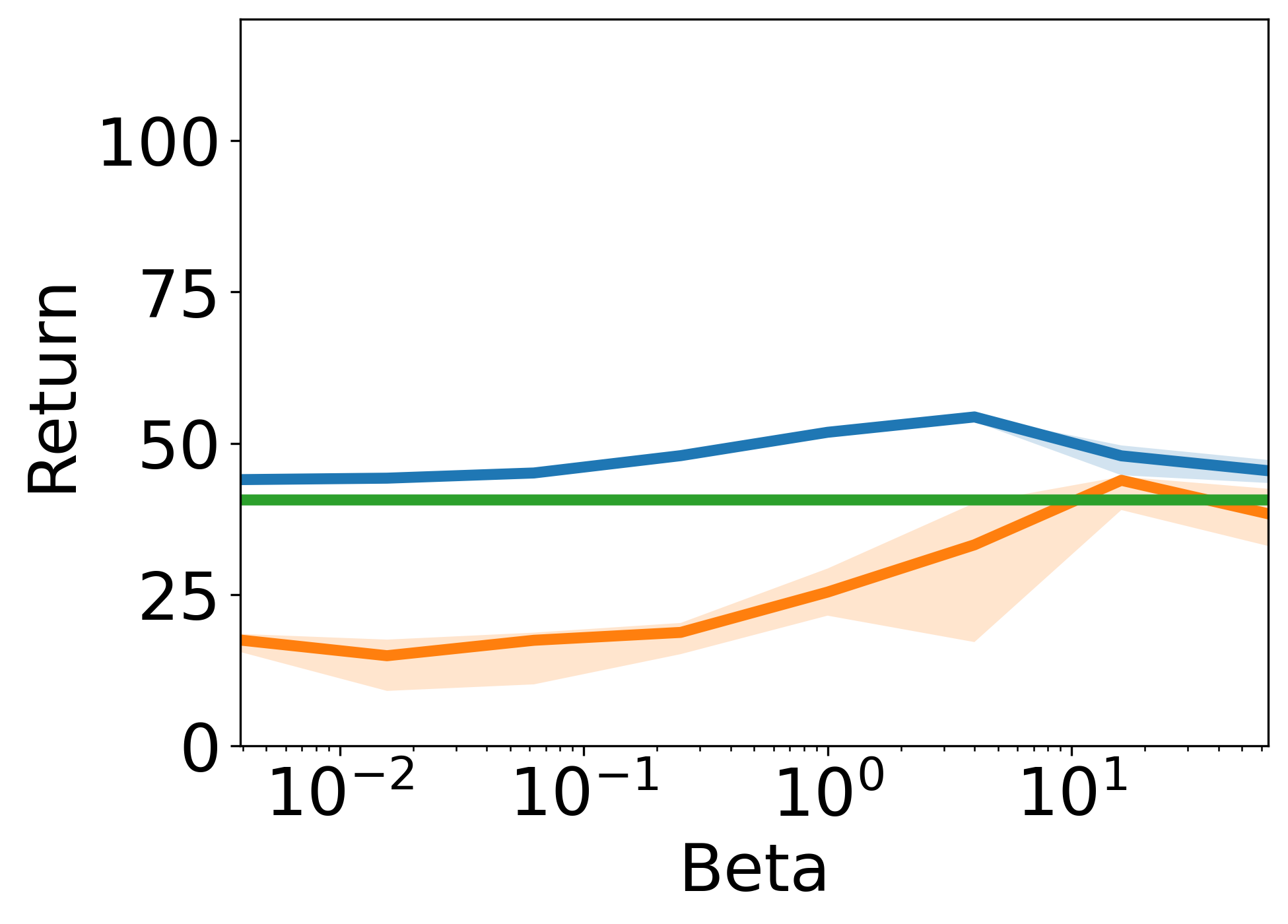}
		\caption{halfcheetah-medium}
		\label{fig:halfcheetah-medium robust PI}
	\end{subfigure}
	\begin{subfigure}{0.24\textwidth}
		\includegraphics[width=\textwidth]{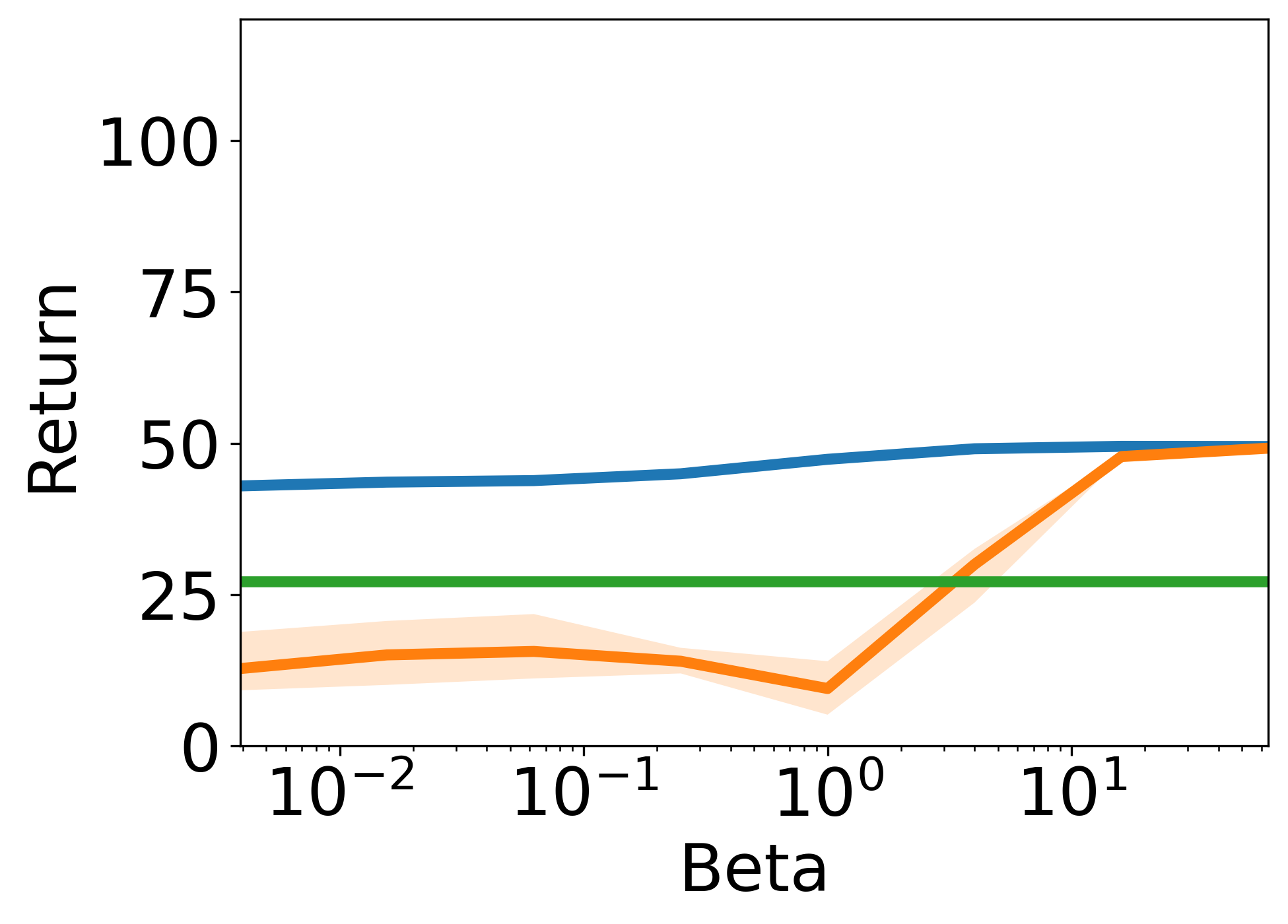}
		\caption{halfcheetah-medium-replay}
		\label{fig:halfcheetah-medium-replay robust PI}
	\end{subfigure}
	\begin{subfigure}{0.24\textwidth}
		\includegraphics[width=\textwidth]{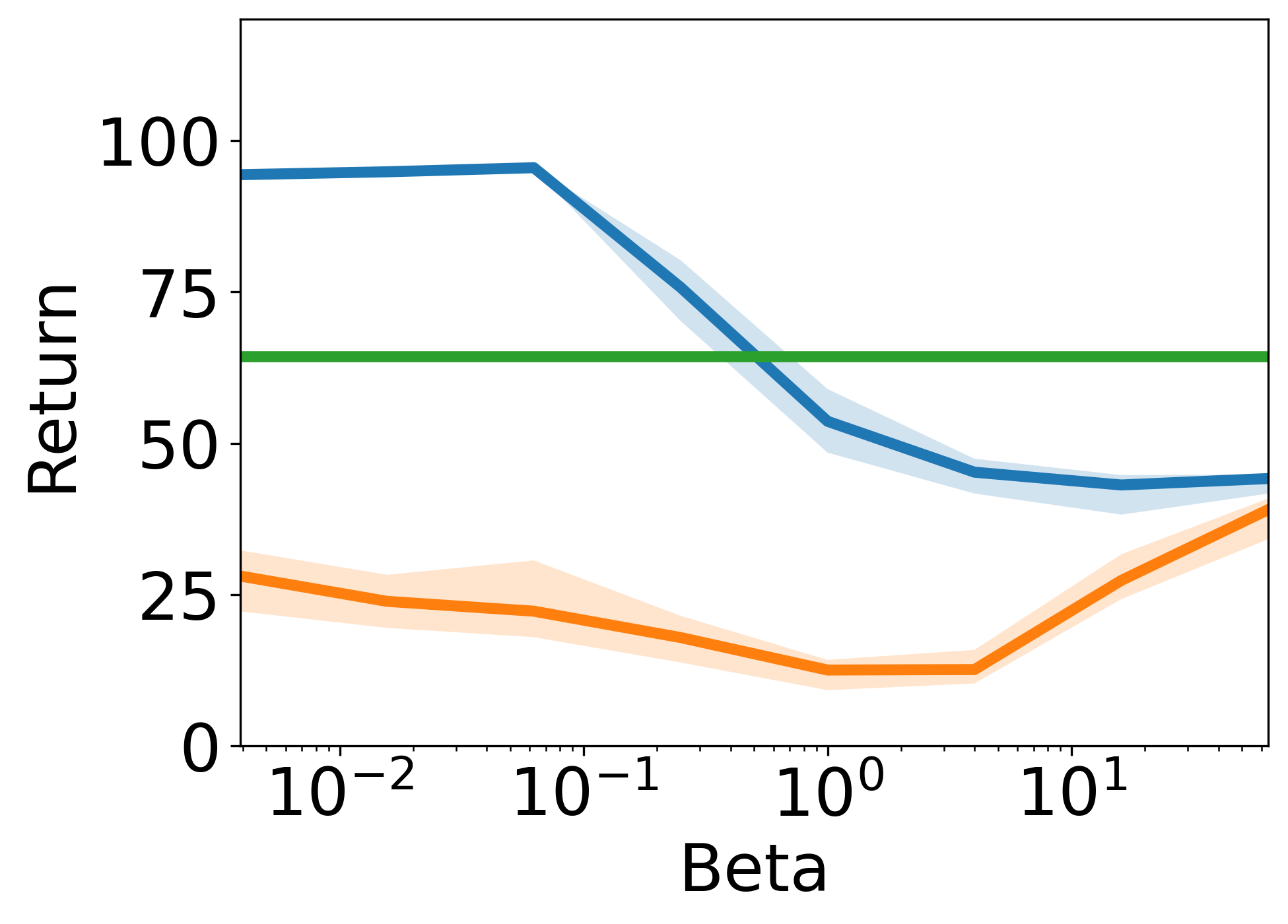}
		\caption{halfcheetah-medium-expert}
		\label{fig:halfcheetah-medium-expert robust PI}
	\end{subfigure}

	\begin{subfigure}{0.24\textwidth}
		\includegraphics[width=\textwidth]{figures/hopper-random-v2.png}
		\caption{hopper-random}
	\end{subfigure}
	\begin{subfigure}{0.24\textwidth}
		\includegraphics[width=\textwidth]{figures/hopper-medium-v2.png}
		\caption{hopper-medium}
	\end{subfigure}
	\begin{subfigure}{0.24\textwidth}
		\includegraphics[width=\textwidth]{figures/hopper-medium-replay-v2.png}
		\caption{hopper-medium-replay}
	\end{subfigure}
	\begin{subfigure}{0.24\textwidth}
		\includegraphics[width=\textwidth]{figures/hopper-medium-expert-v2.png}
		\caption{hopper-medium-expert}
	\end{subfigure}

	\begin{subfigure}{0.24\textwidth}
		\includegraphics[width=\textwidth]{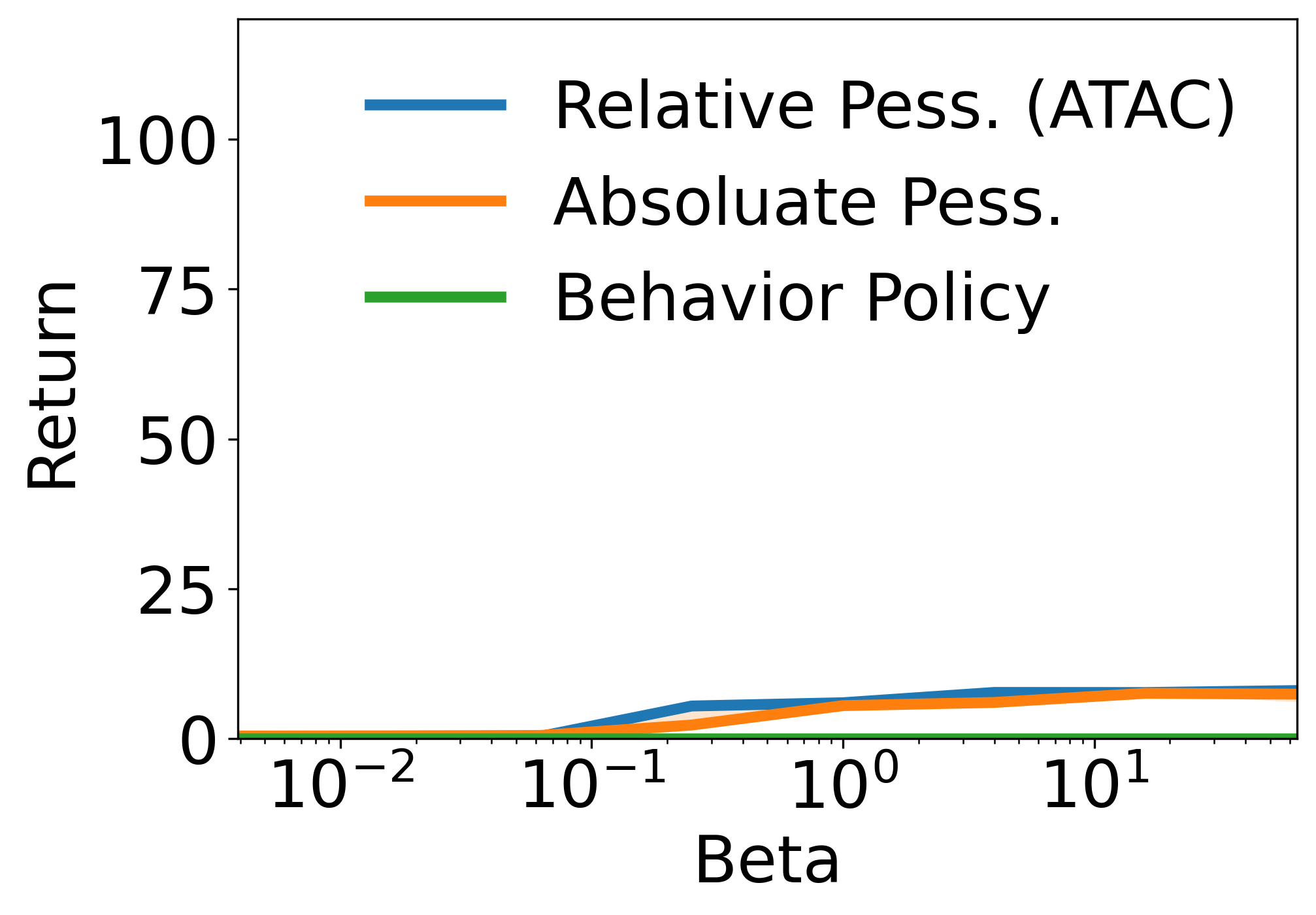}
		\caption{walker2d-random}
		\label{fig:walker2d-random robust PI}
	\end{subfigure}
	\begin{subfigure}{0.24\textwidth}
		\includegraphics[width=\textwidth]{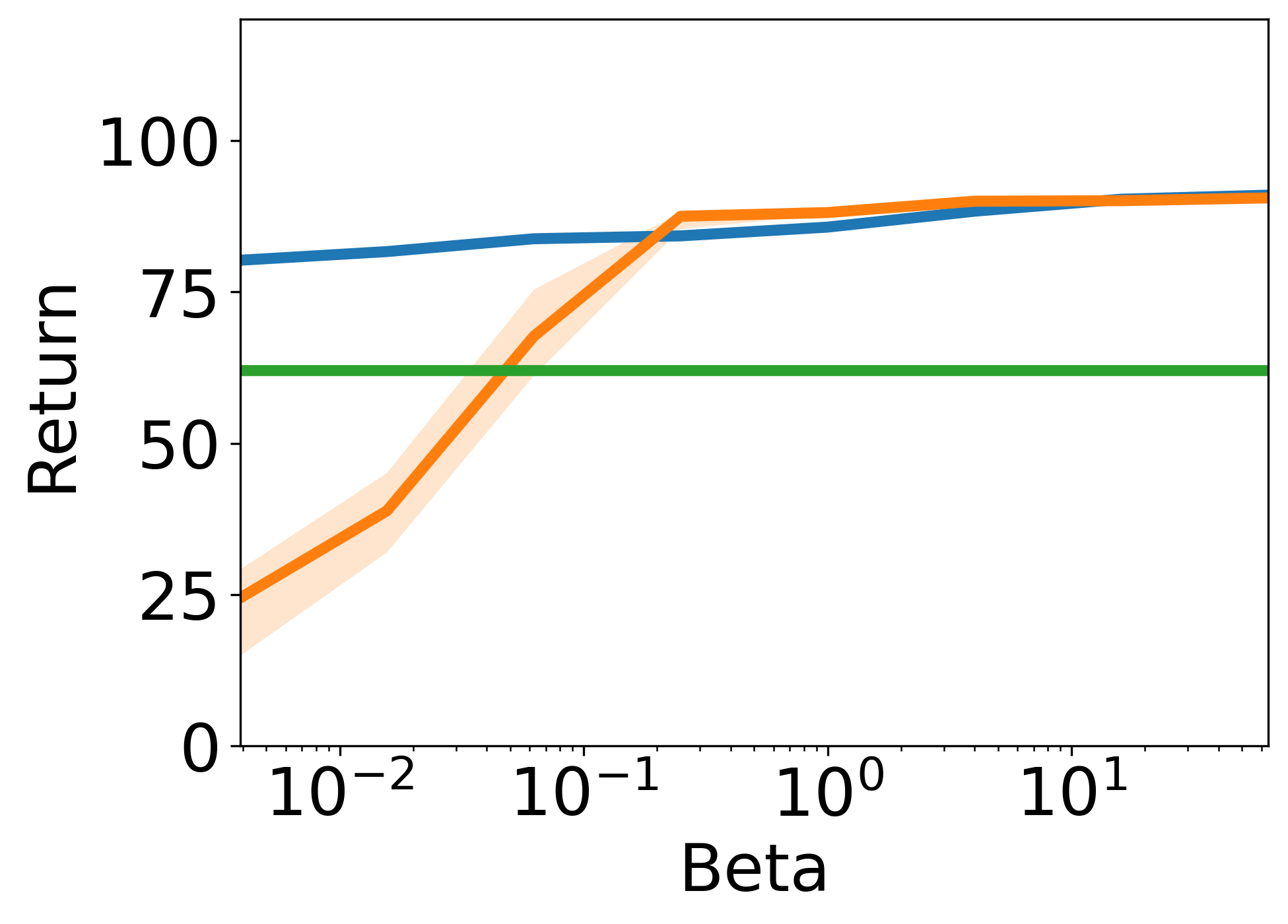}
		\caption{walker2d-medium}
		\label{fig:walker2d-medium robust PI}
	\end{subfigure}
	\begin{subfigure}{0.24\textwidth}
		\includegraphics[width=\textwidth]{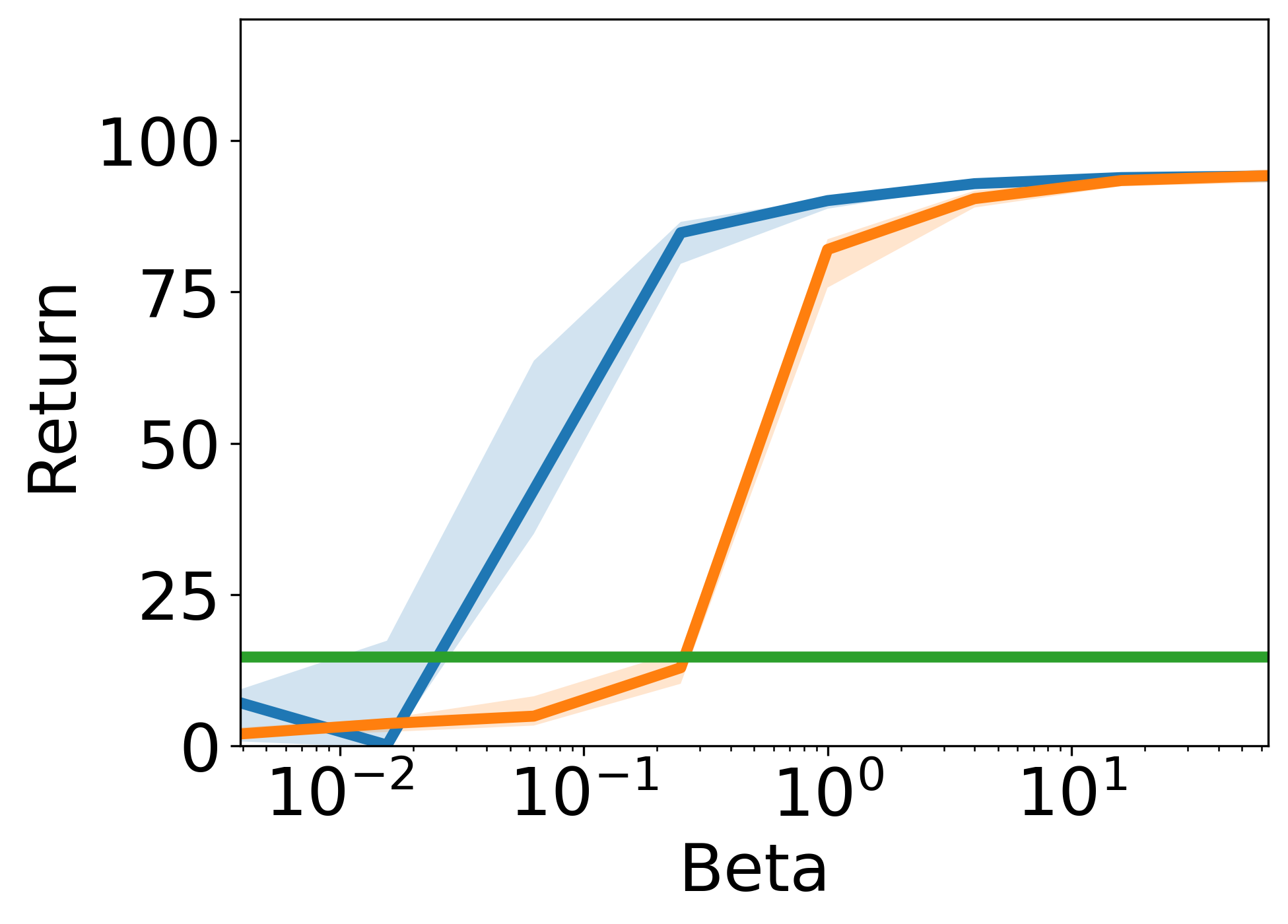}
		\caption{walker2d-medium-replay}
		\label{fig:walker2d-medium-replay robust PI}
	\end{subfigure}
	\begin{subfigure}{0.24\textwidth}
		\includegraphics[width=\textwidth]{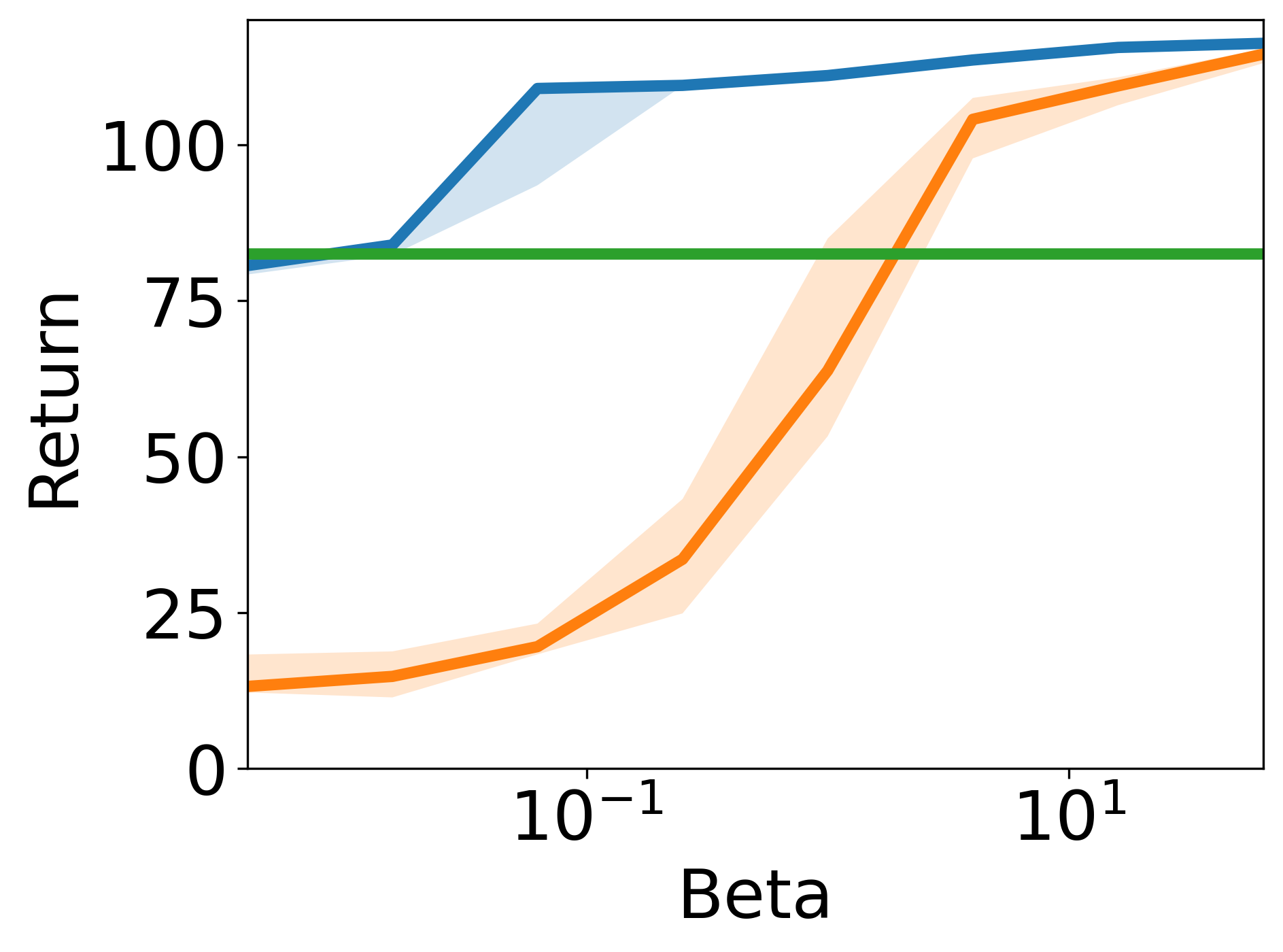}
		\caption{walker2d-medium-expert}
		\label{fig:walker2d-medium-expert robust PI}
	\end{subfigure}

\caption{\small{Robust Policy Improvement of \algo in the Mujoco domains. \algo based on \emph{relative} pessimism improves from behavior policies over a wide range of hyperparameters that controls the degree of pessimism. On the contrary, \emph{absolute} pessimism does not have this property and needs well-tuned hyperparameters to ensure safe policy improvement.
The plots show the $25^{th}$, $50^{th}$, $75^{th}$ percentiles over 10 random seeds.
}}
\label{fig:robust PI (mujoco)}
\end{figure*}

\begin{figure*}[t!]
	\centering
	\begin{subfigure}{0.24\textwidth}
		\includegraphics[width=\textwidth]{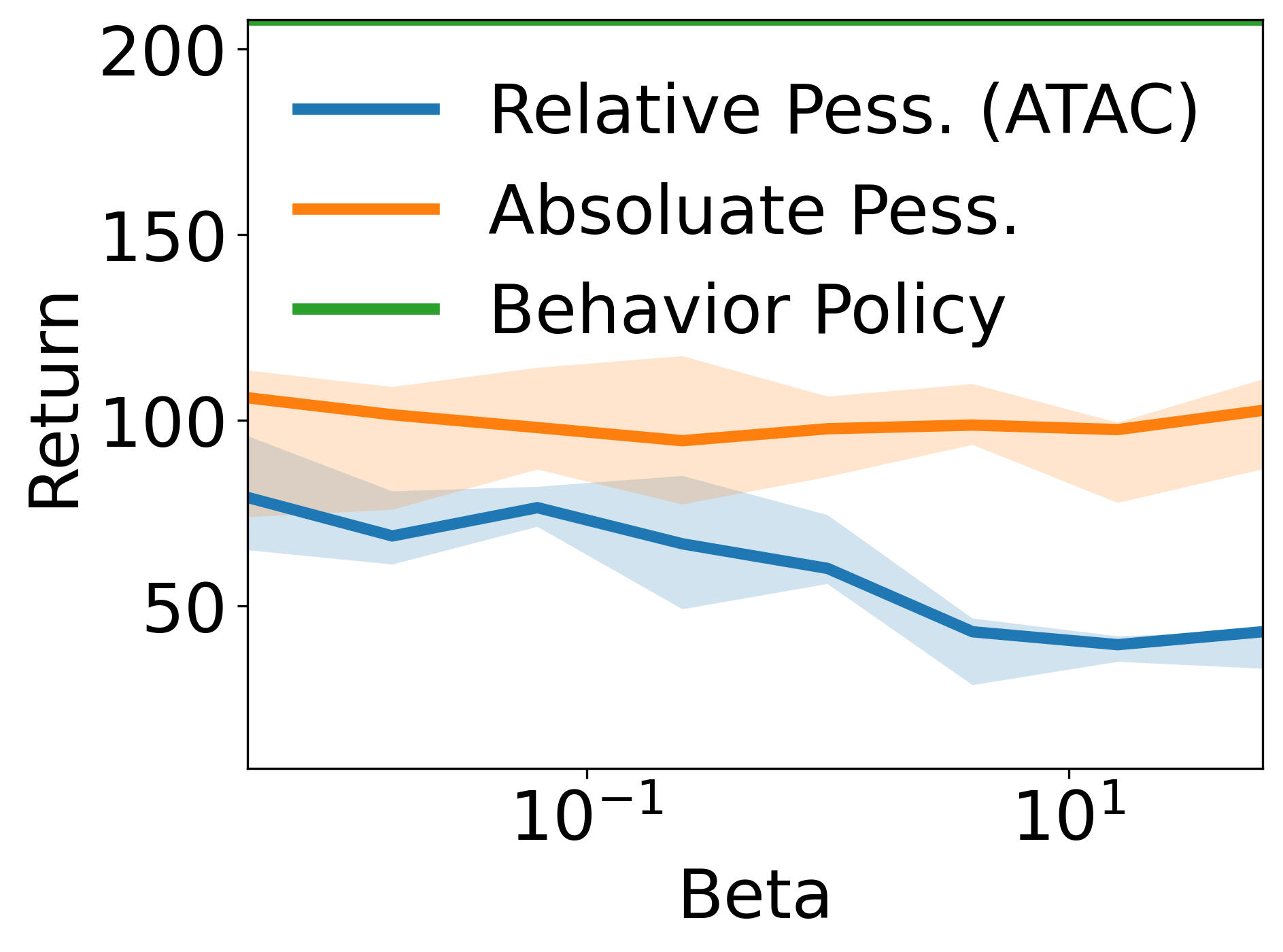}
		\caption{pen-human}
		\label{fig:pen-random robust PI}
	\end{subfigure}
	\begin{subfigure}{0.24\textwidth}
		\includegraphics[width=\textwidth]{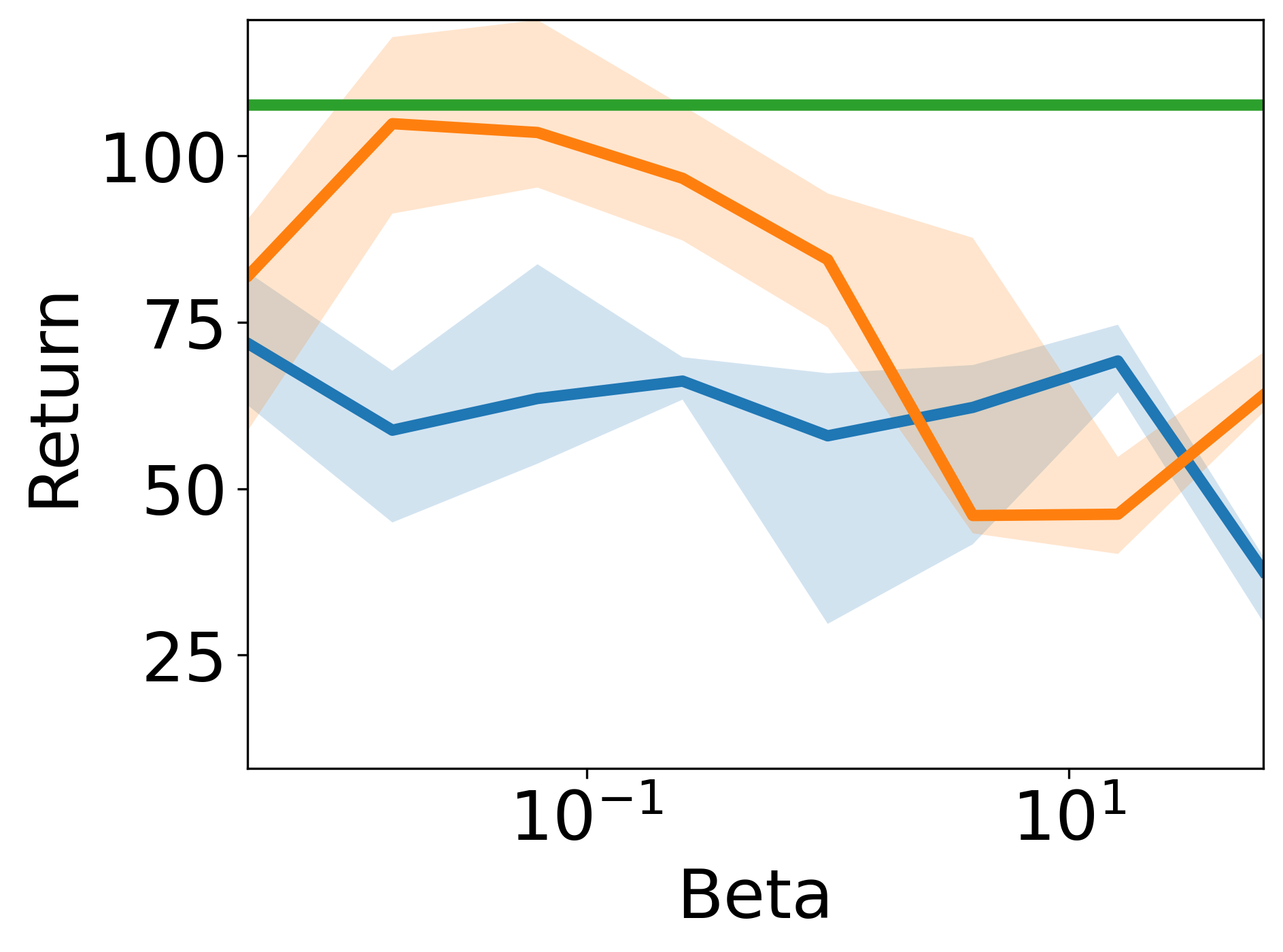}
		\caption{pen-cloned}
		\label{fig:pen-medium robust PI}
	\end{subfigure}
	\begin{subfigure}{0.24\textwidth}
		\includegraphics[width=\textwidth]{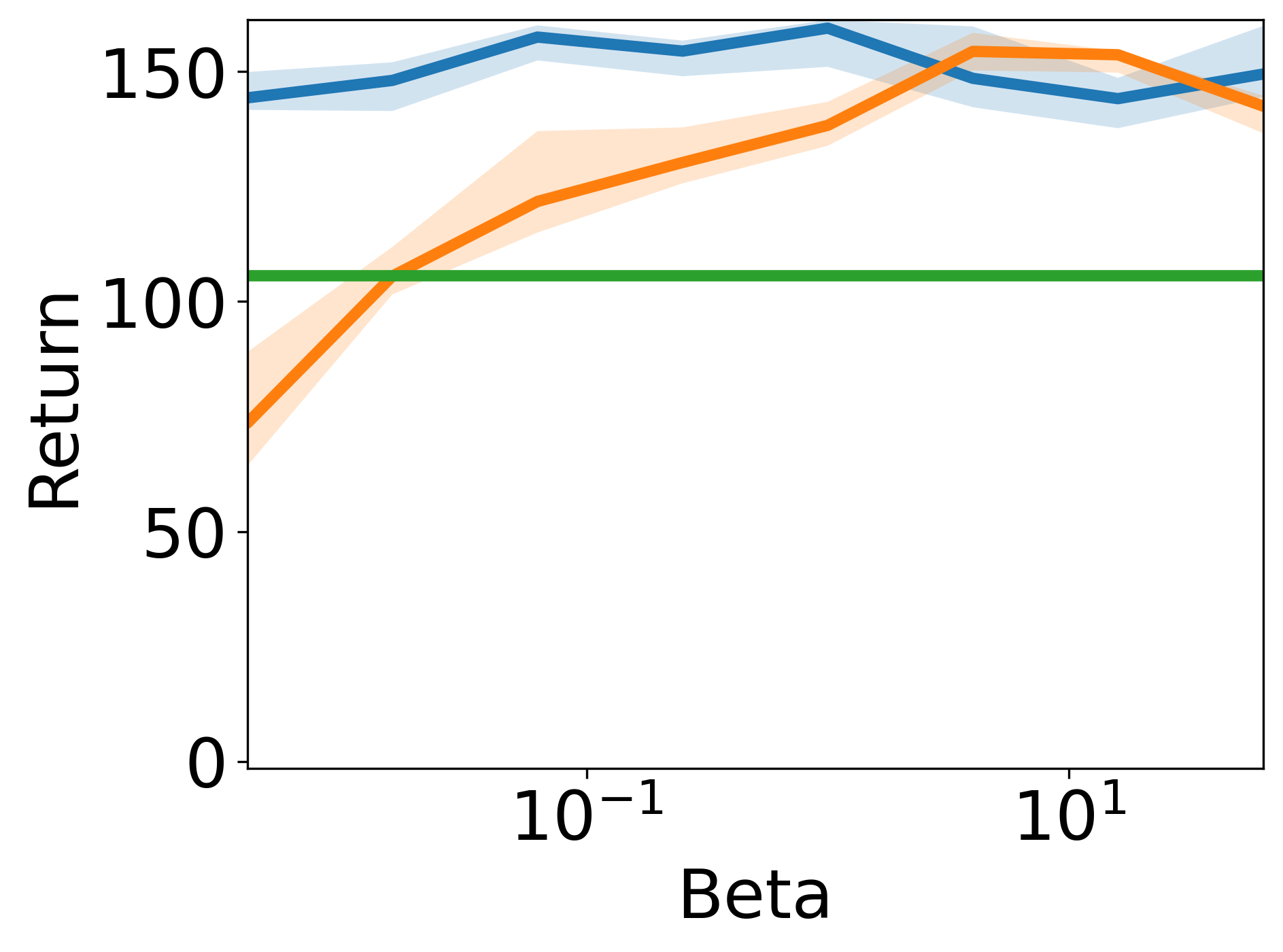}
		\caption{pen-exp}
		\label{fig:pen-medium-replay robust PI}
	\end{subfigure}
	\\
	\begin{subfigure}{0.24\textwidth}
		\includegraphics[width=\textwidth]{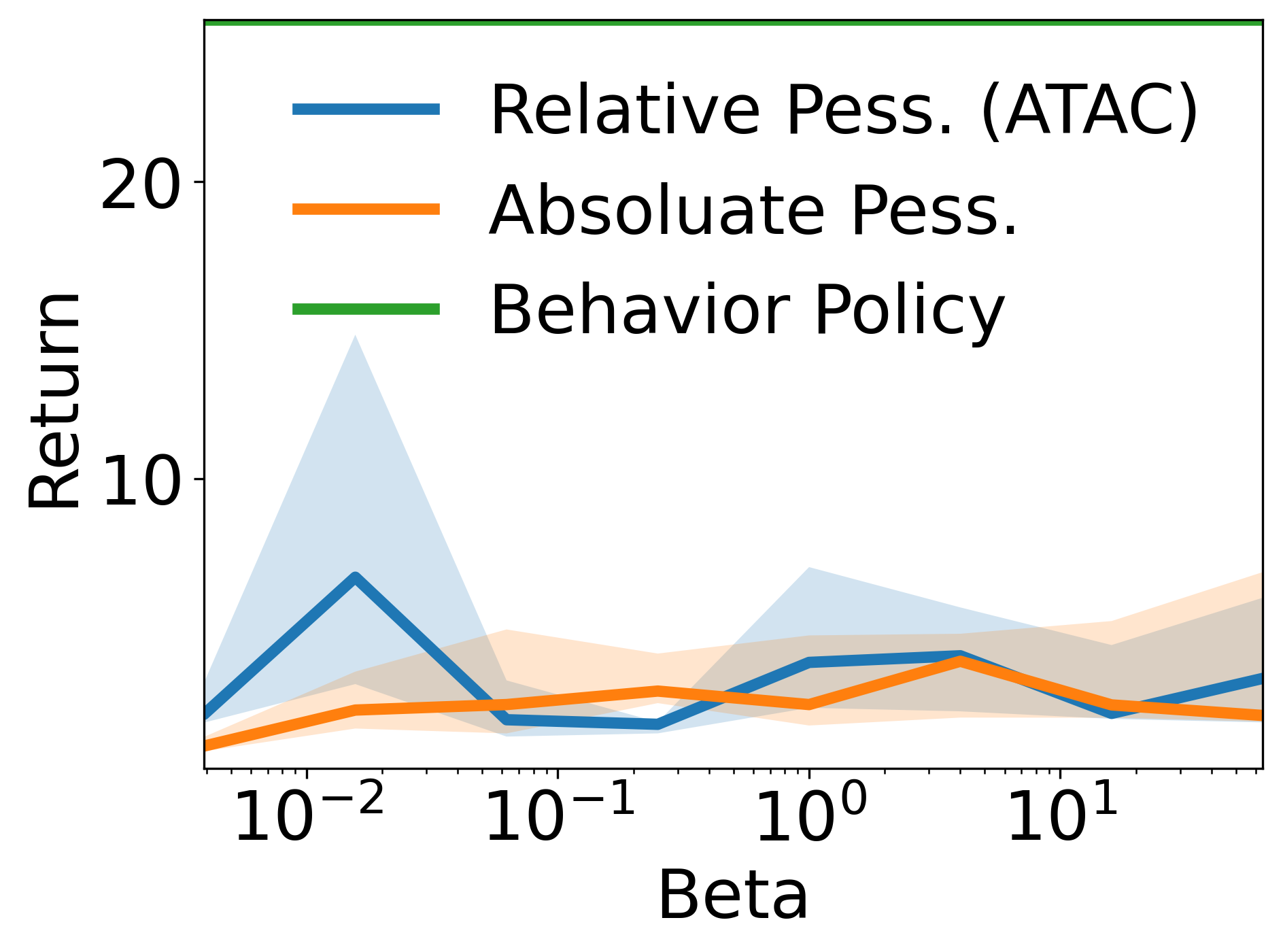}
		\caption{hammer-human}
		\label{fig:hammer-random robust PI}
	\end{subfigure}
	\begin{subfigure}{0.24\textwidth}
		\includegraphics[width=\textwidth]{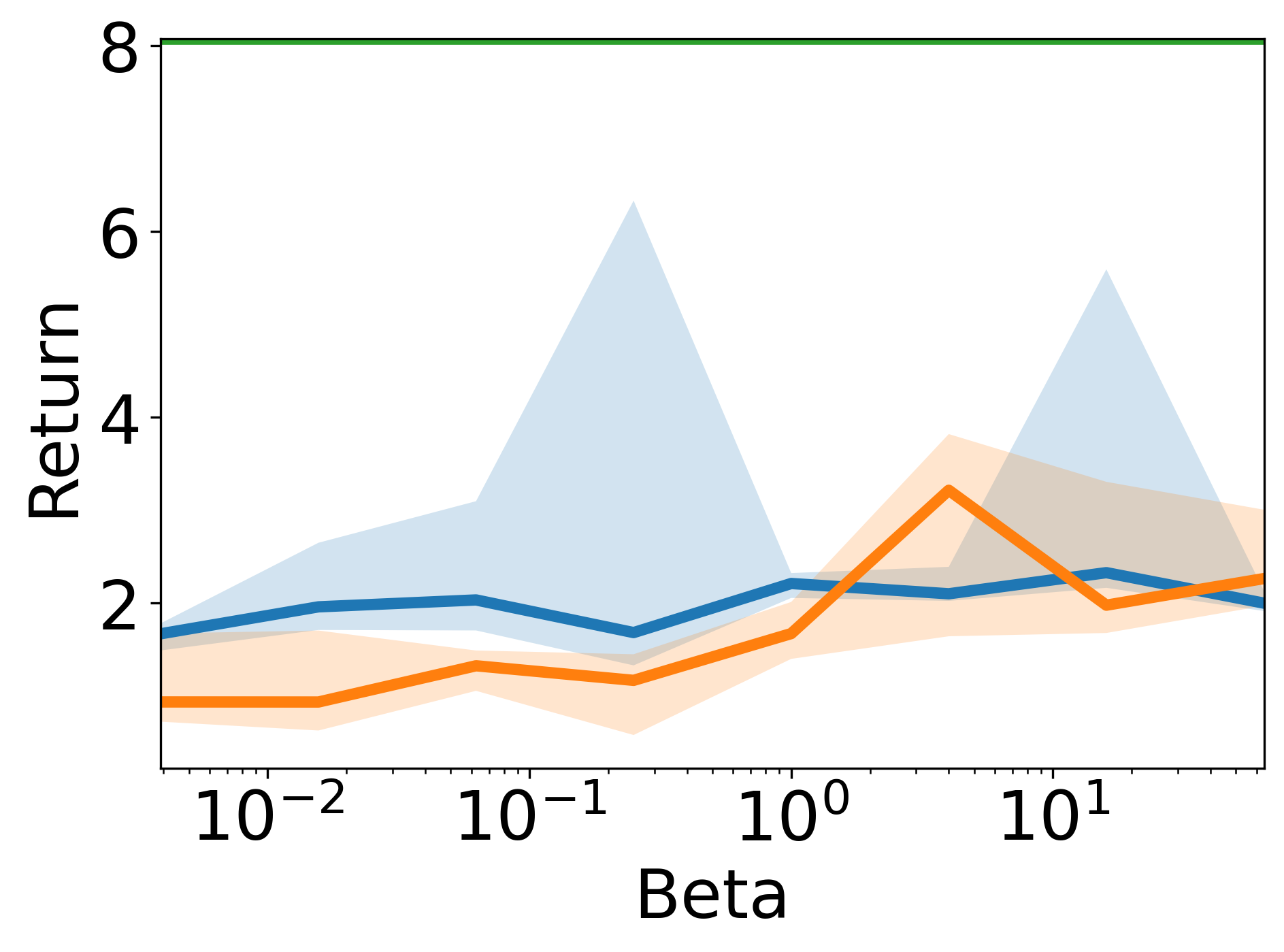}
		\caption{hammer-cloned}
		\label{fig:hammer-medium robust PI}
	\end{subfigure}
	\begin{subfigure}{0.24\textwidth}
		\includegraphics[width=\textwidth]{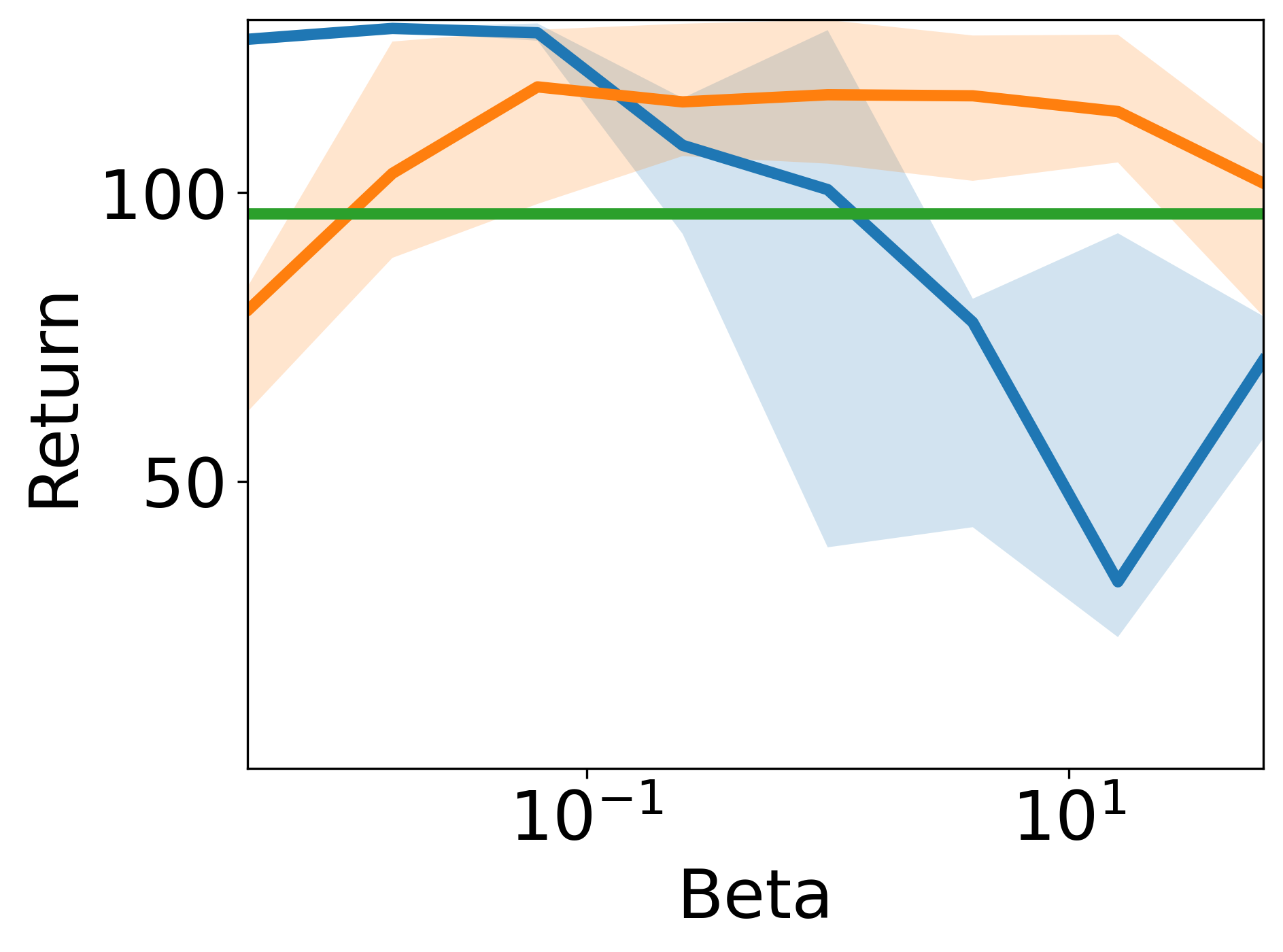}
		\caption{hammer-exp}
		\label{fig:hammer-medium-replay robust PI}
	\end{subfigure}
	\\
	\begin{subfigure}{0.24\textwidth}
		\includegraphics[width=\textwidth]{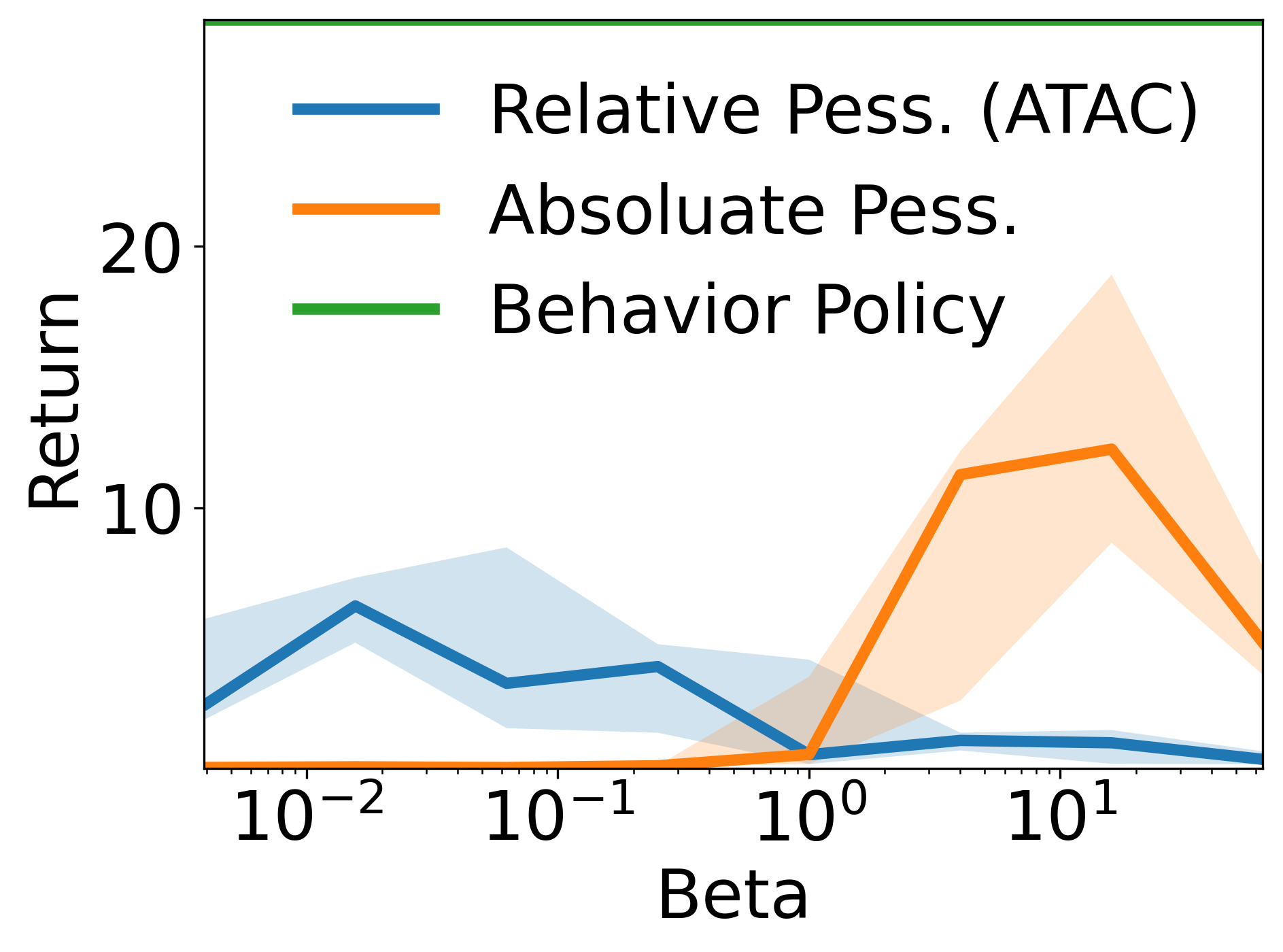}
		\caption{door-human}
		\label{fig:door-random robust PI}
	\end{subfigure}
	\begin{subfigure}{0.24\textwidth}
		\includegraphics[width=\textwidth]{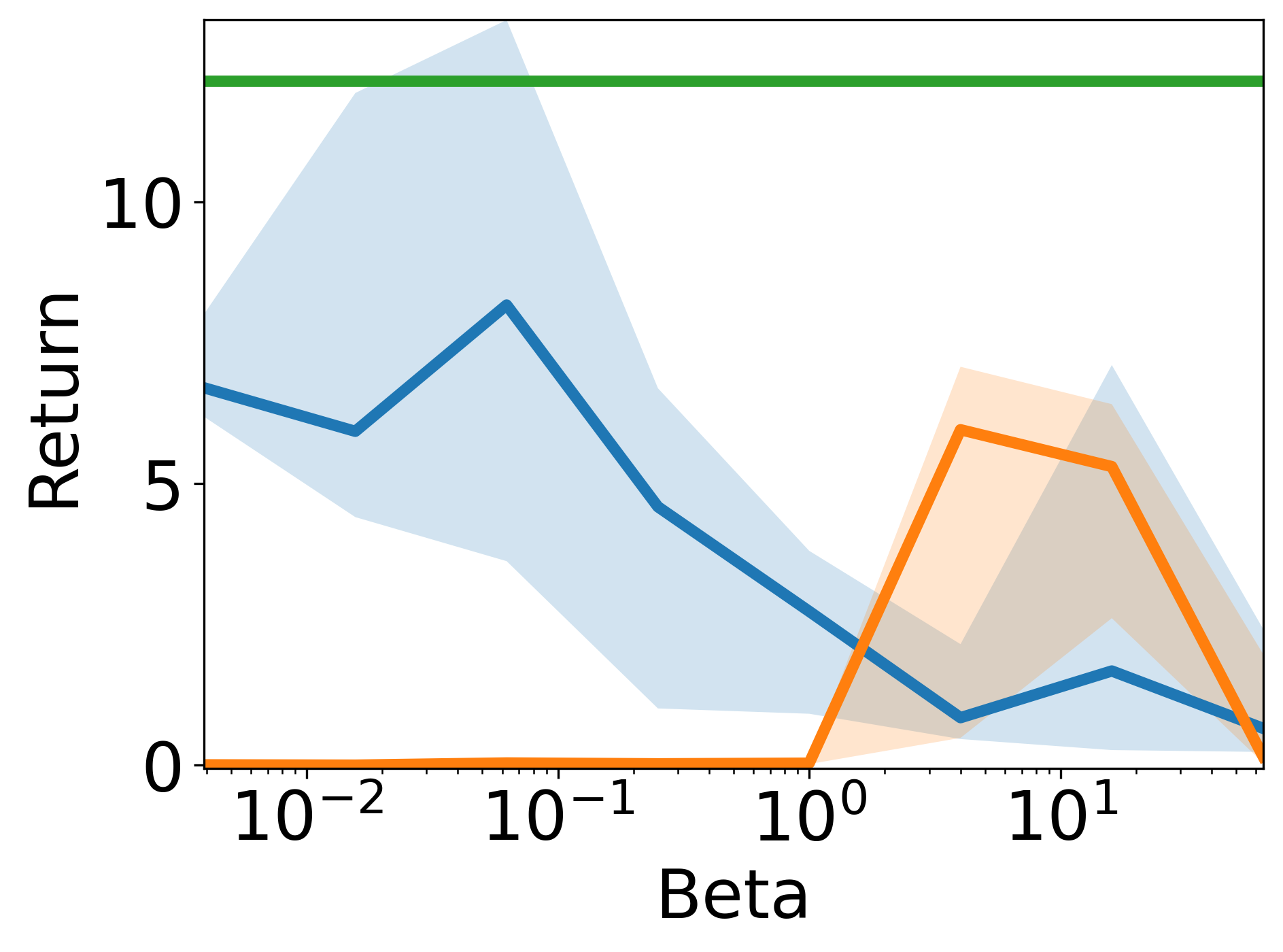}
		\caption{door-cloned}
		\label{fig:door-medium robust PI}
	\end{subfigure}
	\begin{subfigure}{0.24\textwidth}
		\includegraphics[width=\textwidth]{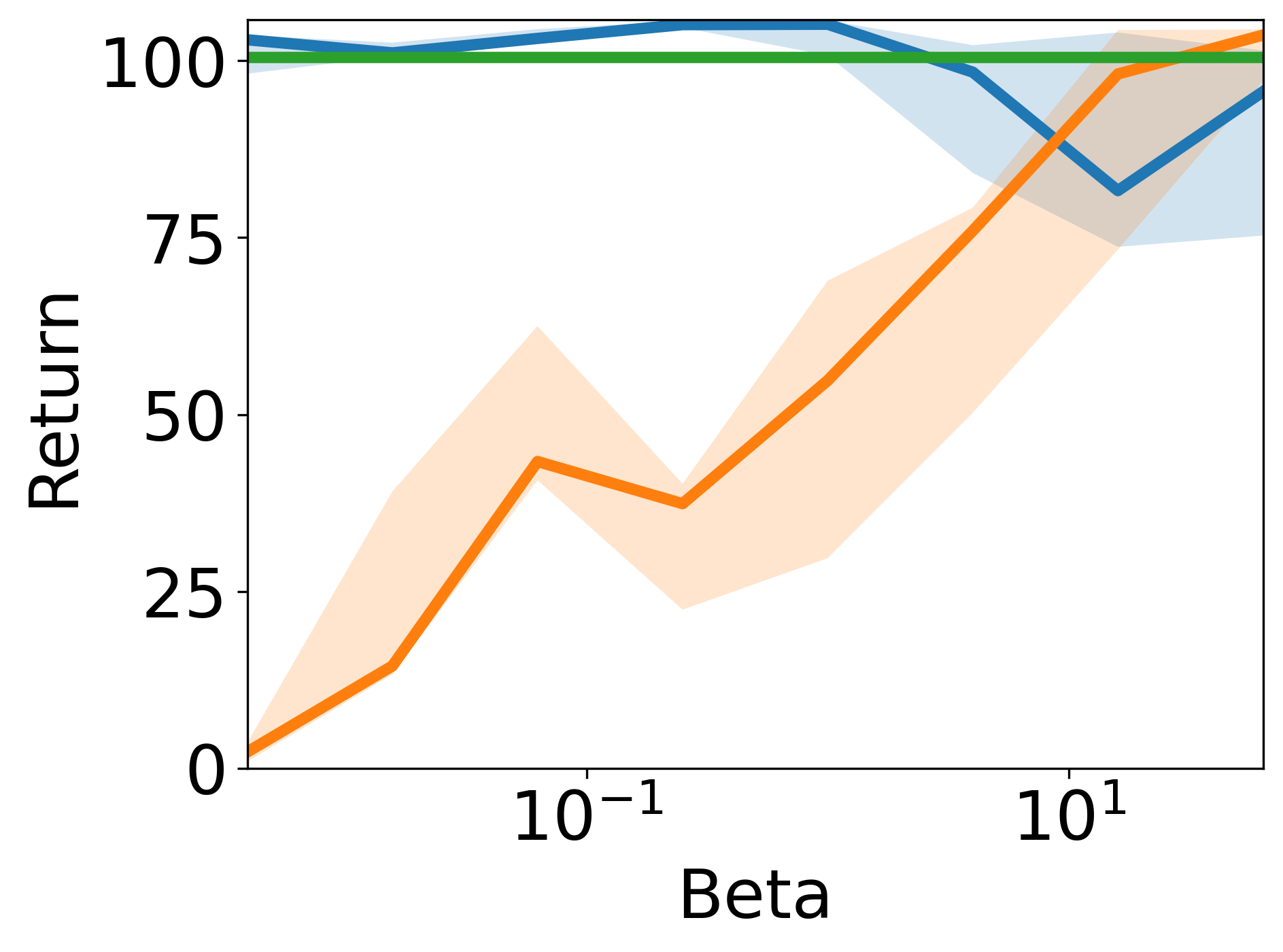}
		\caption{door-exp}
		\label{fig:door-medium-replay robust PI}
	\end{subfigure}
	\\
	\begin{subfigure}{0.24\textwidth}
		\includegraphics[width=\textwidth]{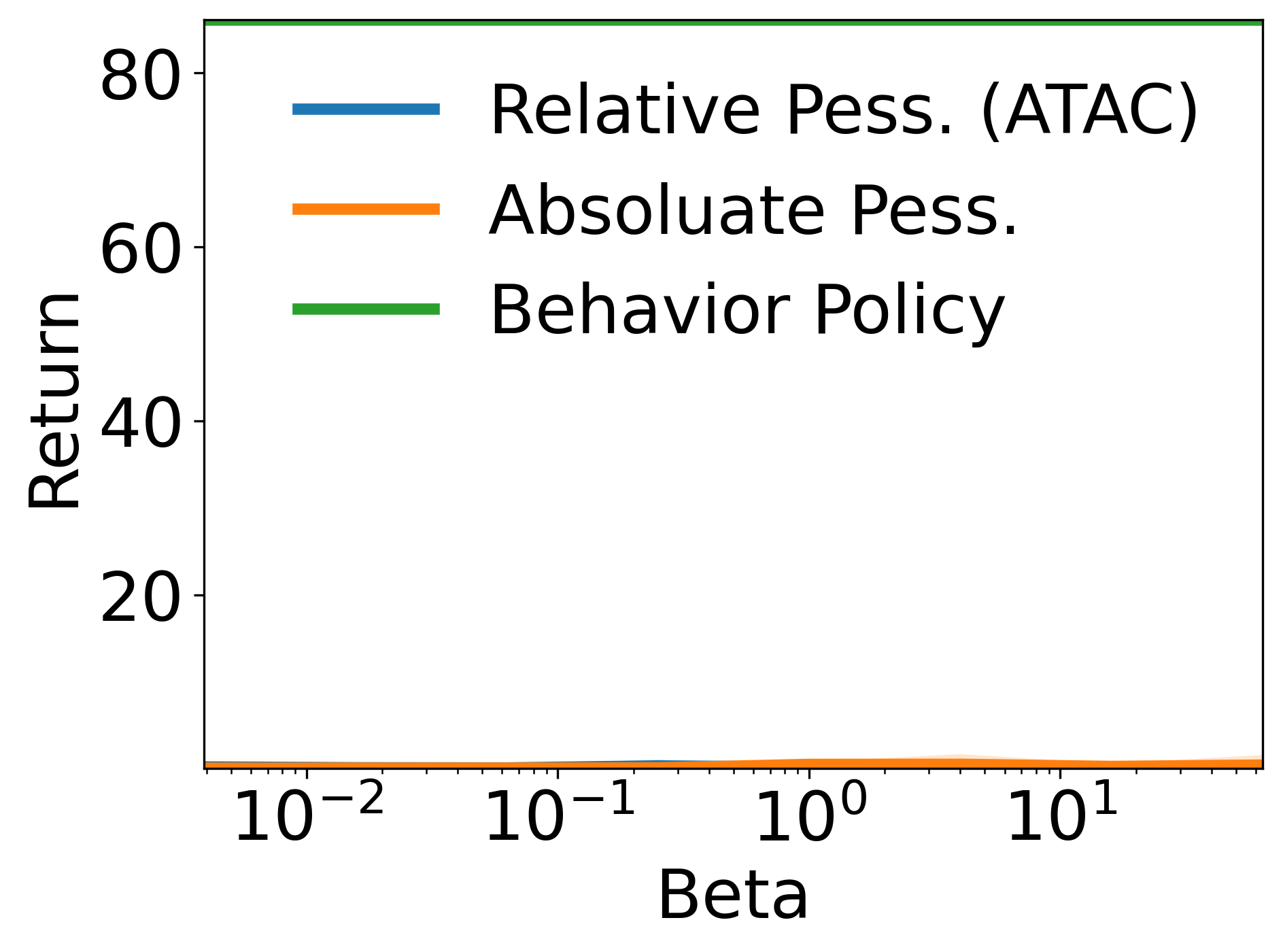}
		\caption{relocate-human}
		\label{fig:relocate-random robust PI}
	\end{subfigure}
	\begin{subfigure}{0.24\textwidth}
		\includegraphics[width=\textwidth]{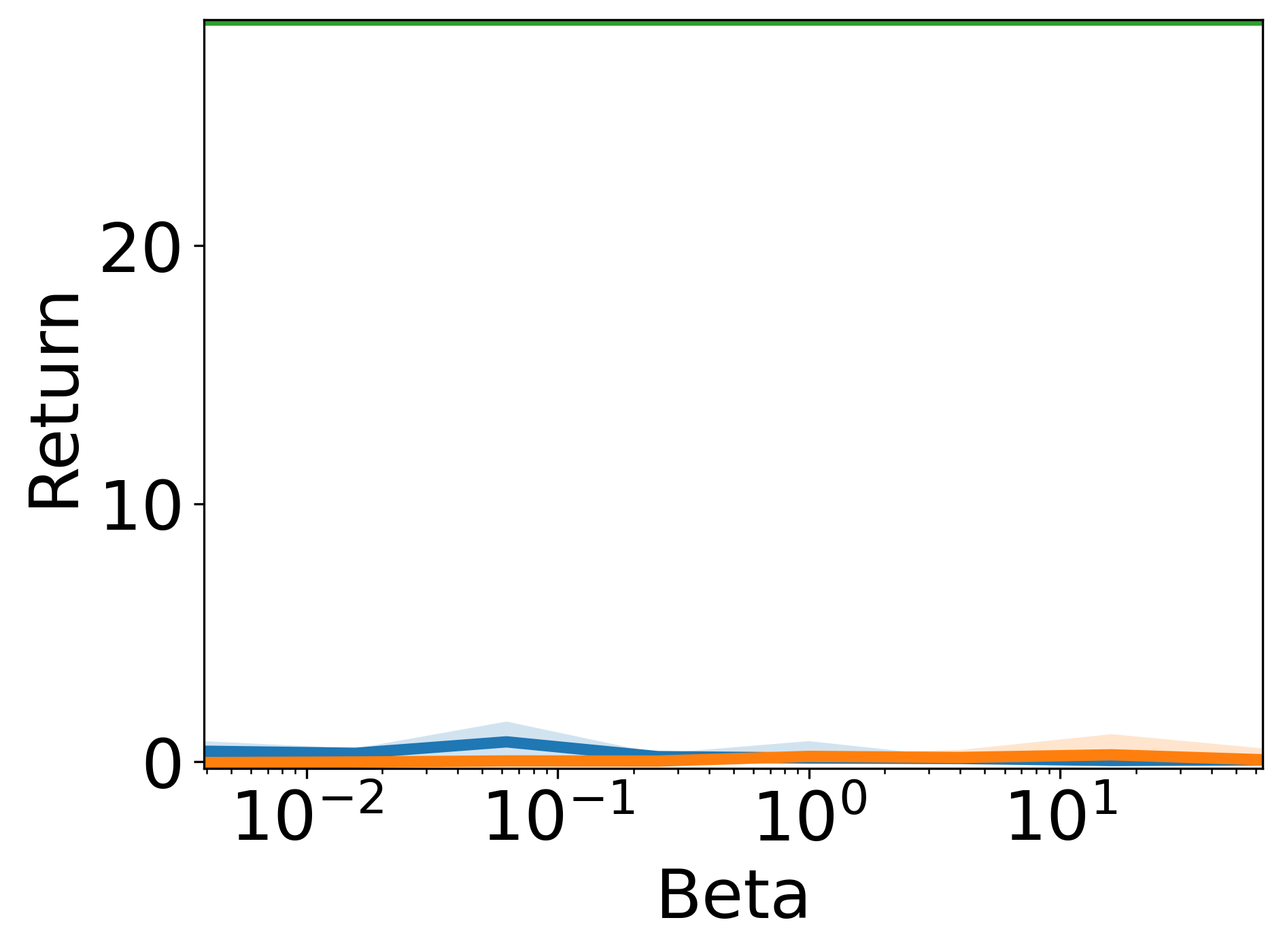}
		\caption{relocate-cloned}
		\label{fig:relocate-medium robust PI}
	\end{subfigure}
	\begin{subfigure}{0.24\textwidth}
		\includegraphics[width=\textwidth]{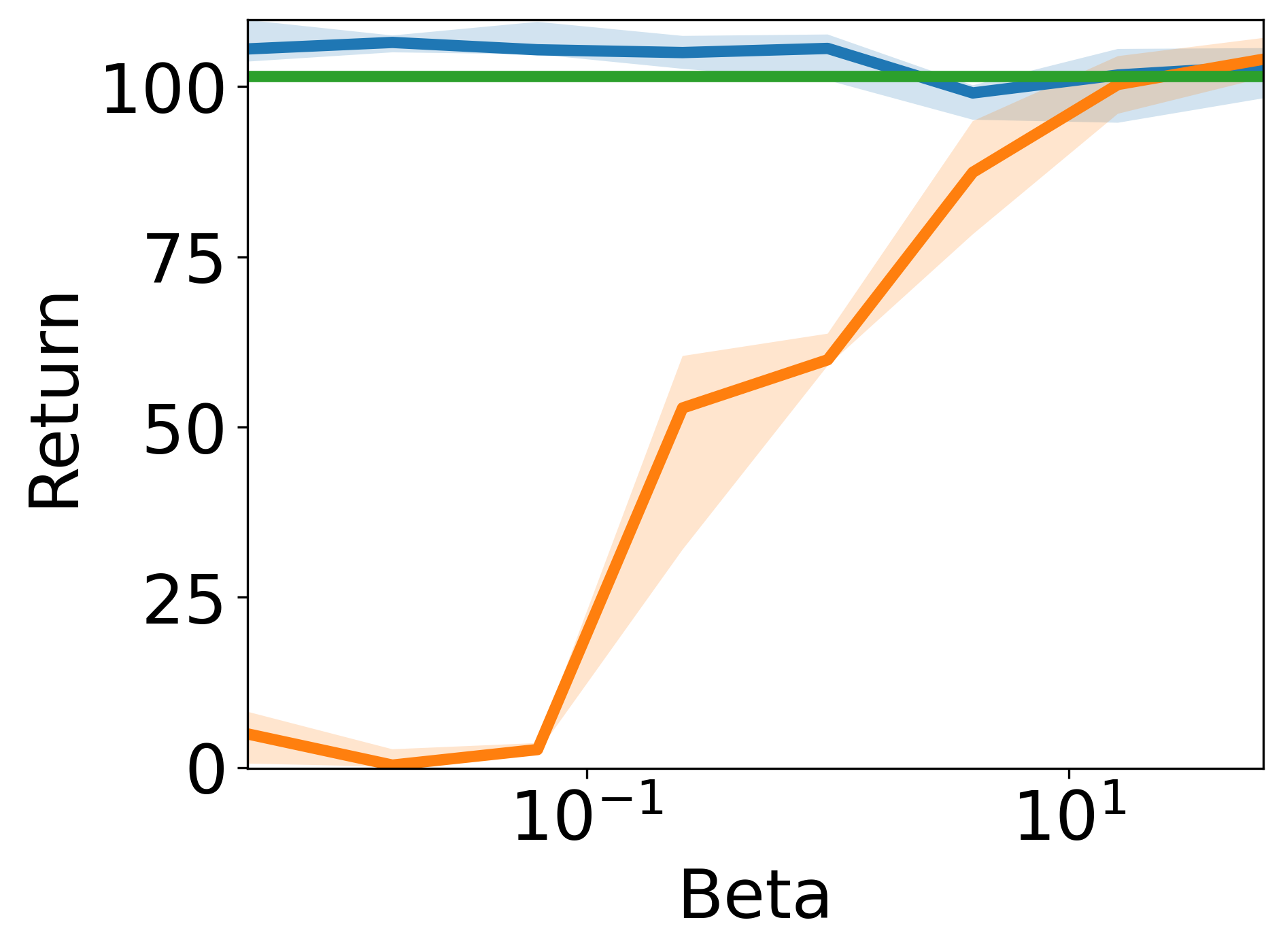}
		\caption{relocate-exp}
		\label{fig:relocate-medium-replay robust PI}
	\end{subfigure}

\caption{\small{Robust Policy Improvement of \algo in the Adroit domains. \algo based on \emph{relative} pessimism improves from behavior policies over a wide range of hyperparameters that controls the degree of pessimism for the *-exp datasets. On the contrary, \emph{absolute} pessimism does not have this property and needs well-tuned hyperparameters to ensure safe policy improvement.
For *-human and *-cloned datasets, robust policy improvement is not observed empirically, likely because human demonstrators cannot be modeled by Markovian Gaussian policies (i.e. $\mu \notin \Pi$).
The plots show the $25^{th}$, $50^{th}$, $75^{th}$ percentiles over 10 random seeds.
}}
\label{fig:robust PI (adroit)}
\end{figure*}

\clearpage




\section{Comparison between ATAC and CQL} \label{sec:comparison with CQL}

We compare ATAC with CQL~\citep{kumar2020conservative} in details, since they share a similar pessimistic policy evaluation procedure. In a high level, there are several major differences at the conceptual level:

\begin{enumerate}
\item \textbf{(Conceptual Algorithm)} ATAC describes an explicit solution concept, whereas CQL does not have a clear objective but is described as an iterative procedure. Since the convergence property and fixed point of CQL is unclear for general setups, we cannot always compare ATAC and CQL.
\item \textbf{(Maximin vs Minimax)} ATAC decouples the policy and the critic, whereas CQL aims to derive the policy from a critic. Specifically, ATAC uses a maximin formulation that finds policies performing well even for the worst case critic,  whereas CQL uses a minimax formulation that finds the optimal policy for the worst case critic. In general, maximin and minimax leadto different policies.

\item \textbf{(Robust Policy Improvement)} Because of the difference between maximin and minimax in the second point, ATAC recovers behavior cloning when the Bellman term is turned off but CQL doesn’t. This property is crucial to establishing the robust policy improvement property of ATAC.

\end{enumerate}

ATAC and CQL also differ noticeably in the implementation design. ATAC uses the novel DQRA loss, projections, and two-timescale update; on the other hand, CQL adds an inner policy maximization, uses standard double-Q bootstrapping, and more similar step sizes for the critic and the actor.

Given such differences in both abstract theoretical reasoning and practical implementations, ATAC and CQL are two fundamentally different approaches to general offline RL, though it is likely there are special cases where the two produce the same policy (e.g. bandit problems with linear policies and critics). 

Below we discuss the core differences between the two algorithms in more details.

\subsection{Conceptual Algorithm}
First we compare the two algorithms at the conceptual level, ignoring the finite-sample error. 
\textbf{ATAC has a clear objective and an accompanying iterative algorithm to find approximate solutions, whereas CQL is described directly as an iterative algorithm whose fixed point property is not established in general.}

Specifically, recall that ATAC aims to find the solution to the Stackelberg
\begin{align}
  \widehat{\pi}^\star  & \in\argmax_{\pi\in\Pi}   \E_{\mu}[ f(s,\pi) - f(s,a)] \nonumber \\
  \label{eq:atac_pe}
 \textstyle    \textrm{s.t.} \quad f^\pi  & \in\argmin_{f\in\Fcal}  \E_{\mu}[ f(s,\pi) - f(s,a)]  + \beta \E_{\mu}[ ((f - \Tcal^\pi f) (s,a))^2 ]  
\end{align}
and we show that an approximate solution to the above can be found by a no-regret reduction in \cref{alg:atac (theory)}.

On the other hand, CQL (specifically CQL ($\Rcal$) in Eq.(3) of \citep{kumar2020conservative}) performs the update below\footnote{Assume the data is collected by the behavior policy $\mu$.}
\begin{align} \label{eq:CQL rule}
    f_{k+1} \gets \argmin_{f\in\Fcal} {\color{red}{\max_{\pi\in\Pi}} }\alpha \E_{\mu} [ f(s,\pi) - f(s,a)] - \Rcal(\pi) + \E_{\mu}[ ((f - \Tcal^{\color{red}{\pi_k}} f_k) (s,a))^2 ]
\end{align}
\citet{kumar2020conservative} propose this iterative procedure as an approximation of a pessimistic policy iteration scheme, which alternates between pessimistic policy evaluation and policy improvement with respect to the pessimistic critic:
\begin{quote}
\emph{We could alternate between performing full off-policy evaluation for each policy iterate, $\pi^k$, and one
step of policy improvement. However, this can be computationally expensive. 
Alternatively, since the policy $\pi^k$ is typically derived from the Q-function, we could instead choose $\mu(a|s)$ to approximate the policy that would maximize the current Q-function iterate, thus giving rise to an online algorithm.} \citep{kumar2020conservative}.
\end{quote}
Note $\mu$ in the quote above corresponds to $\pi$ in the inner maximization in \eqref{eq:CQL rule}.
When presenting this conceptual update rule, \citet{kumar2020conservative} however do not specify exactly how $\pi_k$ is updated but only provide properties on the policy $\exp(f_k(s,a)/Z(s))$. Thus, below we will suppose CQL aims to find policies of quality similar to  $\exp(f_k(s,a)/Z(s))$.

\subsection{Maximin vs. Minimax}

Although it is unclear what the fixed point of CQL is in  general, we still can see ATAC and CQL aim to find very different policies.
\textbf{ATAC decouples the policy and the critic to find a robust policy, whereas CQL aims to derive the policy from a critic function.}. This observation is reflected below.
\begin{enumerate}
    \item ATAC is based on a maximin formulation, whereas CQL is based on minimax formulation.
    \item ATAC updates policies by a no regret routine, where each policy is slow updated and determined by all the critics generated in the past iterations, whereas CQL is more akin to a policy iteration algorithm, where each policy is derived by a single critic.
\end{enumerate}
    
We can see this difference concretely, if we specialize the two algorithms to bandit problems. In this case, CQL is no longer iterative and has a clear objective. 
Specifically, if we let $\alpha = \frac{1}{\beta}$, the two special cases can be written as 
\begin{align*}
    &\widehat{\pi}^\star   \in\argmax_{\pi\in\Pi} \min_{f\in\Fcal}   \E_{\mu}[ f(s,\pi) - f(s,a)]
    + \beta \E_{\mu}[ ((f - r) (s,a))^2 ] & \text{(ATAC)} \\
  & \widehat{f}^\star \gets \argmin_{f\in\Fcal} \max_{\pi\in\Pi} \E_{\mu} [ f(s,\pi) - f(s,a)] + \beta \E_{\mu}[ ((f - r) (s,a))^2 ]  - \beta \Rcal(\pi)  &\text{(CQL)} 
\end{align*}
If we further ignore the extra regularization term  $\Rcal(\pi)$ (as that can often be absorbed into the policy class), then the main difference between the two approaches, in terms of solution concepts, is clearly the order of max and min. It is well known maximin and minimax gives different solutions in general, unless when the objective is convex-concave (with respect to the policy and critic parameterizations). For example, in this bandit special case, suppose the states and actions are tabular; the objective is convex-concave when $\Pi$ and $\Fcal$ contains \emph{all} tabular functions, but convex-concave objective is lost when $\Fcal$ contains a finite set of functions.
In the latter scenario, CQL and ATAC would give very different policies, and CQL would not enjoy the nice properties of ATAC.

\subsection{Robust Policy Improvement}

We now illustrate concretely how the difference between ATAC and CQL  affects the robust policy improvement property. For simplicity, we only discuss in population level.  

By \cref{prop:RPI} and \cref{prop:safe_pi}, we know $\widehat\pi^\star$, the learned policy from ATAC, provably improves behavior policy $\mu$ under a wide range of $\beta$ choice of \Eqref{eq:atac_pe}, including $\beta = 0$. In other word, as long as $\mu\in\Pi$, ATAC has $J(\widehat\pi^\star) \geq J(\mu)$ even if $\beta = 0$ in \Eqref{eq:atac_pe}.

However, the following argument shows that: In CQL, if $\pi_f \in \Pi, \forall f \in \Fcal$ and $ \Fcal$ contains constant functions, then setting $\beta = 0$ cannot guarantee policy improvement over $\mu$, even when $\mu \in \Pi$, where $\pi_f$ denotes the greedy policy with respect to $f$.

Based on what's shown before, the corresponding CQL update rule with $\beta = 0$ can be written as
\begin{align*}
    f_{k + 1} \gets \argmin_{f\in\Fcal} \max_{\pi\in\Pi} \E_{\mu} [ f(s,\pi) - f(s,a)].
\end{align*}
We now prove that $f_{k+1}$ is constant across actions in every state on the support of $\mu$ for any $k$:
\begin{enumerate}
\item $\min_{f\in\Fcal} \max_{\pi\in\Pi} \E_{\mu} [ f(s,\pi) - f(s,a)] = 0$, by
\begin{align*}
&~ 0 = \left. \min_{f\in\Fcal} \E_{\mu} [ f(s,\pi) - f(s,a)] \right|_{\pi = \mu} \leq \min_{f\in\Fcal} \max_{\pi\in\Pi} \E_{\mu} [ f(s,\pi) - f(s,a)] \leq \left. \max_{\pi\in\Pi} \E_{\mu} [ f(s,\pi) - f(s,a)] \right|_{f \equiv 0} = 0.
\end{align*}
\item For any $f' \in \Fcal$, if there exists $(s_1,a_1) \in \Scal \times \Acal$ such that $\mu(s_1)>0$ and $f'(s_1,a_1) > \max_{a \in \Acal \setminus {a_1}} f'(s_1,a)$, then $\max_{\pi \in \Pi} \E_\mu[ f'(s,\pi) - f'(s,a)] \geq \E_\mu[ f'(s,\pi_{f'}) - f'(s,a)] \geq \mu(s_1) ( f'(s_1,a_1) - f'(s_1,\mu) ) \geq \mu(s_1) (1 - \mu(a_1|s_1)) (f'(s_1,a_1) - \max_{a \in \Acal \setminus {a_1}} f'(s_1,a)) > 0$. 
\item Combining the two bullets above, we obtain that $f_{k+1}$ for all $k$ must have $f_{k+1}(s_1,a_1) = f_{k+1}(s_1,a_2)$ for all $(s_1,a_1,a_2) \in \Scal \times \Acal \times \Acal$ such that $\mu(s_1)>0$, i.e., $f_{k+1}$ is constant across actions in every $s \in \Scal$ in the support of $\mu$.
\end{enumerate}
Therefore, for CQL with $\beta=0$, the policies are updated with per-state constant functions, leading to  arbitrary learned policies and failing to provide the safe policy improvement guarantee over the behavior policy $\mu$.



\end{document}